\documentclass{article}

\usepackage[preprint]{neurips_2026}

\usepackage[utf8]{inputenc} 
\usepackage[T1]{fontenc}    
\usepackage{hyperref}       

\usepackage{url}            
\usepackage{booktabs}       
\usepackage{amsfonts}       
\usepackage{nicefrac}       
\usepackage{microtype}      
\usepackage{xcolor}         

\usepackage{microtype}
\usepackage{graphicx}
\usepackage{subcaption}

\usepackage{amsmath}
\usepackage{amssymb}
\usepackage{mathtools}
\usepackage{amsthm}
\usepackage{comment}

\usepackage{float}

\theoremstyle{plain}
\newtheorem{theorem}{Theorem}[section]
\newtheorem{proposition}[theorem]{Proposition}
\newtheorem{lemma}[theorem]{Lemma}
\newtheorem{corollary}[theorem]{Corollary}
\theoremstyle{definition}
\newtheorem{definition}[theorem]{Definition}
\newtheorem{assumption}[theorem]{Assumption}
\theoremstyle{remark}

\usepackage{enumitem}
\usepackage{tcolorbox}

\newcommand{\ourname}{SGD-LL}
 
\title{Revisiting the Adam-SGD Gap in LLM Pre-Training: The Role of Large Effective Learning Rates}

\author{%
  Athanasios Glentis \qquad Dawei Li \qquad Chung-Yiu Yau \qquad Mingyi Hong\\ \\
   University of Minnesota\\
   \texttt{\{glent007,li004678,cyau,mhong\}@umn.edu}
}

\begin{document}

\maketitle

\begin{abstract}

It is widely believed that stochastic gradient descent (SGD) performs significantly worse than adaptive optimizers such as Adam in pre-training Large Language Models (LLMs). 
Yet the underlying reason for this gap remains unclear. In this work, we attribute a large part of the discrepancy to SGD's inability to sustain learning rates comparable to Adam's much larger effective learning rates. 
Through empirical and theoretical analysis of LLM pre-training dynamics, we identify that training is characterized by small gradient norms and large weight-to-gradient ratios, an effect that becomes more pronounced with larger batch sizes typical in pre-training, necessitating such large effective learning rates. 
However, we find that output-layer gradient magnitudes become highly uneven across token classes, and that large gradient spikes frequently occur during training. Together, these effects severely restrict the admissible learning rate of SGD.
Guided by this understanding, we show that simple clipping mechanisms that stabilize SGD at large learning rates enable it to recover most of Adam's performance. In our large-scale experiments, the validation loss gap between large-learning-rate SGD and Adam shrinks from more than 50\% to only about 3.5\% when pre-training a 1B-parameter LLaMA model with a 1M-token batch size. Code is available at this \href{https://github.com/OptimAI-Lab/large_lr_sgd}{\textcolor{blue}{link}}.

\end{abstract}

\begin{figure}[h]
    \centering
    \begin{subfigure}[t]{0.32\linewidth}
        \centering
        \includegraphics[width=\linewidth]{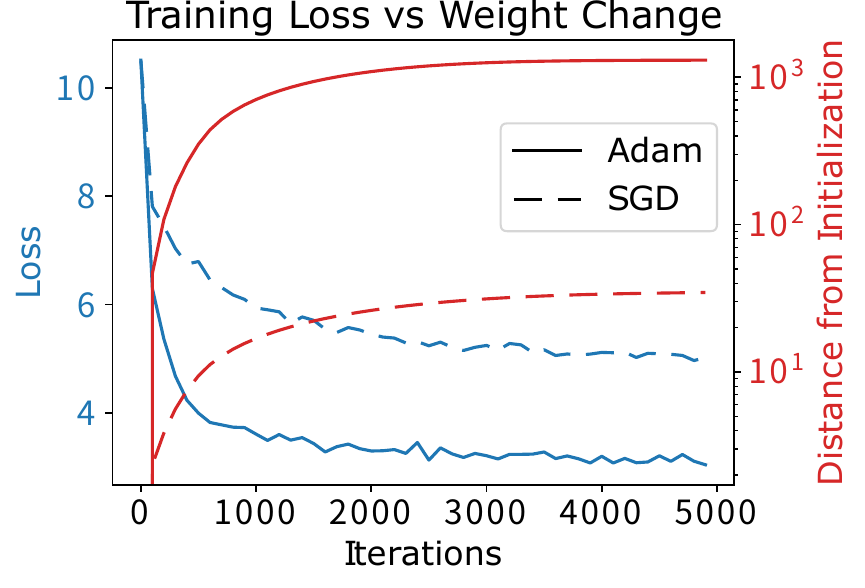}
        \caption*{(a)}
    \end{subfigure}
    \begin{subfigure}[t]{0.32\linewidth}
        \centering
        \includegraphics[width=\linewidth]{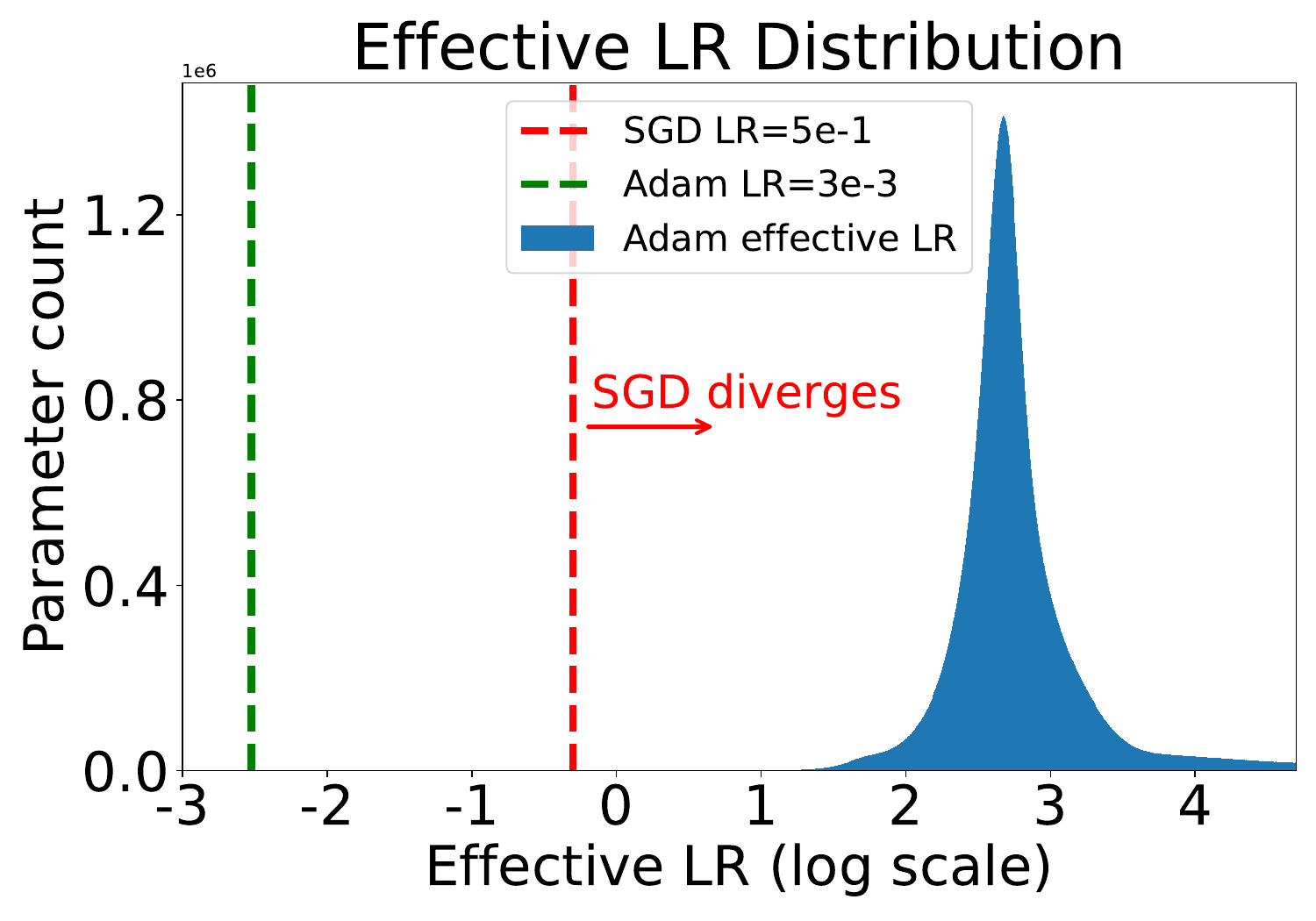}
        \caption*{(b)}
    \end{subfigure}
    \begin{subfigure}[t]{0.32\linewidth}
        \centering
        \includegraphics[width=\linewidth]{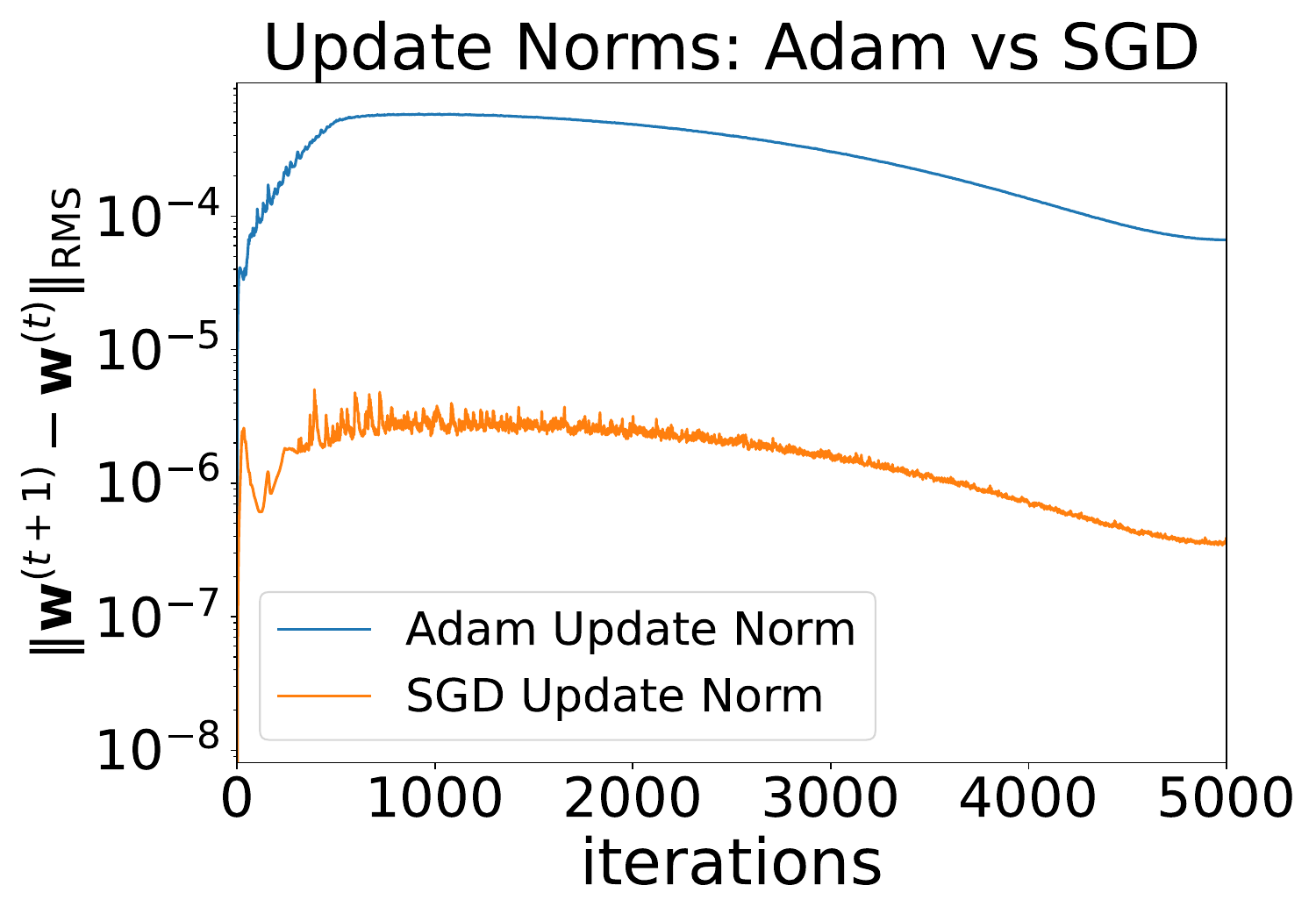}
        \caption*{(c)}
    \end{subfigure}
    \caption{
    (a) The pre-training dynamics of Adam and (momentum) SGD: (Left vertical axis) Training loss. (Right vertical axis) Distance from initialization $\| {\bf w}^{(t)} - {\bf w}^{(0)} \|_F$. The effective solution space of SGD is $50\times$ smaller than that of Adam.   %
    (b) Effective learning rates (LR) $a^{(t)}_i$ at iteration $t=500$. For SGD $a^{(t)}_i\equiv \eta_t$ for all parameters; for Adam $a^{(t)}_i=\eta_t/(\sqrt{v^{(t)}_{i}} + \varepsilon)$ for each $w_i^{(t)}$ (see Section \ref{sec:eff_lr} for details). The effective LR of Adam are 2-4 orders of magnitude larger than SGD's. Base LR 
    tuned to 3e-3 for Adam and 0.5 for SGD (dashed lines).  Note that increasing SGD's LR further leads to divergence.  
    (c) The update norm of Adam stays about 2 orders of magnitude larger than SGD's during training. 
   All figures are from training LLaMA 130M on C4 with a 0.5M token batch size. 
    \vspace{-10pt}
    }
    \label{fig:main_figs}
\end{figure}

\section{Introduction}

Adaptive optimization algorithms such as Adam \citep{kingma2015adam} have been empowering the optimization of large-scale LLM pre-training tasks. In contrast, the classical baseline algorithm (momentum) SGD \citep{robbin1951stochastic} has received significantly less attention in the literature of pre-training. For instance, SGD is no longer seen as a baseline in recent benchmarks of optimizers for pre-training \citep{wen2025fantastic, semenov2025benchmarking}. This points to the common belief that SGD is not suitable for the task of LLM pre-training.

Notably, the reported performance gap between SGD and Adam in standard pre-training settings is exceptionally large and consistently observed across studies \citep{pan2023toward, kunstner2024heavy, zhang2024transformers, marek2025small}. The same  gap is also reproduced in our Figure \ref{fig:main_figs} (a), where SGD results in over 60\% final loss increase relative to Adam. This indicates that SGD plateaus quickly in a high loss region, failing to drive the loss to acceptable levels. 

Taking this gap as given, prior works have focused considerable effort on explaining why SGD is ineffective for pre-training. Among others, \citet{kunstner2024heavy} studied the heavy-tailed class imbalanced data distribution in pre-training, observing that SGD struggles to progress on infrequent targets. \citet{pan2023toward} analyzed the empirical local geometry of the pre-training loss and suggested that SGD has ill-conditioned sharpness. \citet{zhang2024transformers} observed Hessian block heterogeneity to be coexisting with the poor performance of SGD. 

While these prior results are highly important, providing valuable insights into specific aspects of SGD in pre-training, the broader training dynamics and their relation to SGD's poor performance remain significantly less explored. Thus, in this work, rather than focusing on isolated behaviors of SGD, we examine closely its broader, high-level  ``symptoms'' tied to pre-training dynamics that lead to the considerable gap with Adam.  

In particular, we begin by observing that SGD, even after extensive learning rate tuning, only explores a small neighborhood around initialization, considerably smaller than Adam (as shown in Figure \ref{fig:main_figs} (a)). This phenomenon is not immediately apparent, as Adam typically uses a much smaller (base) learning rate than SGD. This motivates us to examine their \textit{effective} learning rates, i.e., the scaling each method applies to the momentum of the gradient, where we observe a striking difference: the effective learning rates of Adam are orders of magnitude larger than the learning rate of SGD (Figure \ref{fig:main_figs} (b)). Consequently, the overall update strength of the two methods differs substantially
(Figure \ref{fig:main_figs} (c)), 
suggesting that SGD’s smaller updates are insufficient to keep pace. We further argue that pre-training dynamics, characterized by small gradient norms and large weight-to-gradient ratios, necessitate such large effective learning rates, which we support theoretically and relate to the typical large batch size used in pre-training. 

To further investigate this gap, we next ask \textit{why} the learning rate in SGD cannot simply be increased further. We find that, at larger learning rates gradient irregularities become a key limiting factor for SGD due to its unconstrained update. Motivated by this observation, we validate our conjecture by ``treating'' these irregularities via simple, yet targeted gradient clipping, which enables us to substantially increase the SGD learning rate and close most of the gap to Adam. 

We emphasize that our perspective is fresh and complementary to prior work: while existing studies explain specific mechanisms underlying SGD’s behavior, our focus is on high-level training dynamics that account for a large part of the observed discrepancy, leaving finer-grained factors to explain the remaining gap.

We summarize the main contributions of our work as follows:  

\vspace{-10pt}

\begin{itemize}[leftmargin=10pt]
    \item We empirically study the effective learning rates of Adam compared to that of SGD in the setting of LLM pre-training. Even after tuning both algorithms' base learning rates, we find a substantial discrepancy, i.e., Adam’s effective learning rates are 2-4 orders of magnitude larger than SGD's. We further observe the consequence of that, being insufficient update norm for SGD and thus limited exploration of the solution space (see Figure \ref{fig:main_figs} for details). 
    \item 
    We identify that the need for high learning rates in LLM pre-training arises from the training dynamics, in particular the substantially small gradient norms observed. This results in a higher weight-to-stochastic-gradient (weight-to-SG) norm ratio than that observed in other training setups, e.g., convolutional network training \citep{you2017large}. We further observe that this effect intensifies with increasing the batch size.

    \item Theoretically, we provide an explanation for the small stochastic gradient norms observed in LLMs. In particular, by decomposing the per-token-class gradients under two complementary regimes (i.e., bias-variance decomposition \eqref{eq:tokenwise_grad_batch_sec_mainbody} and background-hit decomposition \eqref{eq:tokenwise_grad_rare_mainbody}) we derive a universal upper bound for the norm of the stochastic gradient (see Theorem \ref{thm:full_grad_mainbody_unified}). We further analyze the terms in the upper bound, showing that under assumptions in the training dynamics and token distribution, this bound decreases with batch size, consistent with our empirical observations. 

    \item We identify two types of gradient irregularities in pre-training that contribute to SGD's divergence at high learning rates. First, we observe frequent layer-wise gradient norm spikes especially at the early stage of training. Second, at the output layer of size ${d\times |V|}$ where $|V|$ is the number of the output tokens, we observe that token-class SG norms $\| {\bf g}_1 \|_2, ..., \| {\bf g}_V \|_2$ are highly imbalanced. 
    
    \item Finally, we validate our conjecture that the lack of large learning rate is a main limiting factor for SGD in pre-training. To achieve this we use two simple, yet targeted, types of  gradient clipping to ``treat'' the aforementioned gradient irregularities. To our knowledge, this is the first time SGD has been successfully used to pre-train LLMs with such large learning rates (100–300 in our experiments), substantially narrowing the gap to Adam. Specifically, in our largest experiment pre-training LLaMA 1B on C4 with SGD having learning rate 300 and a 1M-token batch size, the validation loss gap to Adam closes from 50\% to only about 3.5\%.  
\end{itemize}

\vspace{-5pt}

\section{Preliminaries and Related Works}

\vspace{-5pt}
 
The \textbf{LLM pre-training objective} can be expressed as:
\begin{equation}\label{eq:finite_sum}
    \min_{\mathbf{w}} \ell(\mathbf{w}):=\frac{1}{n}\sum_{i=1}^{n}\ell(\mathbf{w};\xi_i)
\end{equation}
where $\ell$ is the loss function typically defined as the cross entropy loss of predicting the next token, $\mathbf{w}$ are the trainable parameters of the model and $\xi_i$, $i=1,...,n$ are our training input sequences.

To solve \eqref{eq:finite_sum}, \textbf{SGD}, at every training iteration $t$, draws a batch of samples $\{\xi_{t,b}\}_{b=1,..,B}$ and performs an update toward the negative stochastic gradient direction:
\begin{equation}
    \mathbf{w}^{(t+1)} = \mathbf{w}^{(t)} - \eta_t \mathbf{g}^{(t)}, \quad {\rm with}\quad \mathbf{g}^{(t)}:=\frac{1}{B}\sum_{b=1}^{B}\nabla \ell(\mathbf{w}^{(t)};\xi_{t, b})
\end{equation}
Here $B$ is batch size and $\eta_t$ is the learning rate at iteration $t$.  A straightforward extension is momentum SGD (which we will simply refer to as SGD throughout), replacing the gradient with its momentum to smooth the updates:
\begin{equation}
    \mathbf{m}^{(t)} = \beta \mathbf{m}^{(t-1)} + (1 - \beta) \mathbf{g}^{(t)} \quad
    \mathbf{w}^{(t+1)} = \mathbf{w}^{(t)} - \eta_t \mathbf{m}^{(t)}.
\end{equation}

 In contrast, \textbf{Adam}~\citep{kingma2015adam} updates the parameters using a more sophisticated scheme as shown below ($\odot$ is element-wise product; bias correction term omitted for simplicity):

\begin{equation}\label{eq:adam_update}
\begin{array}{cc}
\begin{aligned}
&\mathbf{m}^{(t)} = \beta_1 \mathbf{m}^{(t-1)} + (1 - \beta_1)\mathbf{g}^{(t)}, \\
&\mathbf{v}^{(t)} = \beta_2 \mathbf{v}^{(t-1)} + (1 - \beta_2) \mathbf{g}^{(t)} \odot \mathbf{g}^{(t)}
\end{aligned}
&
\mathbf{w}^{(t+1)} =
\mathbf{w}^{(t)} -
\eta_t \frac{\mathbf{m}^{(t)}}{\sqrt{\mathbf{v}^{(t)}} + \epsilon}.
\end{array}
\end{equation}

Note that, unless explicitly stated otherwise, we use the standard hyperparameters $\beta=0.9$ for SGD and $(\beta_1=0.9, \beta_2=0.95)$ for Adam in all our experiments.

\textbf{Related works.} It was commonly observed that SGD is far behind adaptive optimizers such as Adam on typical LLM pre-training problems \citep{pan2023toward, kunstner2024heavy, zhang2024transformers, marek2025small,sreckovic2025your}, while such large gap is not typically observed in other training settings \citep{keskar2017improving,zhang2020adaptive}. 

One line of works tries to attribute such gap to the text data distribution. Notably, \citet{zhang2020adaptive} 
first argued that heavy-tailed stochastic gradient noise in Wikipedia corpus BERT pre-training causes SGD non-convergence, while well-concentrated stochastic gradient noise in Imagenet Resnet training does not. Their solution depends on an adaptive step size that is at the same complexity as Adam.
In a related work, \citet{kunstner2024heavy} argues that LLMs that are pre-trained on class-imbalanced data distributions, implying that heavy-tail rare tokens contribute to heavy-tail large stochastic gradient noise and SGD fails to capture the rare tokens' gradient that deviates from the momentum.

Another line of works focuses on the geometry of LLM optimization. \citet{pan2023toward} suggests that SGD has high directional sharpness as compared to Adam, which results to slow convergence. Meanwhile, \citet{zhang2024transformers} proposed that block heterogeneous Hessian prevents SGD from achieving good performance. 

More recently, \citet{sreckovic2025your} point out that the performance gap between SGD and Adam narrows in the very small batch size setting, debunking the prior explanations on this performance gap. Similarly, \citet{marek2025small} suggests that training with very small batch sizes (as low as batch size 1) displays minimal sensitivity to other training hyperparameters and choice of optimizer. Our insights will further support those previous works, since we will show that the pre-training phenomena which hinder SGD's performance become worse as batch size increases.

\textbf{Our approach.}  The main focus of this work is to study the high-level pre-training dynamics, including effective learning rates and weight to gradient norm ratios, combining empirical and theoretical insights. This analysis will suggest that SGD needs much larger base learning rate to make effective training progress, which can be enabled by ``treating'' gradient irregularities and close \textit{most} of the gap with Adam. Thus, we complement prior works which focus on  specific limitations of SGD in pre-training that combined may explain the remaining portion of the gap.

 \vspace{-5pt}

\section{LLM Training Dynamics and the Need for Large Learning Rate}

 \vspace{-5pt}

In this section, we explore the LLM training dynamics. In particular, we observe empirically that SGD has a much smaller ``effective'' learning rate compared to Adam. Then, both empirically and analytically, we provide justifications for the necessity for a large ``effective'' learning rate in LLM pre-training. The empirical observations here are from pre-training a LLaMA 130M model \citep{touvron2023llama} on C4 \citep{raffel2020exploring} with token batch size 0.5M (see Appendix \ref{apx:experimental_settings} for more details).

\vspace{-5pt}

\subsection{Effective Momentum Learning Rate}

\vspace{-5pt}

\label{sec:eff_lr}

To begin with, we justify the difference in the magnitude of update between different optimizers. To quantify the magnitude of update, we focus on the case of stochastic momentum algorithms and define {\bf effective momentum learning rate} as follows: 

\begin{definition} \label{def:eff_mom_ss}
    For stochastic momentum optimization algorithms under the following update rule for model parameters
    ${\bf w}^{(t)} = [w_1^{(t)}, ..., w_d^{(t)}] \in \mathbb{R}^d$ with arbitrary initialization ${\bf w}^{(0)}$, ${\bf a}^{(t)} = [a_1^{(t)}, ..., a_d^{(t)}] \ge {\bf 0}$, $\beta \in [0,1)$, ${\bf m}^{(0)} = \nabla f({\bf w}^{(0)}; \xi^{(0)})$: 
        \begin{subequations}
    \begin{align}
            {\bf w}^{(t+1)} &= {\bf w}^{(t)} - {\bf a}^{(t)} \odot {\bf m}^{(t)}, \\
            {\bf m}^{(t+1)} &= \beta {\bf m}^{(t)} + (1 - \beta) \nabla f\left({\bf w}^{(t+1)}; \xi^{(t+1)}\right),
    \end{align}
        \end{subequations}

\vspace{-.3cm}        
we define $a^{(t)}_i$ as the \textbf{effective momentum learning rate} for parameter $w_i^{(t)}$ (also referred as \textit{effective learning rate} for simplicity). 
\end{definition}
\vspace{-.15cm}

By Definition \ref{def:eff_mom_ss}, the effective learning rate of SGD is simply the learning rate $a^{(t)}_i\equiv \eta_t$ for all parameters $i \in \{1, ..., d\}$. On the other hand, by the Adam update \eqref{eq:adam_update}, the effective learning rate of Adam for $w_i^{(t)}$ is $a^{(t)}_i=\eta_t/(\sqrt{v^{(t)}_{i}} + \varepsilon)$. Note that $\eta_t$ follows the cosine learning rate schedule with warm-up and the base learning rate $\eta$ (i.e., the peak value of $\eta_t$) is tuned for both optimizers. 

We find that despite the small learning rate typically used for Adam (tuned to $3 \times 10^{-3}$ in our experiment), $a^{(t)}_i$ is significantly large on average as shown in Figure \ref{fig:eff_lr_and_update_norm} (a), ranging from around $10^2$ to $10^4$ throughout most of the training. In comparison, the same quantity for SGD peaks only to 0.5, while larger learning rates result to training divergence (similar to prior observations such as \citet{zhang2024transformers}).

\begin{figure}
    \centering
    \begin{subfigure}[t]{0.26\linewidth}
        \centering
        \includegraphics[width=\linewidth]{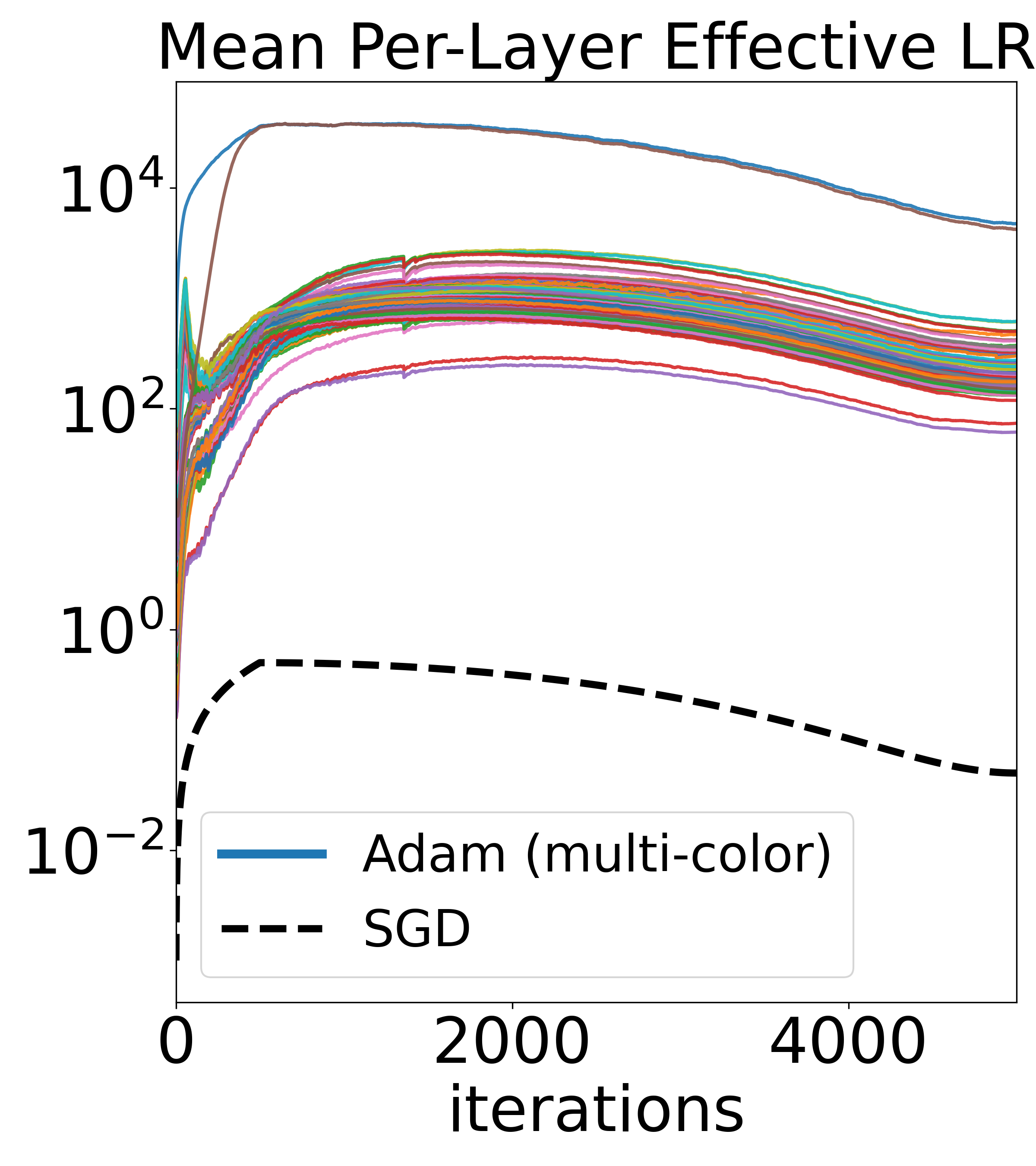}
        \caption*{(a)}
    \end{subfigure}
    \hfill
    \begin{subfigure}[t]{0.29\linewidth}
        \centering
        \includegraphics[width=\linewidth]{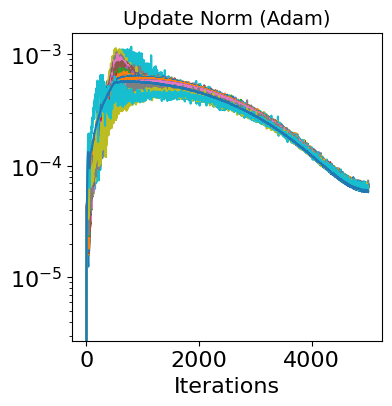}
        \caption*{(b)}
    \end{subfigure}
    \hfill
    \begin{subfigure}[t]{0.29\linewidth}
        \centering
        \includegraphics[width=\linewidth]{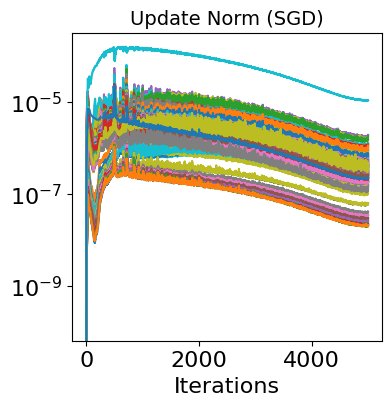}
        \caption*{(c)}
    \end{subfigure}
    \caption{(a) Mean effective learning rate per layer for Adam and SGD.  We use cosine scheduler with warmup and base learning rate tuned to 3e-3 for Adam and 0.5 for SGD. The Adam effective learning rates are 2-4 orders of magnitude larger than SGD's. (b-c) The dynamics of layer-wise update norm $\| \mathbf{w}^{(t+1)}_{\ell} - \mathbf{w}^{(t)}_{\ell} \|_{\rm RMS}$ during training. The update norm of SGD is over $100\times$ smaller than that of Adam. We use different colors to represented the averages for different layers.
    }
    \label{fig:eff_lr_and_update_norm}
    \vspace{-10pt}
\end{figure}

The small effective learning rate of SGD indicates that the update norm of SGD during training is also much smaller than Adam's. We observe such phenomenon as well in Figures \ref{fig:eff_lr_and_update_norm} (b-c). Such small update norm suggests that SGD has limited exploration behavior, as shown in Figure \ref{fig:main_figs} (a), restricting its ability to keep pace with Adam.

\subsection{Weight-gradient Norm Ratio}

\vspace{-5pt}

\begin{figure}[b]
    \centering
    \begin{subfigure}[t]{0.25\linewidth}
        \centering
        \includegraphics[width=\linewidth]{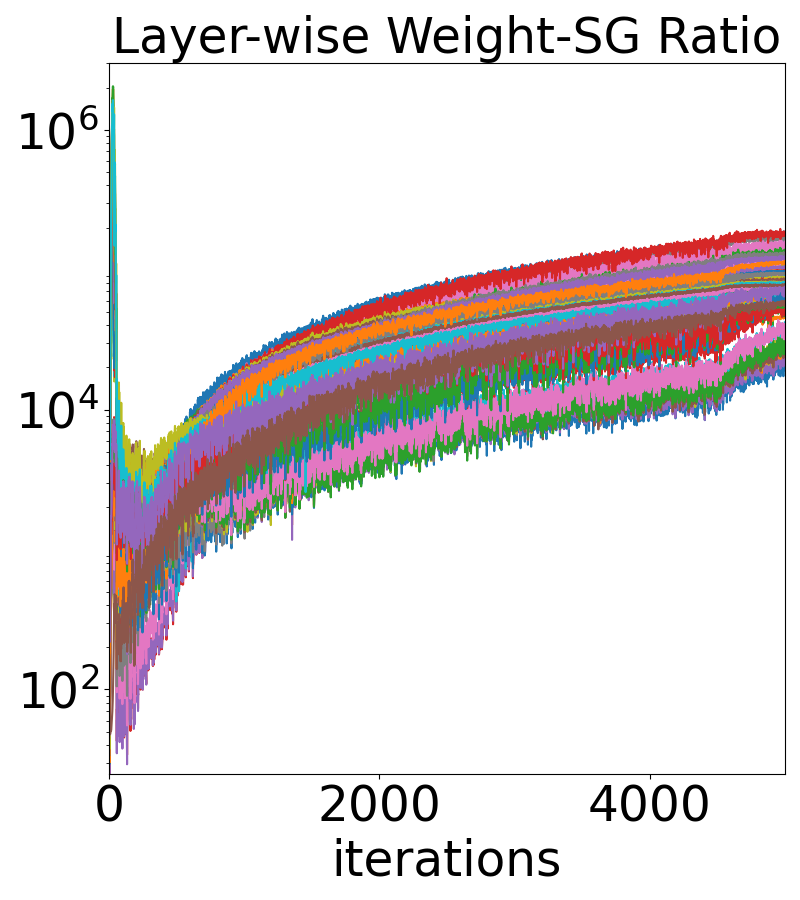}
        \caption*{(a)}
    \end{subfigure}
    \hspace{10pt}
    \begin{subfigure}[t]{0.28\linewidth}
        \centering
        \includegraphics[width=\linewidth]{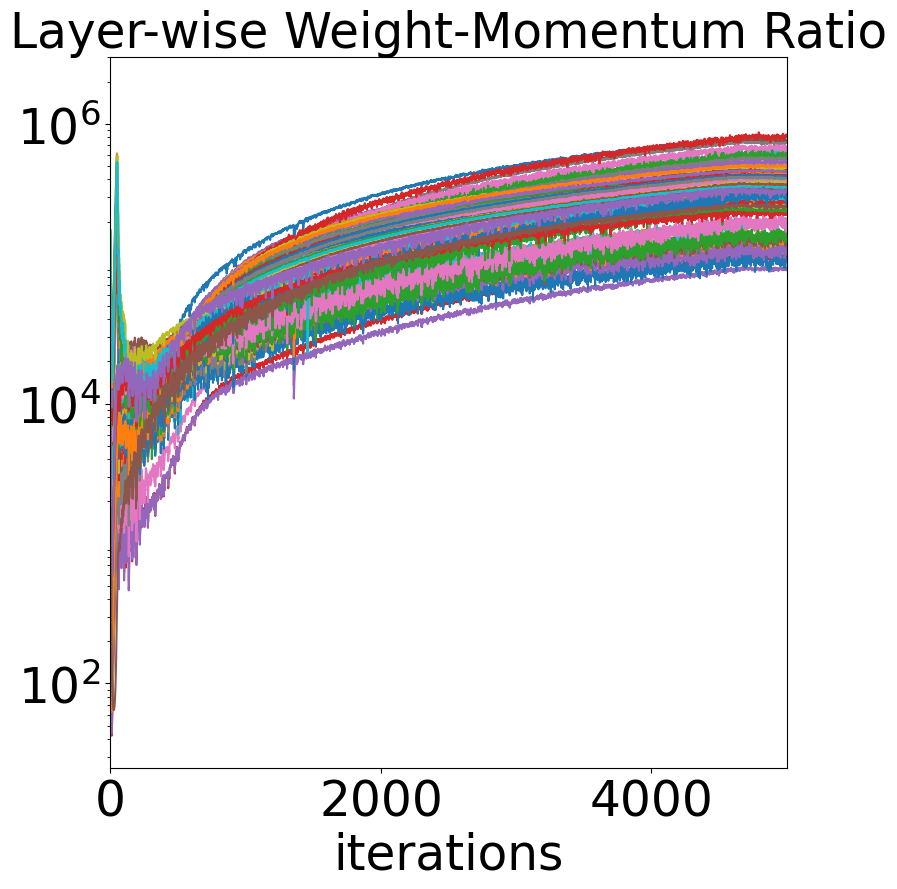}
        \caption*{(b)}
    \end{subfigure}
    \hspace{10pt}
    \begin{subfigure}[t]{0.38\linewidth}
        \centering
        \includegraphics[width=\linewidth]{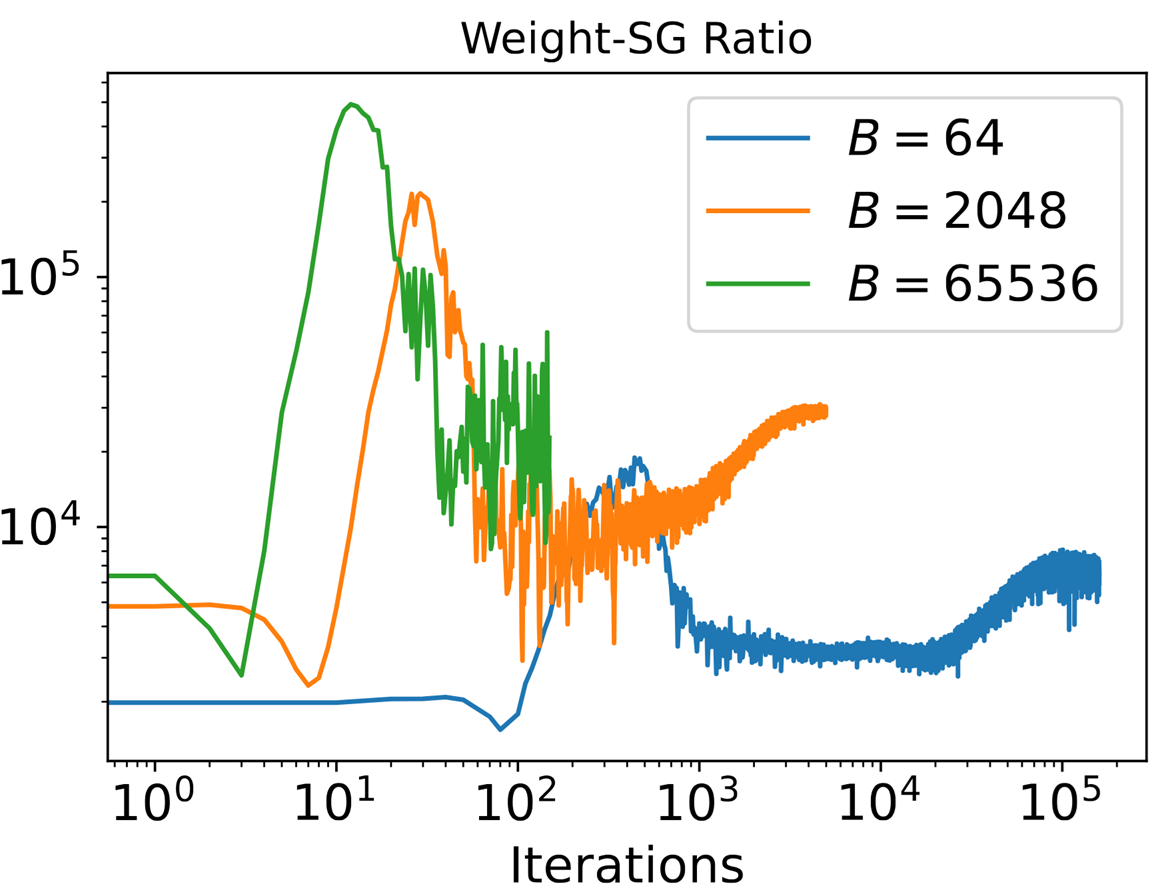}
        \caption*{(c)}
    \end{subfigure}
    \caption{(a) The dynamics of layer-wise weight to stochastic gradient (SG) norm ratio $\| \mathbf{w}^{(t)} \|_F / \| \nabla f({\bf w}^{(t)}; \xi^{(t)}) \|_F$ 
    during training. 
    (b) Layer-wise weight to momentum (of the stochastic gradient) norm ratio during training. As momentum is a smoothed version of the gradient (thus typically having smaller norm), this ratio tends to be even larger. Update norms of different layers are represented by different colors.
    (c) The weight-SG ratio under different batch-size settings. As batch size increases, the weight-SG ratio increases.
    }
    \label{fig:w_sg_ratio_training}
\end{figure}

Given the aforementioned observation that Adam has a much larger effective learning rate than SGD in LLMs, a natural question is: whether such a large effective learning rate is necessary for obtaining good performances. In this section, we proceed to answer this question by observing the dynamics of model weight parameters and stochastic gradient (SG) via the weight-SG norm ratio $\| \mathbf{w} \|_F / \| \nabla f({\bf w}; \xi) \|_F$.

Intuitively, as the weight-SG norm ratio gets larger, the learning rate needs to also scale up to ensure sufficiently large update for the weights. 
Note also that, a large weight-SG norm ratio implies a large (or often even larger) weight-momentum norm ratio, as the momentum is simply a smoothed version of the SG. 
In earlier works on large batch training of convolutional neural network on ImageNet, this ratio is at the order of $10^0$ to $10^3$ \citep{you2017large}. Meanwhile in the setting of LLM pre-training, our measured weight-SG norm ratio is significantly larger, ranging from $10^1$ to $10^4$ at initialization, and $10^3$ to $10^5$ during training; see Figure~\ref{fig:w_sg_ratio_training}.

To better understand the reason why 
the weight-SG norm ratio becomes so large during LLM pre-training, we conduct a more detailed theoretical analysis of the gradient matrix. 
In particular, we show that the gradient norm remains small when the batch size is large under LLM-specific training dynamics and token distribution. 
In the meantime, the weight norm is typically lower bounded (see Figure~\ref{fig:weight_norm}) in Appendix~\ref{apx:theory_justification}. Therefore,
the weight-SG ratio becomes large.

Let $\mathcal V=[V] := \{1, ..., V\}$ denote the vocabulary, and $\mathcal{D}$ denotes the distribution of the input-output data pair $(x, y)$. For some weight parameters $W$, we consider decomposing its mini-batch gradient $G$ into class-wise contributions $G_j$ such that 
\begin{equation}\label{eq:grad_decomposition_unified}
G_j
:=
\frac1B\sum_{b=1}^B
\bigl(p_j(\xi_b)-\mathbf 1\{y_b=j\}\bigr)U_j(\xi_b),
\qquad j\in[V],
\end{equation}
where $(\xi_b,y_b)$ are i.i.d. samples of forward and output information associated with the parameter block $W$ and data $(x_b, y_b)\sim\mathcal D$, and $p_j(\xi)$ is the model-predicted probability of class $j$ given input $\xi$.
$U_j(\xi)$ is the class-wise gradient feature associated with the weight parameter, which can be calculated in closed form as shown later. Further, define the marginal frequency of token $j$ and the corresponding conditional frequency on the input $\xi$ as 
\[
q_j:=\mathbb P_{(\xi, y)\sim\mathcal{D}}(y=j), \qquad
q_j(\xi):=\mathbb P_{(\cdot, y)\sim\mathcal{D}}(y=j\mid \xi).
\]

Let $\mathcal H$ and $\mathcal T$ be a partition of $[V]$. Then the full gradient norm admits the following upper bound.

\begin{theorem}\label{thm:full_grad_mainbody_unified}
Let $G$ be the mini-batch stochastic gradient of a parameter block satisfying \eqref{eq:grad_decomposition_unified}. Let two sets $\mathcal{H}$ and $\mathcal{T}$ be a partition of the tokens $\mathcal{V}\;\coloneqq\;\{1, ..., V\}$ such that $\mathcal{H}\cap\mathcal{T}=\varnothing$ and $\mathcal{H}\cup\mathcal{T}=\mathcal{V}$. Define $c_q\;\coloneqq\;\underset{j\in\mathcal{T}}{\max}~q_j$. Further, denote
\[
\varepsilon_j^2
:=
\mathbb E\!\left[
\|(p_j(\xi)-q_j(\xi))U_j(\xi)\|^2
\right],
\qquad
s_j^2
:=
\mathbb E\!\left[
\mathbf \|U_j(\xi)\|^2\mid y=j
\right].
\]
Assume that the full gradient $G$ satisfies
$\|G\|^2
\le
\kappa\sum_{j=1}^V\|G_j\|^2$
for some structural constant $\kappa\ge 1$.
Then the following holds:
\begin{equation}\label{eq:full_grad_upper_bound_universal_unified}
\begin{aligned}
\frac1\kappa\mathbb E\|G\|_F^2
&\le
\left(1+\frac1B\right)
\sum_{j\in\mathcal H}\varepsilon_j^2
+
\frac1B
\sum_{j\in\mathcal H}
\mathbb E\!\left[
q_j(\xi)(1-q_j(\xi))\|U_j(\xi)\|^2
\right]
\\
&\quad+
\left(\frac2B+2c_q\right)
\sum_{j\in\mathcal T}q_js_j^2
+
2\sum_{j\in\mathcal T}
\mathbb E\!\left[
p_j(\xi)^2\|U_j(\xi)\|^2
\right].
\end{aligned}
\end{equation}
\end{theorem}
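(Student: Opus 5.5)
By the structural assumption $\|G\|^2\le\kappa\sum_j\|G_j\|^2$, it suffices to bound $\mathbb E\|G_j\|^2$ separately for each $j$ and then sum over $j\in\mathcal H$ and $j\in\mathcal T$. Each $G_j$ is an average of $B$ i.i.d.\ terms $X_{j,b}:=(p_j(\xi_b)-\mathbf 1\{y_b=j\})U_j(\xi_b)$. Hence
\[
\mathbb E\|G_j\|^2=\|\mathbb E X_j\|^2+\tfrac1B\bigl(\mathbb E\|X_j\|^2-\|\mathbb E X_j\|^2\bigr)\le \|\mathbb E X_j\|^2+\tfrac1B\,\mathbb E\|X_j\|^2 .
\]
We use this identity in two different ways, matching the two regimes announced in the introduction: the bias--variance decomposition for $\mathcal H$ and the background--hit decomposition for $\mathcal T$.

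\textbf{Head tokens $j\in\mathcal H$.} Condition on $\xi$ and write $p_j-\mathbf 1\{y=j\}=(p_j-q_j(\xi))+(q_j(\xi)-\mathbf 1\{y=j\})$. The second term has zero conditional mean given $\xi$, so the mean is $\mathbb E X_j=\mathbb E[(p_j-q_j(\xi))U_j]$. By Jensen, $\|\mathbb E X_j\|^2\le\varepsilon_j^2$. The cross term in $\mathbb E\|X_j\|^2$ also vanishes after conditioning on $\xi$. This leaves
\[
\mathbb E\|X_j\|^2=\varepsilon_j^2+\mathbb E\bigl[q_j(\xi)(1-q_j(\xi))\|U_j\|^2\bigr],
\]
using that the conditional variance of the indicator is $q_j(1-q_j)$. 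Substituting gives $(1+\tfrac1B)\varepsilon_j^2+\tfrac1B\mathbb E[q_j(1-q_j)\|U_j\|^2]$, which is exactly the $\mathcal H$ part of \eqref{eq:full_grad_upper_bound_universal_unified}.

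\textbf{Tail tokens $j\in\mathcal T$.} Split $G_j=A_j-H_j$ into a background part and a hit part,
\[
A_j=\tfrac1B\sum_b p_j(\xi_b)U_j(\xi_b),\qquad H_j=\tfrac1B\sum_b\mathbf 1\{y_b=j\}U_j(\xi_b),
\]
and use $\|A_j-H_j\|^2\le 2\|A_j\|^2+2\|H_j\|^2$.

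For the background part, Jensen (convexity of the squared norm over the average) gives $\mathbb E\|A_j\|^2\le\mathbb E[p_j^2\|U_j\|^2]$, which produces the last term of the bound.

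For the hit part, apply the mean-plus-variance bound to $H_j$. The second moment is
\[
\mathbb E\|\mathbf 1\{y=j\}U_j\|^2=q_j s_j^2,
\]
directly from the definition of $s_j^2$. The mean is $q_j\,\mathbb E[U_j\mid y=j]$, so by Jensen its squared norm is at most $q_j^2s_j^2\le c_q\,q_js_j^2$, using $q_j\le c_q$ for $j\in\mathcal T$. Hence
\[
\mathbb E\|H_j\|^2\le (c_q+\tfrac1B)\,q_js_j^2 .
\]
Multiplying by $2$ gives the $(\tfrac2B+2c_q)$ coefficient.

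\textbf{Conclusion and main obstacle.} Summing the $\mathcal H$ and $\mathcal T$ contributions and dividing by $\kappa$ yields the claim. The only delicate points are the conditioning arguments. The $\mathcal H$ computation needs $U_j(\xi)$ and $p_j(\xi)$ to be $\xi$-measurable while $y\mid\xi$ has law $q(\cdot\mid\xi)$, so that the cross terms vanish and the conditional variance is $q_j(1-q_j)$. The $\mathcal T$ computation needs $\mathbb E[\mathbf 1\{y=j\}U_j]=q_j\mathbb E[U_j\mid y=j]$. I expect checking that the i.i.d.\ sampling of $(\xi_b,y_b)$ justifies both steps to be the main point requiring care; everything else is routine.
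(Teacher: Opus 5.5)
Your proposal is correct and takes essentially the same approach as the paper. The paper proves the bound through its two instantiations: Appendix~\ref{apx:proof} for the output layer with $\kappa=1$, and Appendix~\ref{apx:proof_hidden} for intermediate layers with $\kappa=V$. Both use exactly your steps, namely the i.i.d.\ mean-plus-$\frac{1}{B}$-second-moment bound with conditioning on $\xi$ for head tokens, and for tail tokens the background--hit split with $\|A-B\|^2\le 2\|A\|^2+2\|B\|^2$, Jensen, and $q_j^2\|\mathbb E[U_j\mid y=j]\|^2\le c_q q_j s_j^2$. Your version carries this out once in the abstract setting and keeps the head covariance term $\mathbb E[q_j(\xi)(1-q_j(\xi))\|U_j(\xi)\|^2]$ unsimplified, which is precisely what the unified statement asserts.
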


Theorem~\ref{thm:full_grad_mainbody_unified} provides a unified upper bound for the gradient norm that holds for arbitrary choice of the partition and any degree of token heterogeneity. The bound separates the gradient norm $\|G\|_F$ into the following four contributing terms: {\it (i)} The class-wise population gradient contribution for head tokens, controlled by $\varepsilon_j$; {\it (ii)} The covariance contribution for head tokens, which scales as $O(1/B)$. {\it (iii)} The softmax background of tail tokens, controlled by $p_j(\xi)$. {\it (iv)} The label-hit contribution of tail tokens, controlled by $q_j$ (and uniformly by $c_q$). 

The main point of Theorem \ref{thm:full_grad_mainbody_unified} is to provide a mechanism for the large weight-SG norm ratio observed previously. In particular, a large batch size $B$ suppresses the covariance term, while a suitable partition can separate head tokens, for which $\varepsilon_j$ is small, from tail tokens, for which $p_j(\xi)$ and $c_q$ are sufficiently small. Thus under such partition, the right-hand side of
\eqref{eq:full_grad_upper_bound_universal_unified} can be small, leading to a small $\|G\|_F$ and hence a large weight-SG norm ratio.  We next explain why these conditions hold in LLM training:

\noindent $\bullet$ $\varepsilon_j$ is small for high-frequency tokens. High-frequency tokens are typically well-trained and exhibit lower
    per-token loss than low-frequency tokens \citep{chung2025exploiting}.  This suggests that $|p_j(\xi)-q_j(\xi)|$ is small, leading to a small $\varepsilon_j$.

 \noindent $\bullet$ $B$ is large. In large-scale LLM training, the effective batch size is measured by the number of processed tokens per update, which is often on the order of $10^5$--$10^7$.
 
 \noindent $\bullet$ $p_j(\xi)$ is small for low-frequency tokens. Due to limited exposure during training, rare tokens are more likely to
    remain under-trained \citep{bao2023token}. Therefore, their per-token loss (approximately $-\log p_j(\xi)$) can be high, indicating a smaller value of $p_j(\xi)$. 

 \noindent $\bullet$ $c_q$ is small for low-frequency tokens. Recall that $c_q\;\coloneqq\;\max_{j\in\mathcal{T}}~q_j$ is the maximal frequency for all tail tokens. It is known that tail tokens have very small marginal probabilities in natural language corpora \citep{piantadosi2014zipf}. 
 
More discussions about these terms can be found in Appendix \ref{apx:theory_justification}. 

Theorem \ref{thm:full_grad_mainbody_unified} gives a unified bound for all weight parameter blocks whose gradients admit class-wise decomposition \eqref{eq:grad_decomposition_unified}. For specific layers, this bound can be more concrete by identifying $\xi$ and $U_j(\xi)$. We illustrate this specialization in two representative cases, each with a different choice of $\kappa$.

\textbf{(i) Output layer.}
For the output-layer weight matrix $W\in\mathbb R^{V\times d}$, take $\xi=h$ and
$U_j(\xi)=h$ where $h\in\mathbb{R}^d$ is the output-layer hidden representation. Note that the logit and the model predictive distribution are
\[
z=Wh,
\qquad p(h) = \mathrm{softmax}(Wh) \in \mathbb{R}^V.
\]
Then
$
G
=
\frac1B\sum_{b=1}^B
(p(h_b)-e_{y_b})h_b^\top,
$
and the class-wise contribution $G_j$ is simply the $j$-th row ${\bf g}_j$ of $G$.
Therefore
$
\|G\|_F^2
=
\sum_{j=1}^V\|{\bf g}_j\|^2
=
\sum_{j=1}^V\|G_j\|^2,
$
which implies $\kappa=1$.  Consequently,
\[
\mathbb E\|G\|_F^2
\le
\left(1+\frac1B\right)
\sum_{j\in\mathcal H}\varepsilon_j^2
+
\frac1B\mathbb E\|h\|^2
+
\left(\frac2B+2c_q\right)
\sum_{j\in\mathcal T}q_js_j^2
+
2\sum_{j\in\mathcal T}
\mathbb E\!\left[
p_j(h)^2\|h\|^2
\right].
\]

\textbf{(ii) Intermediate layer.}
For an intermediate-layer weight matrix $W^{(l)}$ in layer $l$, take $\xi=h^{(l-1)}$ where $h^{(l-1)}$ is the hidden representation from the $(l-1)$-th layer. Due to the product-wise structure of the gradient, $U_j(\xi)$ can be represented by a rank-$1$ matrix. Then we can simply choose $\kappa=V$ to obtain the bound. We defer the detailed derivation for this case to Appendix \ref{apx:proof_hidden}.

The detailed proof of Theorem \ref{thm:full_grad_mainbody_unified} with the two cases can be found in Appendix~\ref{apx:proof} and Appendix~\ref{apx:proof_hidden}. Here we provide the proof outline.

By definition \eqref{eq:grad_decomposition_unified}, $G_j$ is the per-token-class gradient corresponding to token $j$. For any tokens $j\in\mathcal{H}$, we exploit a bias--variance decomposition:
\begin{equation}\label{eq:tokenwise_grad_batch_sec_mainbody}
    \mathbb{E}\bigl[\|G_j\|^2\bigr]
= \|\mathbb{E}[G_j]\|^2 + \mathrm{tr}\bigl(\mathrm{Cov}(G_j)\bigr).
\end{equation}
We note that $\mathbb{E}[G_j]$ represents the population class-wise gradient
contribution of token $j$.
Under this decomposition, we have
\begin{equation*}
    \|\mathbb{E}[G_j]\|^2\le\varepsilon_j^2, \qquad \mathrm{tr}\bigl(\mathrm{Cov}(G_j)\bigr)\le \frac{1}{B}\left(\varepsilon_j^2+E\!\left[
q_j(\xi)(1-q_j(\xi))\|U_j(\xi)\|^2\right]\right), 
\end{equation*}
yielding the first two terms in the upper bound in \eqref{eq:full_grad_upper_bound_universal_unified}. This decomposition and the resulting bounds are used to bound tokens where the population gradient $\mathbb{E}[G_j]$ is relatively well estimated, so that the gradient norm is dominated by the covariance term. 

For any tokens $j\in\mathcal{T}$, we isolate the label hits $\{b:y_b=j\}$ via a background--hit decomposition:
\begin{equation}\label{eq:tokenwise_grad_rare_mainbody}
G_j
=\underbrace{\frac{1}{B}\sum_{b=1}^B p_j(\xi_b)\,U_j(\xi_b)}_{\text{``softmax background''}}-
\underbrace{\frac{1}{B}\sum_{b:y_b=j} U_j(\xi_b)}_{\text{``label hits''}}.
\end{equation}
The ``softmax background'' term involves the softmax output $p_j(\xi)$ of all samples in the batch, while the ``label hits'' term records the possible spikes involved when token $j$ is sampled. Under this decomposition, we have
\vspace{-.25cm}
\[
    \mathbb{E}\left\|\frac{1}{B}\sum_{b=1}^B p_j(\xi_b)\,U_j(\xi_b)\right\|^2
\le
\mathbb{E}\bigl[p_j(\xi)^2\|U_j(\xi)\|^2\bigr],\qquad
\mathbb{E}\left\|\frac{1}{B}\sum_{b:y_b=j} U_j(\xi_b)\right\|^2
\le\;
\frac{q_j}{B}s_j^2
+
q_j^2s_j^2,
\]

\vspace{-.25cm}

yielding the last two terms in the upper bound in \eqref{eq:full_grad_upper_bound_universal_unified}.
Decomposition \eqref{eq:tokenwise_grad_rare_mainbody} and the resulting bounds are used to characterize the regime where the token frequency is rare (so that $q_j$ is small), or where the token is under-trained (so that $p_j(\xi)$ is small)

\vspace{-5pt}

\section{Shrinking the SGD-Adam Gap in LLM Pre-Training}

\vspace{-5pt}

\subsection{Challenges in Shrinking the Gap: Irregular Stochastic Gradient}

\begin{figure}
    \centering
    \begin{subfigure}[t]{0.3\linewidth}
        \centering
        \includegraphics[width=\linewidth]{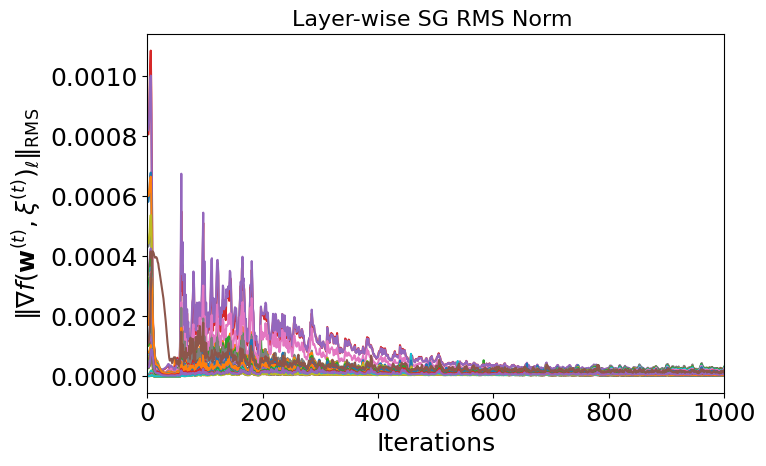}
        \caption*{(a)}
    \end{subfigure}
    \hfill
    \begin{subfigure}[t]{0.32\linewidth}
        \centering
        \includegraphics[width=\linewidth]{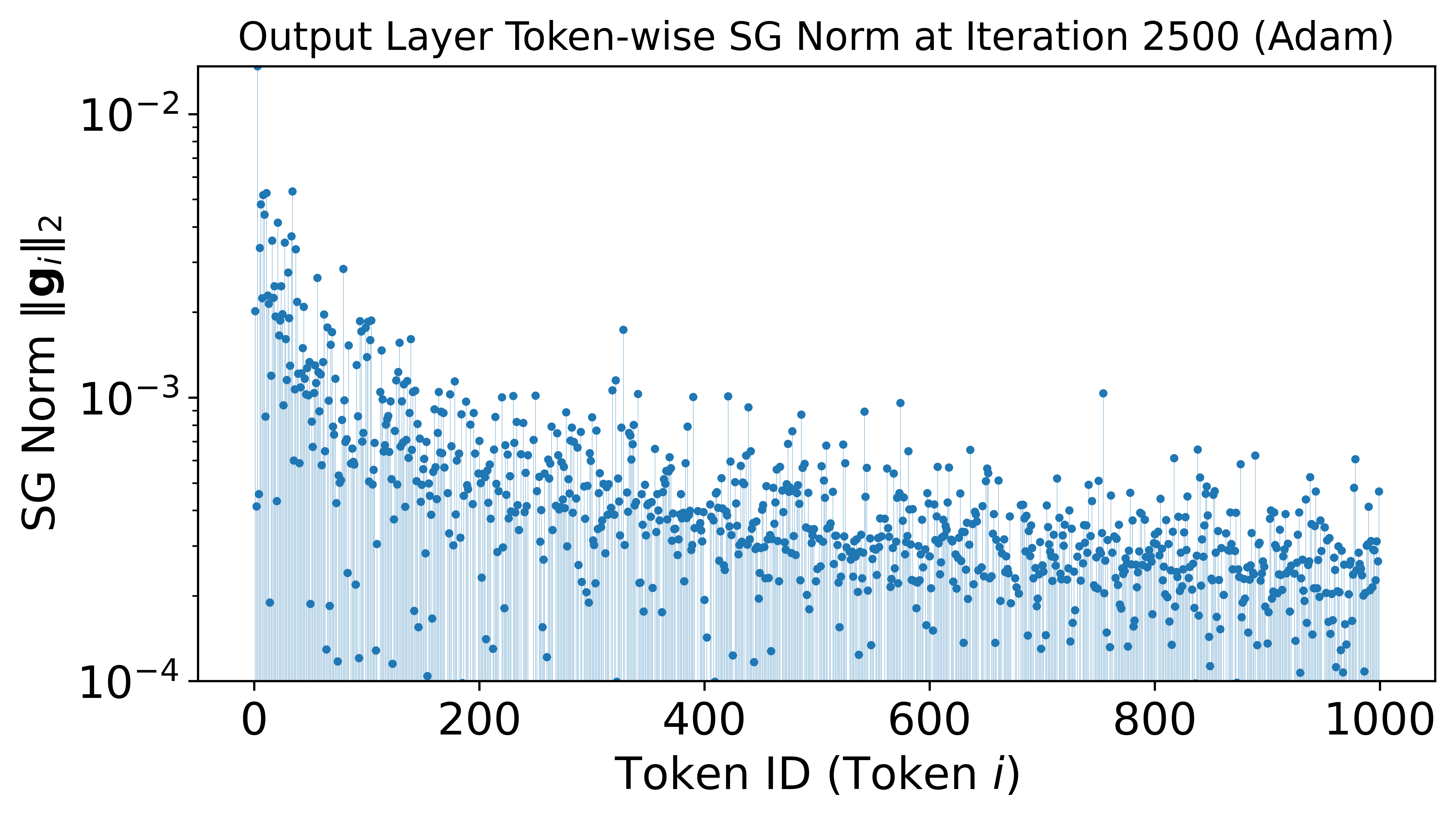}
        \caption*{(b)}
    \end{subfigure}
    \hfill
    \begin{subfigure}[t]{0.32\linewidth}
        \centering
        \includegraphics[width=\linewidth]{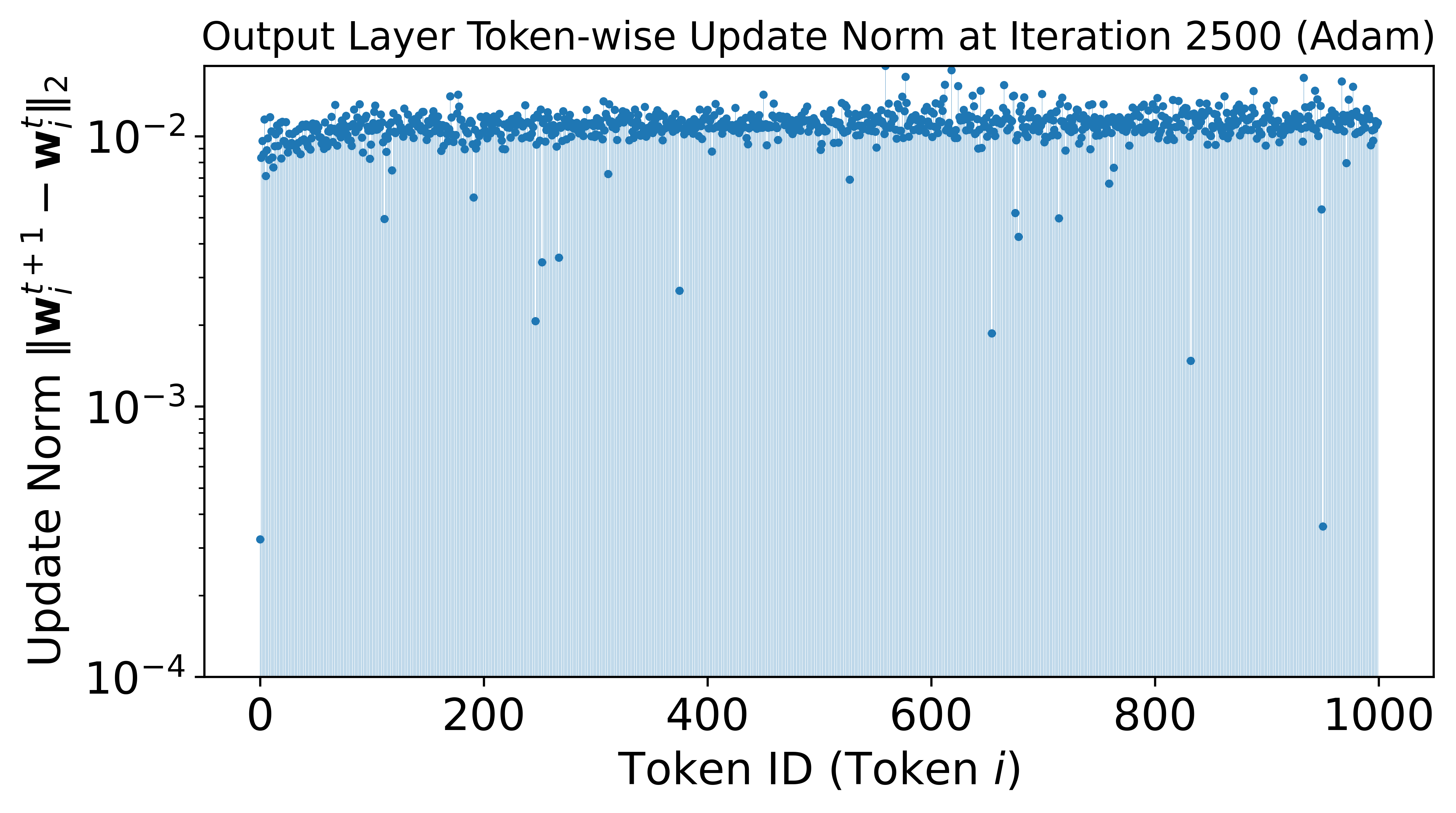}
        \caption*{(c)}
    \end{subfigure}
    \caption{ 
    (a): Layer-wise stochastic gradient norm $\| \nabla f({\bf w}^{(t)}; \xi^{(t)})_\ell \|_{\rm RMS}$ for all main layers $\ell = 1, ..., L$ of the $L$-layer model. 
    (b): Per-token-class stochastic gradient norm $\| {\bf g}_i \|_2$ for ${\bf g}_1, ..., {\bf g}_V$ the stochastic gradient of each token class output head (up to the first 1000 token IDs). Weight vectors corresponding to frequent token class (lower token ID) receive higher SG norm.
    (c): Per-token-class update norm $\| {\bf w}_i^{t+1} - {\bf w}_i^t \|_2$ of Adam for ${\bf w}_1, ..., {\bf w}_V$ the weight vector of each token class output head (up to the first 1000 token IDs). Adam effectively normalizes the heterogeneous token class SG.
    }
    \label{fig:grad_irreg}
    \vspace{-10pt}
\end{figure}

Now we have acknowledged that the effective learning rate of SGD is too small for pre-training. However, increasing the  learning rate of SGD further leads to divergence (e.g., see Figure 11 of \citet{zhang2024transformers}). In this section, we identify two instabilities that arise from the stochastic gradient that destabilize SGD under large learning rates.

\paragraph{(Type I) Layer-wise Stochastic Gradient Spikes.}
As illustrated in Figure~\ref{fig:grad_irreg} (a), we observe spikes in the form of unstable large stochastic gradient norm layer-wise, especially in the early stage of training. It is known that such spikes often result in a loss spike when the learning rate is large, leading to unstable training and divergence \citep{takase2023spike}.

 \vspace{-5pt}

\paragraph{(Type II) Output Layer SG Disparity.}
As we see from Figure~\ref{fig:grad_irreg} (b), high-frequency token classes (small token IDs) receive higher magnitude of stochastic gradient at the output layer classification head. Intuitively, this disparity makes the gradient concentrated on high-frequency tokens. This implies that even if the overall gradient norm is relatively small, the per-token-class gradients of the high-frequency tokens can still have a large magnitude, resulting in the training instability for these tokens. We notice that adaptive optimizers like Adam will normalize such heterogeneous token class stochastic gradient to stabilize training under class imbalanced samples, by observing the per-token-class weight update norm Figure \ref{fig:grad_irreg} (c). 

Below, we provide a formal theoretical understanding on why the output-layer gradient norm is skewed toward high-frequency tokens. Denote $[{\bf g}_1, ..., {\bf g}_V] \in \mathbb{R}^{d \times V}$ as the output layer gradient. We show that the per-token-class gradient norm $\mathbb{E}\|{\bf g}_j\|^2$ is approximately proportional to the token frequency in the early training stage, which can lead to substantial disparities across tokens.

\begin{theorem}\label{thm:grad_lower_and_ratio_mainbody}
Assume there exists a constant $0 < c \ll B$ such that 
$\mathbb{E}\bigl[p_j(h)^2\|h\|^2\bigr]
\;\le\;
\frac{c^2}{B^2}\,\mathbb{E}\|h\|^2.
$
In addition, define $m_j\coloneqq \mathbb{E}[h\mid y=j]$, $s_j^2\coloneqq \mathbb{E}[\|h\|^2\mid y=j]$ and assume that
$0<s_j^2<\infty$ for the tokens under consideration.
Then,
\begin{equation*}
\mathbb{E}\|{\bf g}_j\|^2
\;\ge\;
\frac{1}{2B}\,q_j\,s_j^2
\;-\;
\frac{c^2}{B^2}\,\mathbb{E}\|h\|^2.
\end{equation*}
\vspace{-.5cm}

Consequently, for any two tokens $i, j$,
\begin{equation}\label{eq:grad_ratio_bound_mainbody}
\frac{\mathbb{E}\|{\bf g}_i\|^2}{\mathbb{E}\|{\bf g}_j\|^2}
\;\ge\;
\frac{\frac{1}{2B}q_i s_i^2-\frac{c^2}{B^2}\mathbb{E}\|h\|^2}
{\frac{2q_j}{B}\,s_j^2
+2q_j^2\|m_j\|^2
+\frac{2c^2}{B^2}\,\mathbb{E}\|h\|^2}.
\end{equation}
\vspace{-.5cm}

\end{theorem}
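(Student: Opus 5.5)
The plan is to use the background--hit decomposition \eqref{eq:tokenwise_grad_rare_mainbody} for the output layer. Here $\xi=h$ and $U_j=h$, so the row gradient splits as ${\bf g}_j = A_j - H_j$. The background part is $A_j := \frac1B\sum_b p_j(h_b)h_b$ and the label-hit part is $H_j := \frac1B\sum_{b:y_b=j} h_b$. For the lower bound we use the elementary inequality $\|a-b\|^2 \ge \frac12\|b\|^2 - \|a\|^2$, which follows from $\|b\|^2\le 2\|a-b\|^2+2\|a\|^2$. This gives $\mathbb E\|{\bf g}_j\|^2 \ge \frac12\mathbb E\|H_j\|^2 - \mathbb E\|A_j\|^2$. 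For the upper bound (needed for the denominator) we use $\|a-b\|^2\le 2\|a\|^2+2\|b\|^2$.

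\textbf{Background term.} As in the proof of Theorem~\ref{thm:full_grad_mainbody_unified}, Jensen's inequality (convexity of $\|\cdot\|^2$ applied to the average over $b$) together with identical distribution gives $\mathbb E\|A_j\|^2\le \mathbb E[p_j(h)^2\|h\|^2]$. By the assumption this is at most $\frac{c^2}{B^2}\mathbb E\|h\|^2$.

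\textbf{Label-hit term.} Write $H_j = \frac1B\sum_b Z_b$ with $Z_b:=\mathbf 1\{y_b=j\}h_b$. The $Z_b$ are i.i.d. with $\mathbb E Z_b = q_j m_j$ and $\mathbb E\|Z_b\|^2 = q_j s_j^2$. Expanding the square exactly gives
\[
\mathbb E\|H_j\|^2 = \frac{1}{B^2}\Bigl(B q_j s_j^2 + B(B-1) q_j^2\|m_j\|^2\Bigr).
\]
Dropping the nonnegative cross term yields the lower bound $\mathbb E\|H_j\|^2 \ge \frac{q_j s_j^2}{B}$. Bounding $(B-1)/B\le 1$ yields the upper bound $\mathbb E\|H_j\|^2\le \frac{q_j s_j^2}{B}+q_j^2\|m_j\|^2$. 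The lower bound combined with the background estimate gives the first claim, $\mathbb E\|{\bf g}_j\|^2\ge \frac{1}{2B}q_js_j^2-\frac{c^2}{B^2}\mathbb E\|h\|^2$.

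\textbf{Ratio.} For token $j$ we apply the upper splitting: $\mathbb E\|{\bf g}_j\|^2\le 2\mathbb E\|A_j\|^2+2\mathbb E\|H_j\|^2$. This is at most $\frac{2c^2}{B^2}\mathbb E\|h\|^2 + \frac{2q_j}{B}s_j^2 + 2q_j^2\|m_j\|^2$, which is exactly the denominator in \eqref{eq:grad_ratio_bound_mainbody}. For token $i$ we use the lower bound just proved as the numerator. Dividing gives the ratio bound. The denominator is strictly positive because $q_j s_j^2>0$ for any token under consideration.

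\textbf{Main obstacle.} The only delicate point is the sign of the numerator. If $\frac{1}{2B}q_is_i^2 < \frac{c^2}{B^2}\mathbb E\|h\|^2$, the bound is trivially true, since the left side is nonnegative. Otherwise lower-numerator over upper-denominator is valid. I would note this case split explicitly.
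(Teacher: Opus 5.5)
Your proposal is correct and matches the paper's proof essentially step for step. The shared steps are the background--hit split, the lower bound $\|A_j-H_j\|^2\ge\frac12\|H_j\|^2-\|A_j\|^2$ (the paper gets it from Young's inequality), the Jensen bound on the background term, and the exact second-moment expansion of the label-hit term, dropping the cross term for the lower bound and using $(B-1)/B\le 1$ for the upper bound. The denominator then comes from the $2\|a\|^2+2\|b\|^2$ splitting of Proposition~\ref{prop:grad_tail}, and your explicit case split on the sign of the numerator is a small rigor point that the paper leaves implicit.
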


Theorem~\ref{thm:grad_lower_and_ratio_mainbody} provides the lower bound on the per-token-class gradient in the early training stage, showing that it scales with the token frequency. In particular, when two tokens differ significantly in frequency, i.e., $q_i/q_j$ is large, we obtain 
\begin{equation*}
    \frac{\mathbb{E}\|{\bf g}_i\|^2}{\mathbb{E}\|{\bf g}_j\|^2} = \Omega\left(\frac{q_i}{q_j}\right)
\end{equation*}
for small enough $c^2/B^2$.
This explains the empirical observation that high-frequency tokens tend to exhibit much larger per-token-class gradient norms in the output layer. It is also worth mentioning that the lower bound of the ratio in \eqref{eq:grad_ratio_bound_mainbody} is an increasing function of $B$, so a larger batch size is likely to introduce a greater disparity between per-token-class gradients. We include an empirical illustration showing this in Figure \ref{fig:token_class_variation} in Appendix \ref{apx:figures}.

\vspace{-5pt}

\subsection{Large Learning Rate SGD Enabled via Clipping Shrinks the Gap}

\label{sec:clipping}

With the knowledge about the Type-I and Type-II irregularity of the stochastic gradients that can potentially cause the SGD instability, we use two simple forms of clipping, being layer-wise RMS norm clipping and output layer per-token-class clipping, to mitigate the issues and enable  SGD with \underline{L}arge \underline{L}earning rate, which we refer to as \ourname{}.

More specifically, define the clipping operator as $\operatorname{clip}_{r,\|\cdot\|}(\bf x)
\coloneqq
\min\left\{1,\frac{r}{\|{\bf x}\|}\right\}\bf x$. For layer weights ${\bf w}_1, ..., {\bf w}_{L-1}$ and their momentums ${\bf m}_1, ..., {\bf m}_{L-1}$, we apply layer-wise RMS norm clipping:
\[
{\bf w}^{(t+1)}_{\ell} = {\bf w}^{(t)}_{\ell} - \operatorname{clip}_{\tau,\|\cdot\|_{\rm RMS}}
\!\left(\eta_t {\bf m}_{\ell}^{(t)}\right), \qquad \text{for} ~ \ell = 1,..., L-1.
\]
For the output layer classifier head class weights ${\bf w}_{L, 1}, ..., {\bf w}_{L, V}$ and their momentums ${\bf m}_{L, 1}, ..., {\bf m}_{L, V}$, we additionaly apply output layer per-token-class clipping:
\[
{\bf w}^{(t+1)}_{L} = {\bf w}^{(t)}_{L} - \operatorname{clip}_{\tau,\|\cdot\|_{\rm RMS}}
\!\left(\eta_t \widetilde{\bf m}_{L}^{(t)}\right), \quad \widetilde{\bf m}^{(t)}_{L,i} = \operatorname{clip}_{\delta,\|\cdot\|_2}
\!\left(m_{L,i}^{(t)}\right),
        \qquad \text{for} ~ i=1,..., V.
\]
Here $\tau > 0$ is the layer-wise clipping threshold and $\delta > 0$ is the per-token-class clipping threshold.

\begin{figure}[tp]
    \centering
    \begin{subfigure}[t]{0.37\linewidth}
        \centering
        \includegraphics[width=\linewidth]{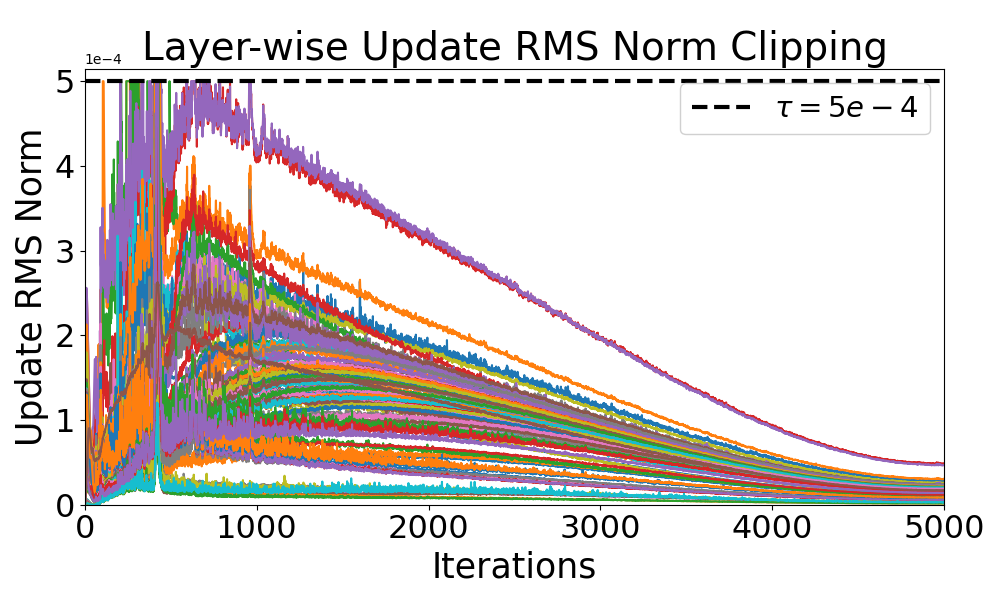}
    \end{subfigure}
    \hspace{10pt}
    \begin{subfigure}[t]{0.37\linewidth}
        \centering
        \includegraphics[width=\linewidth]{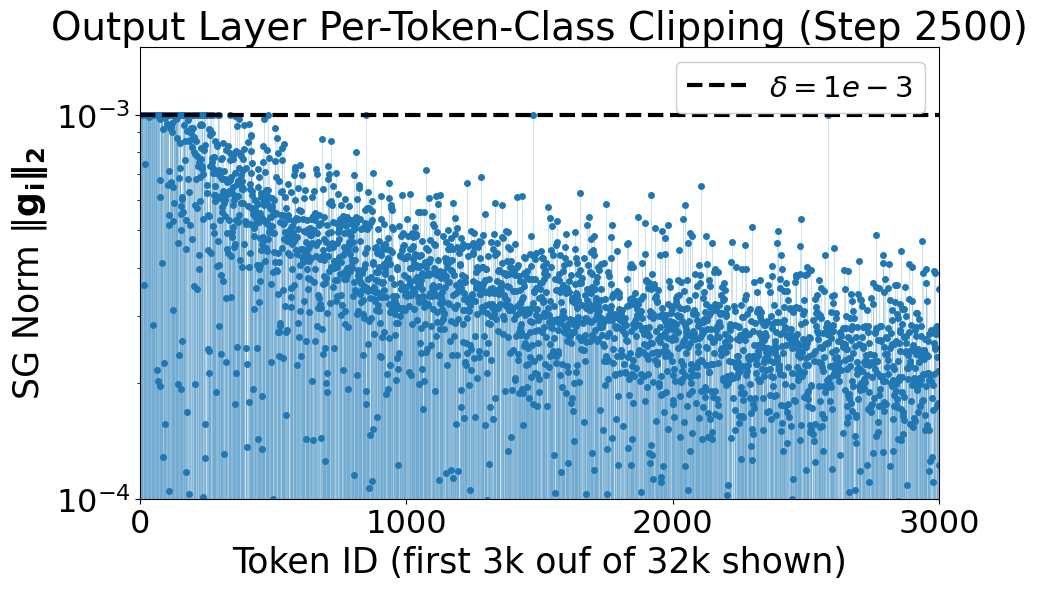}
    \end{subfigure}
    \vspace{-5pt}
    \caption{ Illustration of the two forms of gradient clipping in \ourname{} while training LLaMA 130M, ``treating'' Type-I (Left) and Type-II (Right) gradient irregularities. Notice that clipping is targeted to irregularities and the vast majority of gradients remain unchanged. In the left figure update norms of different layers are represented by different colors.
    }
    \label{fig:sgd_cc_update_norm}
\end{figure}

Notice that the layer-wise RMS norm clipping ensures that $\| {\bf w}_\ell^{(t+1)} - {\bf w}_{\ell}^{(t)} \|_{\rm RMS} \leq \tau$, ensuring no layers will be impacted by the Type-I gradient spikes. Moreover, it only clips the update during gradient spikes. Separately, the output layer clipping avoids the classifier head of frequent classes to diverge when receiving disproportionately large Type-II heterogeneous stochastic gradient. 
Finally, we remark that we find both types of clipping to be necessary to achieve the larger admissible learning rate for SGD (see Appendix \ref{sec:abl_clipping} for ablation).

We present our main numerical results in Table \ref{tab:main_results}. We observe that \ourname{}'s performance is close to Adam with a small margin, significantly improving from SGD. For instance, in training the 1B LLaMA model, the relative performance gap in validation loss reduces from 57\% to just 3.5\%. 
Notice that \ourname{} has effectively escaped the local solutions around the initialization by the measured distance in the last column for each model size in Table \ref{tab:main_results}. Additionally, the training curves of each algorithm are illustrated in Figure \ref{fig:loss_curves}.
Finally we note that the two clipping thresholds are chosen sufficiently large so that SGD remains essentially unchanged except for the treatment of these gradient irregularities  (see Figure \ref{fig:sgd_cc_update_norm} for illustration), thereby supporting our conjecture that the lack of large learning rates is a primary limiting factor for SGD in pre-training.

\begin{figure}[tp]
    \centering
    \begin{subfigure}{0.33\linewidth}
        \centering
        \includegraphics[width=\linewidth]{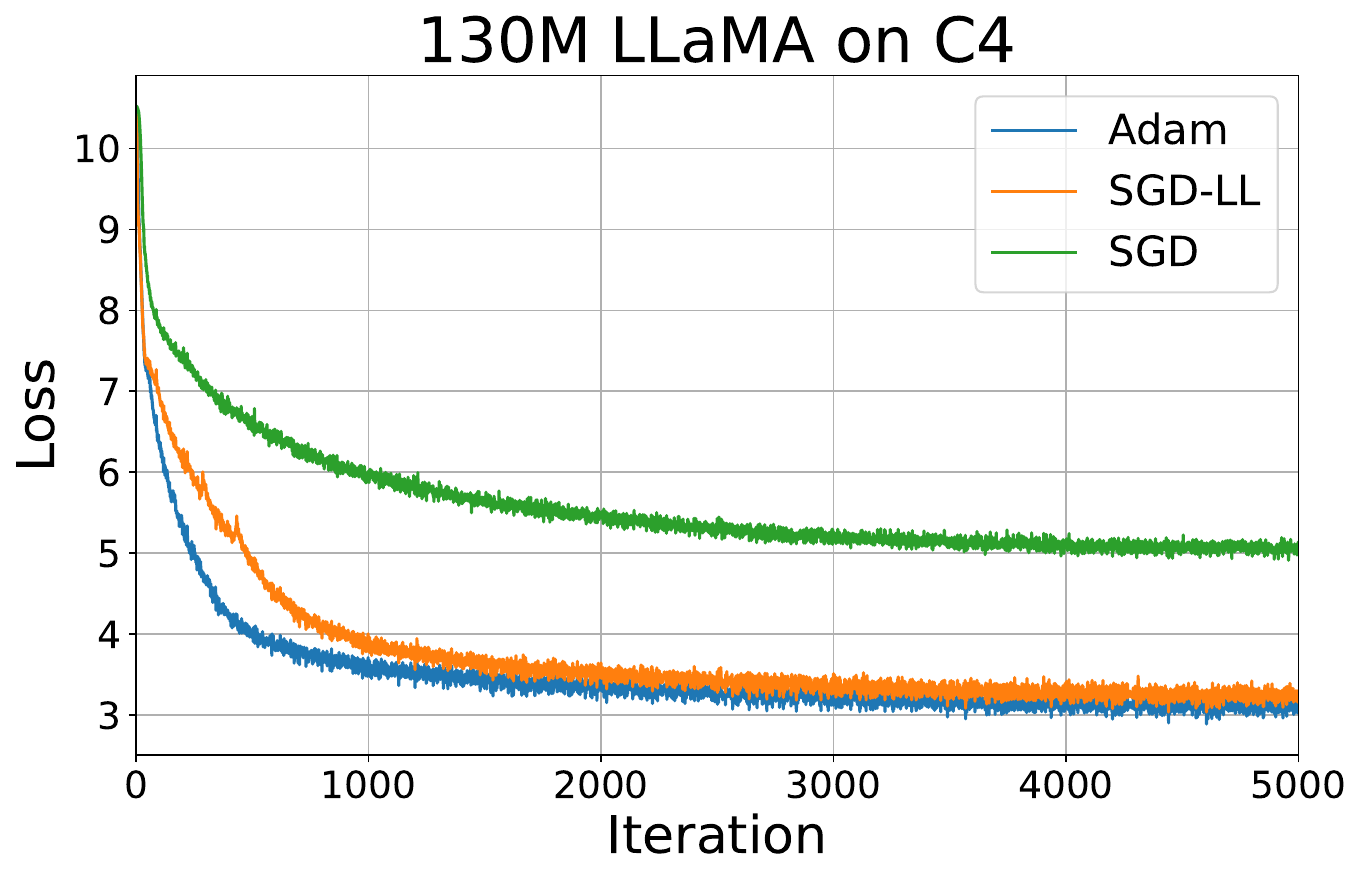}
    \end{subfigure}
    \begin{subfigure}{0.32\linewidth}
        \centering
        \includegraphics[width=\linewidth]{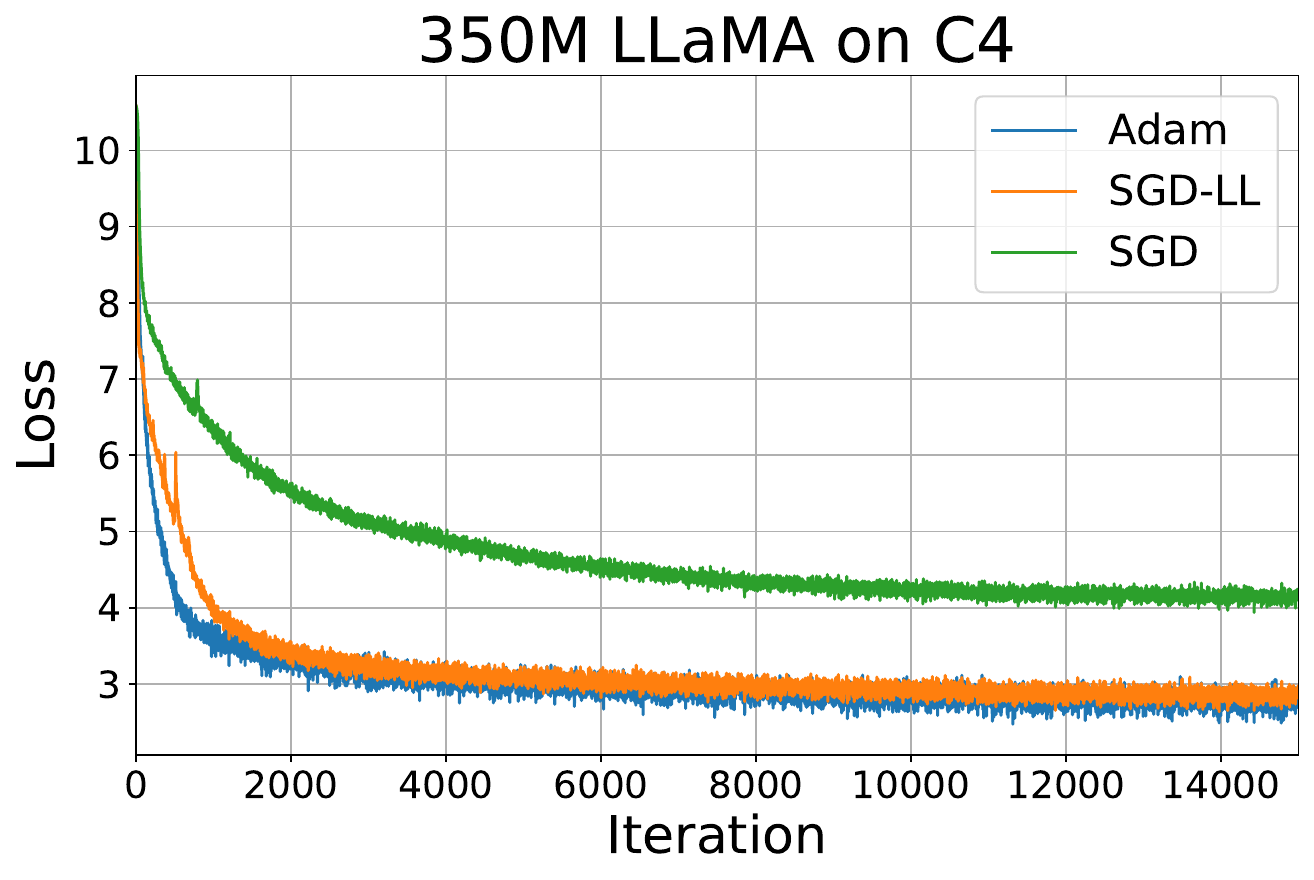}
    \end{subfigure}
    \begin{subfigure}{0.32\linewidth}
        \centering
        \includegraphics[width=\linewidth]{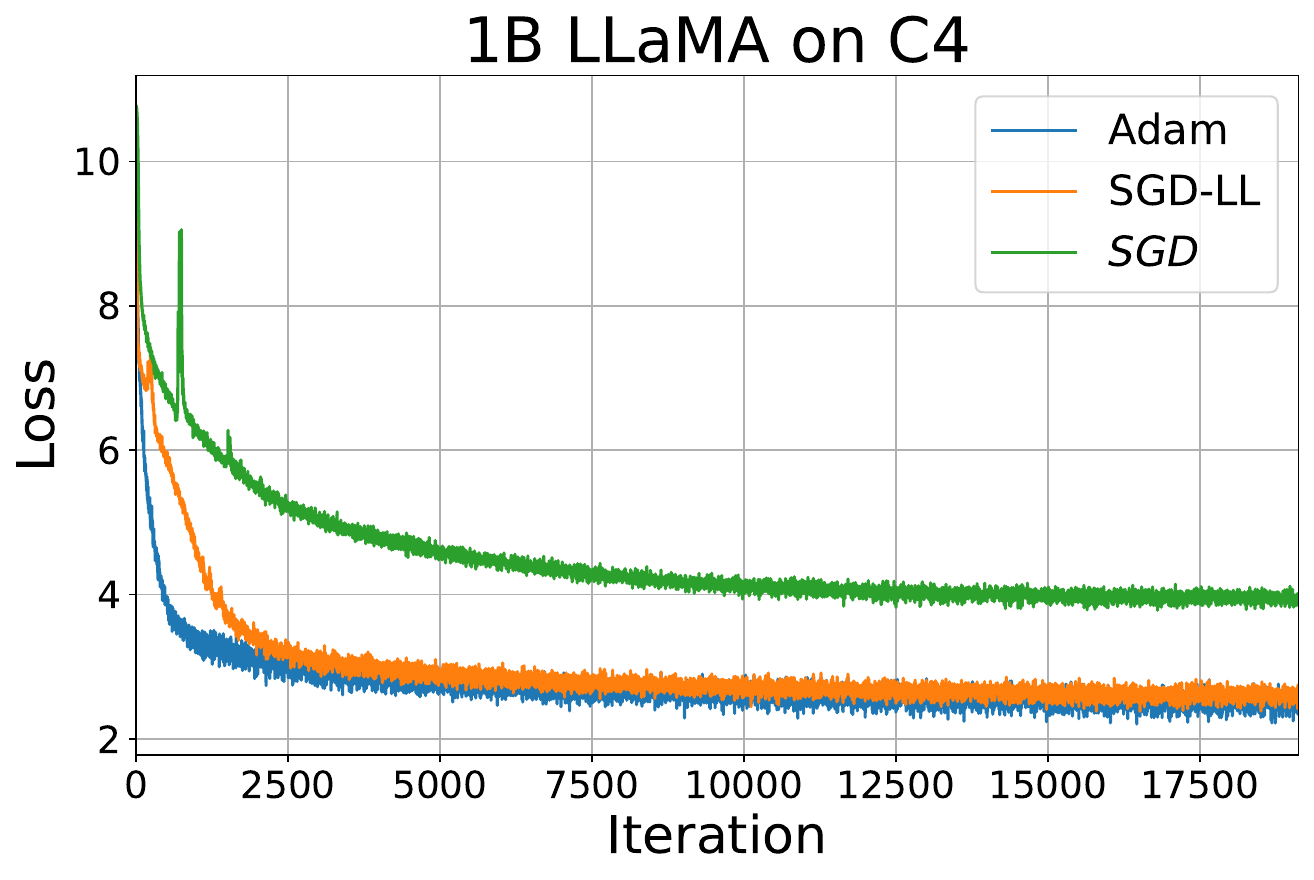}
    \end{subfigure}
    \vspace{-0.2cm}
    \caption{Training loss for various model sizes. The \ourname{} trajectory     closely follows that of Adam. }
    \label{fig:loss_curves}
    \vspace{-10pt}
\end{figure}

\begin{table}[htb]
\centering
\renewcommand{\arraystretch}{1.3}
\setlength{\tabcolsep}{3pt}
\resizebox{1\textwidth}{!}{%
\begin{tabular}{l|cccc|cccc|cccc}
\toprule
 & \multicolumn{4}{c}{\textbf{130M} (2.6B tokens)} 
 & \multicolumn{4}{c}{\textbf{350M} (7.8B tokens)} 
 & \multicolumn{4}{c}{\textbf{1B} (20B tokens)} \\
\midrule
 & Val. Loss & PPL & LR & $\|\mathbf{w}^T - \mathbf{w}^0\|_F$
 & Val. Loss & PPL & LR & $\|\mathbf{w}^T - \mathbf{w}^0\|_F$
 & Val. Loss & PPL & LR & $\|\mathbf{w}^T - \mathbf{w}^0\|_F$ \\
\midrule
\textbf{Adam}
& 3.11 & 22.46 & $3 \times 10^{-3}$ & $1.3 \times 10^4$
& 2.78 & 16.09 & $2 \times 10^{-3}$ & $6.7 \times 10^5$
& 2.51 & 12.32 & $2 \times 10^{-3}$ & $7.2 \times 10^5$ \\
\midrule
\textbf{SGD}
& 5.07 & 158.90 & 0.5 & $3.4 \times 10^2$
& 4.14 & 63.22 & 0.5 & $5.2 \times 10^3$
& 3.95 & 51.87 & 0.5 & $6.2 \times 10^3$ \\
\midrule
\textbf{\ourname{}}
& 3.24 & 25.70 & 100 & $3.9 \times 10^3$
& 2.87 & 17.35 & 200 & $1.1 \times 10^5$
& 2.60 & 13.47 & 300 & $1.3 \times 10^5$ \\
\bottomrule
\end{tabular}
}
\vspace{3pt}
\caption{
Comparison on training various model sizes on C4. We report validation loss, perplexity (PPL), base learning rate (LR) and distance of final weights from initialization $\|\mathbf{w}^T - \mathbf{w}^0\|_F$. For the 130M and 350M models, we choose $\tau = 5 \times 10^{-4}$ and 0.5M token batch size. For the 1B model, we choose $\tau = 2\times 10^{-4}$ and 1M token batch size. We adopt $\delta = 10^{-3}$ for all models.  
}
\label{tab:main_results}
\vspace{-17pt}
\end{table}

\vspace{-5pt}

\section{Conclusions}

\vspace{-5pt}
 
Overall, our work provides both empirical and theoretical evidence revealing key underlying reasons behind the SGD's poor performance on LLM pre-training. We observe that the smaller update magnitudes of SGD, induced by its much lower effective learning rate relative to Adam, lead to slower training progress. Further, we theoretically connect the need for larger effective learning rates to the small gradient norms during pre-training. Finally, we identify two forms of gradient irregularities causing SGD to become unstable at the large learning rate regime; after appropriate handling we demonstrate that this regime closes most of the gap with Adam.

A limitation of this work is that the current simple remedy leaves a small residual performance gap to Adam. However, we emphasize the focus of this study is not to propose new optimization algorithms, but rather on gaining understanding about important pre-training phenomena that shed more light to the important open question about the SGD and Adam performance gap in pre-training.


\bibliographystyle{plainnat}
\bibliography{reference.bib}


\newpage

\appendix

\section{Additional Experimental Figures}\label{apx:figures}

\begin{figure}[H]
    \centering
    \includegraphics[width=0.5\linewidth]{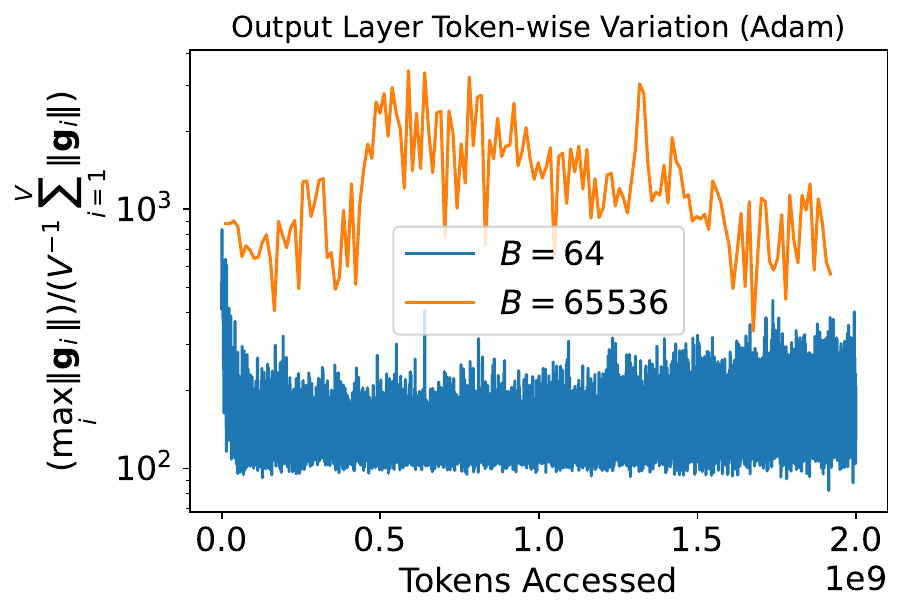}
    \caption{Measuring the token class gradient imbalance by the ratio between maximum token class gradient norm and average token-class gradient norm: $(\max_{i = 1,..., V} \| {\bf g}_i \|_{2}) / (V^{-1} \sum_{i=1}^V \| {\bf g}_i \|_2)$.}
    \label{fig:token_class_variation}
\vspace{-.5cm}
\end{figure}

\section{Experimental Settings}
\label{apx:experimental_settings}

 For our main analysis we pre-train the LLaMA \cite{touvron2023llama} decoder-only transformer model with about 130M trainable parameters on the C4 (Colossal Clean Crawled Corpus) dataset \cite{raffel2020exploring}. For tokenization, we use the T5 tokenizer \cite{raffel2020exploring} with a vocabulary of 32,000 tokens. We use a data-parallel setup with batch size 2048 (unless stated otherwise in the text) and sequence length 256, giving a token batch size of 524,288 tokens. We train for a total of 5,000 steps, such that the model sees roughly the Chinchilla compute-optimal \cite{hoffmann2022training} number of tokens (2.6B, i.e.,\ 20 tokens per model parameter). We train using Pytorch's Automatic Mixed Precision (AMP, BF16/FP32) and use the typical cosine learning rate scheduler with linear warm-up for the first 10\% of the iterations. For our large scale experiments, we scale up-to the 1B LLaMA model, doubling the batch size to 4096 tokens (resulting into just over 1M tokens per batch) and again train for Chinchilla optimal total tokens (20B). We use the loss and perplexity (defined as the exponential of the loss) of the final model checkpoint, measured on the typical C4 validation split, as our evaluation metrics. We conduct a learning rate sweep for each experiment and report the best results. Following typical choices, we use $\beta=0.9$ for the first-order momentum of both SGD and Adam and $\beta_2=0.95$ for the second order momentum of Adam \cite{brown2020language, touvron2023llama, touvron2023llama2}. Only for the small batch size experiments we set $\beta_2=0.999$ for Adam as it is found to achieve better performance under that setting \cite{zhang2025how}. We set weight decay to zero for all our main experiments. Finally, we note that we trained the  130M and 350M models on 4 NVIDIA A100-40GB GPUs and the larger 1B model on 4 NVIDIA H100-96GB GPUs. The time it takes to pre-train the 130M,  350M and 1B models in our setup is about 2, 8 and 30 hours, respectively.

\section{Additional Experiments}

\subsection{Ablations on necessity of both clipping types}
\label{sec:abl_clipping}

In this section we show that using only one of the two clipping types is inadequate to enable large learning rate SGD to close most of the gap. 

\begin{figure}[H]
    \centering
\includegraphics[width=0.65\linewidth]{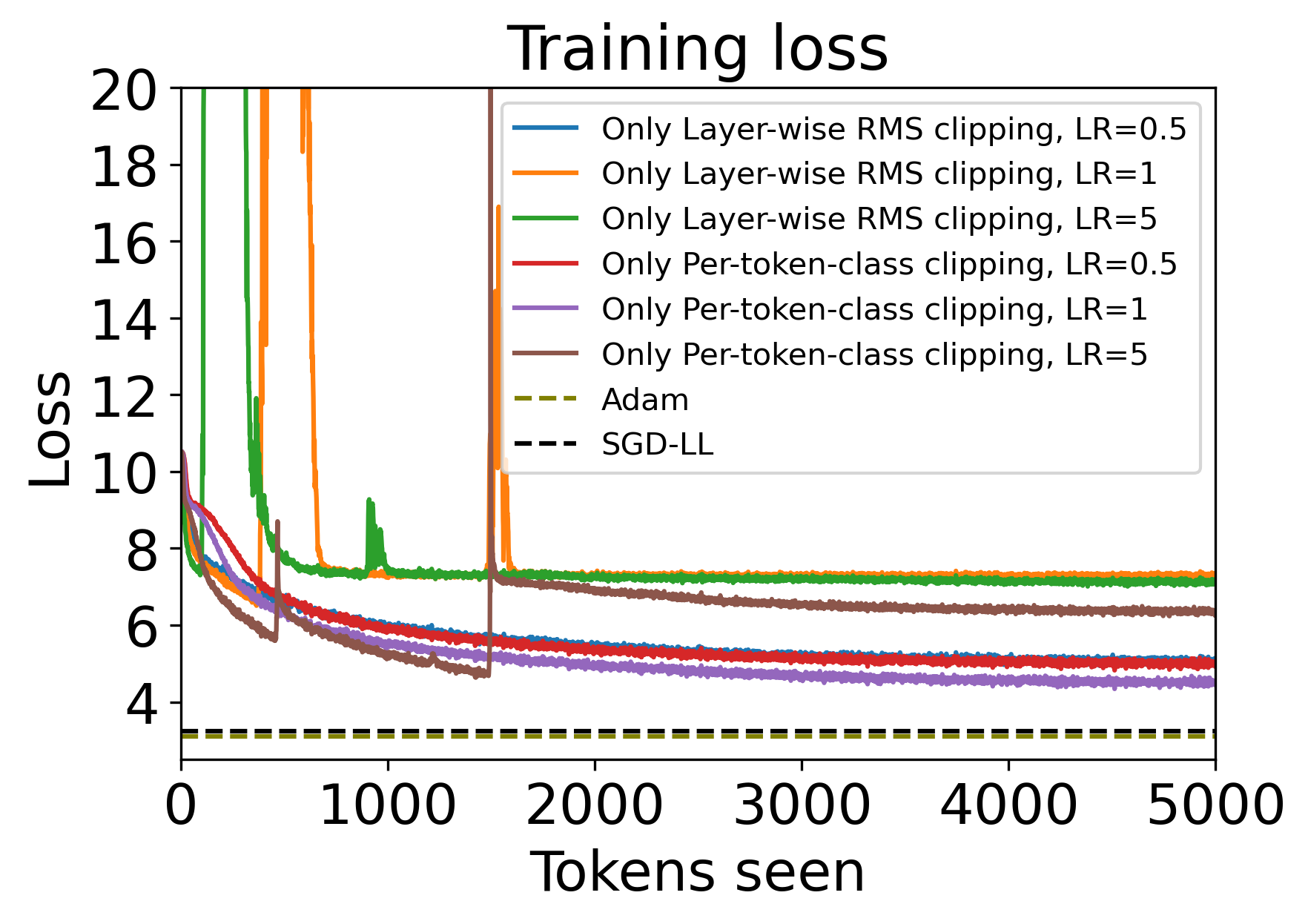}
\vspace{-.25cm}
    \caption{Training loss figure of 130M LLaMA on C4 pre-trained using   SGD with either Layer-wise update RMS Norm clipping or output layer per-token-class clipping. We observe that in either case increasing the learning rate higher than 1 leads to loss spikes and divergence. }
    \label{fig:abl_clipping_loss}
\vspace{-.5cm}
\end{figure}

\subsection{Further tuning of Adam hyperparameters}

Below we show that further tuning of Adam hyperparameters ($\beta_2$ and weight decay) has very limited impact on its performance and does not substantially change the observed gap. Such small differences are mainly relevant for the design  of new state-of-the-art optimizers which is not the focus of our work.  

\textbf{Ablation on $\beta_2$ of Adam:}

\begin{table}[H]
\caption{Tuning the $\beta_2$ hyperparameter of Adam on the 130M model with 0.5M token batch size and learning rate 3e-3. Using $\beta_2=0.95$ gives the best validation loss (which is the value used in the main experiments). 
}
\label{tab:130m_beta_2_abl}
\centering
\small
\resizebox{0.4\textwidth}{!}{%
\begin{tabular}{cccccc}
\toprule
$\beta_2$   & 0.9  & 0.95  & 0.98  & 0.99   & 0.999  \\
\midrule
Val. Loss            &  3.115    & 3.112 & 3.114 & 3.114  &  3.145  \\
\bottomrule
\end{tabular}
}
\end{table}

\textbf{Ablation on weight decay:}

\begin{table}[H]
\caption{Tuning the weight decay ($wd$) hyperparameter of Adam(W) on the 130M model with 0.5M token batch size and learning rate 3e-3. Note that when $wd=0$, AdamW practically reduces to Adam. We observe that the use of weight decay can give a small performance improvement, however its effect is minor (only about 0.5\% improvement in loss).}
\vspace{5pt}
\label{tab:130m_adamw_abl}
\centering
\small
\resizebox{0.45\textwidth}{!}{%
\begin{tabular}{ccccccc}
\toprule
$wd$             & 0.0      & 0.01  &  0.05 & 0.1     & 0.2   & 0.5  \\
\midrule
Val. Loss             &  3.112    & 3.110 & 3.103 & 3.100 & 3.096  & 3.106  \\
\bottomrule
\end{tabular}
}
\end{table}

\subsection{Experiments with GPT-2 architecture}

\begin{table}[h]
\centering
\resizebox{0.55\textwidth}{!}{%
\begin{tabular}{lcc}
\toprule
   & \textbf{Val. Loss} & \textbf{Relative Loss Increase from Adam} \\
\midrule
Adam     & 3.34 & 0\% \\
\ourname{}  & 3.60 & 7.78\% \\
SGD      & 5.00 & 49.70\% \\
\bottomrule
\end{tabular}
}
\vspace{5pt}
\caption{Validation loss comparison pre-training GPT-2 with 124M parameters on FineWeb (2.6B tokens). We observe a similar pattern with the main experiments, where the \ourname{} has significantly closer loss to Adam than SGD.}
\end{table}

\subsection{Multiple Random Seed Runs}

\label{sec:seeds}

We extend our results of 130M model in Table 3 with different random seed by the following table that records the last iterate validation loss. Every run uses the same hyperparameters as listed in \ref{apx:experimental_settings}.

\begin{table}[h]
\centering
\begin{tabular}{lcccc}
\toprule
\textbf{Seed} & \textbf{42} & \textbf{1042} & \textbf{2042} & \textbf{3042} \\
\midrule
Adam        & 3.11 & 3.11 & 3.11 & 3.11 \\
SGD         & 5.07 & 6.13 & 5.05 & 5.05 \\
\ourname{}  & 3.24 & 3.26 & 3.25 & 3.24 \\
\bottomrule
\end{tabular}
\vspace{5pt}
\caption{Validation loss across random seeds. We observe minimal variance for Adam and \ourname{}, while for SGD there is one run that yield even higher loss. This validates that our main results about reducing the gap are consistent and not affected by experimental noise.}
\label{tab:seed_results}
\end{table}

\section{Further Connections Between the Large Weight-SG Ratio and Theoretical Analysis}\label{apx:theory_justification}
In this section, we provide more evidence in relation between Theorem \ref{thm:full_grad_mainbody_unified} and the large weight-SG ratio in LLM training. 

We start by checking that the terms $\varepsilon_j, p_j(\xi), c_q$ in Theorem \ref{thm:full_grad_mainbody_unified} are indeed small if we choose an appropriate partition $\mathcal{H}$ and $\mathcal{T}$. In particular, we check the per-token-class loss and the token frequencies during training. In particular, we separate the tokens into ten quantiles, with quantile 0 containing the highest-frequency tokens and quantile 9 containing the lowest-frequency tokens. For each iteration, we count the token occurrence in the mini-batch and demonstrate the average per-token loss from each quantile. Figure~\ref{fig:quantile_loss} illustrates the per-token loss and Figure~\ref{fig:quantile_count} illustrates the token counts across different frequency quantiles. In this example, we can cluster tokens from quantile 0--7 into the head set $\mathcal{H}$, exhibiting higher frequencies and lower loss, while clustering tokens from quantile 8--9 into the tail set $\mathcal{T}$, exhibiting lower frequencies and higher loss. Therefore, we can apply the bias-variance decomposition \eqref{eq:tokenwise_grad_batch_sec_mainbody} to achieve a smaller upper bound as the head tokens are better-trained. Besides, we can apply the background-hit decomposition \eqref{eq:tokenwise_grad_rare_mainbody} to the tail tokens to achieve a smaller upper bound since they are rare (i.e., small $q_j$) and under-trained (i.e., small $p_j(\xi)$).

\begin{figure}[H]
    \centering
\includegraphics[width=0.6\linewidth]{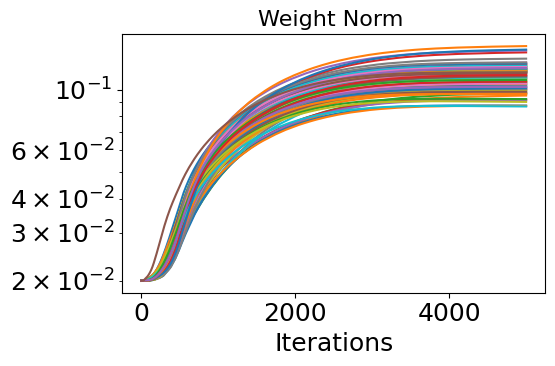}
\vspace{-.25cm}
    \caption{The  RMS norms of the weight matrices during training.}
    \label{fig:weight_norm}
\vspace{-.5cm}
\end{figure}

\begin{figure}
    \centering
    \includegraphics[width=0.85\linewidth]{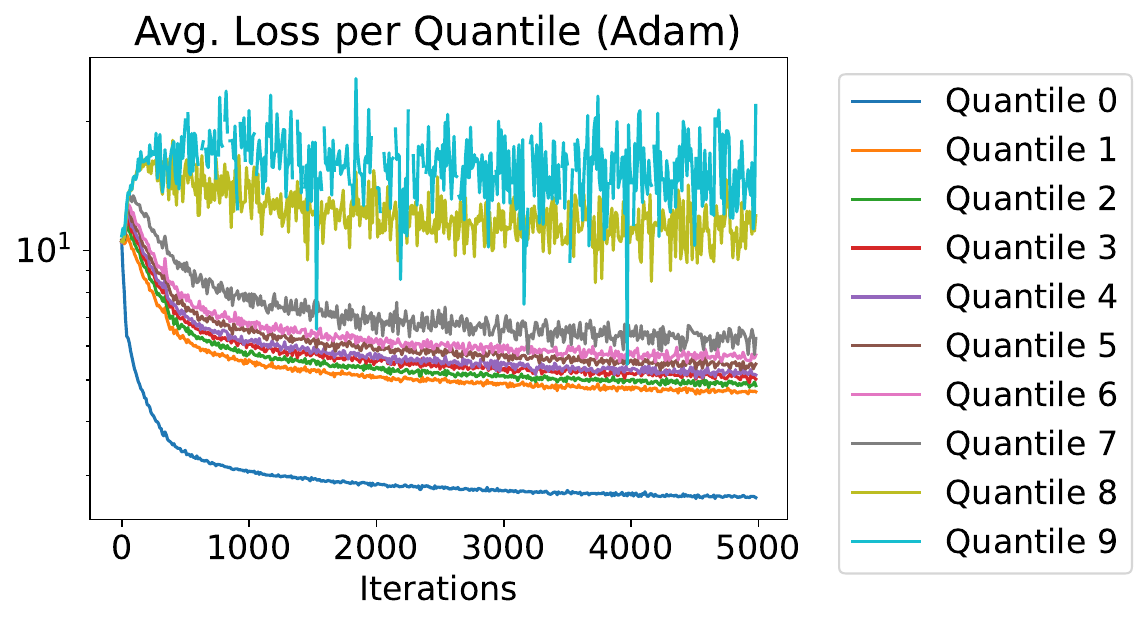}\\
\vspace{-.25cm}
    \caption{Average loss of the tokens with different frequency quantiles, with Quantile 0 representing the top-10\% most frequent token classes and Quantile 9 representing the bottom-10\% least frequent token classes. High-frequency tokens exhibits significantly lower loss.} 
    \label{fig:quantile_loss}
        \vspace{-0.5cm}
\end{figure}

By the discussion above, we conclude that $\|G\|_F^2$ is small in LLM training. On the other hand, the norm of the weight matrix is typically lower bounded (see Figure~\ref{fig:weight_norm}). Therefore,
the weight-SG ratio becomes large. Moreover, we note that the upper bound \eqref{eq:full_grad_upper_bound_universal_unified} is a decreasing function w.r.t. $B$. Therefore, we expect that the weight-SG ratio should increase as we enlarge the batch size, which is indeed consistent with the observation in Figure~\ref{fig:w_sg_ratio_training}(b) and Table~\ref{tab:weight_sg_ratio_stat}.

\begin{figure}
    \centering
    \includegraphics[width=0.7\linewidth]{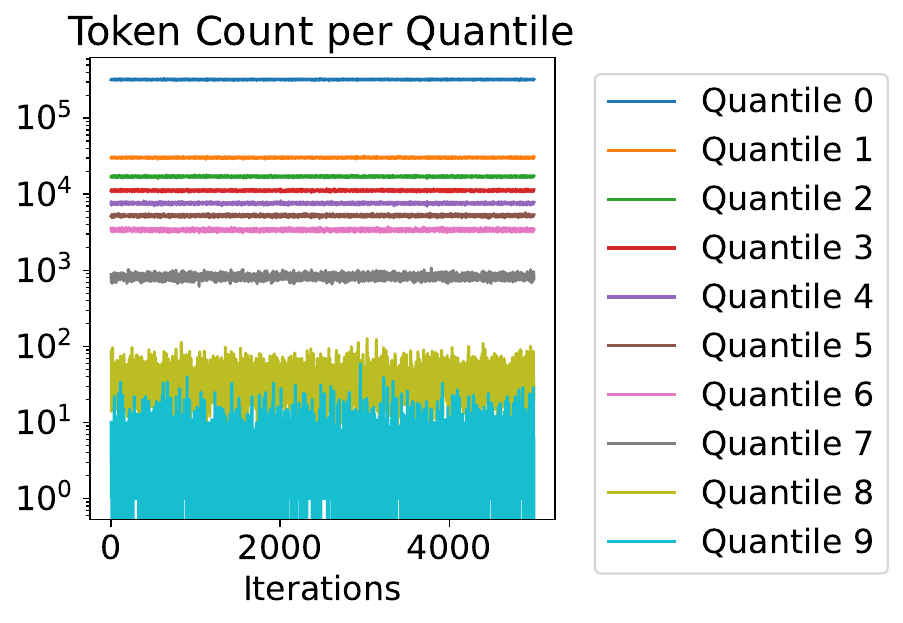}
\vspace{-.25cm}
    \caption{Token count in one batch on different frequency quantiles, with Quantile 0 representing the top-10\% most frequent token classes and Quantile 9 representing the bottom-10\% least frequent token classes. Low-frequency tokens exhibits substantially rare occurrence.}
    \label{fig:quantile_count}
    \vspace{-0.5cm}
\end{figure}

The large weight-SG norm ratio implies that a small choice of learning rate would limit the optimization algorithm's solution space of ${\bf w}^t$ to a small neighborhood around the initialization state ${\bf w}^0$. It is evident that the usual learning rate choice of SGD in the large batch setting will leave a significant performance gap to adaptive algorithms \citep{pan2023toward, marek2025small,sreckovic2025your}. We argue that the majority of such observed performance gap is due to the relatively small choice of learning rate, restricting SGD to get stuck at a local region near the initialization.

\begin{table}
\caption{Statistics of weight-SG ratio over the training trajectory of different batch sizes in Figure \ref{fig:w_sg_ratio_training}(b). As batch size increases, the weight-SG ratio increases.}
\label{tab:weight_sg_ratio_stat}
\centering
\small
\begin{tabular}{l|ccc}
\toprule
{\bf Batch Size} & {\bf Min}  & {\bf Max} & {\bf Average} \\
\midrule
$B=64$  & $ 1.5 \times 10^3$ & $ 1.9 \times 10^5$ & $ 5.9 \times 10^4$ \\
$B=2048$  & $ 2.3 \times 10^3$ & $ 2.2 \times 10^6$ & $ 2.3 \times 10^5$  \\
$B=65536$  & $ 2.6 \times 10^3$ & $ 4.9 \times 10^6$ & $ 5.8 \times 10^5$  \\
\bottomrule

\end{tabular} \vspace{-0.5cm}
\end{table}

\section{Characterization of the Output-Layer Gradient Norm}\label{apx:proof}

In this section, we characterize the gradient norm of the output-layer weight matrix from a token-wise perspective. We begin by analyzing the per-token-class gradient. In particular, by decomposing the per-token-class gradient in different ways, we obtain different bounds that apply to different regimes. We then study how gradient behavior depends on token heterogeneity. Specifically, we show that the per-token-class gradient norm is (i) positively correlated with token frequency in the early stage of training, and (ii) negatively correlated with the batch size in the late stage of training. Details are provided below.

Let $V$ denote the vocabulary size and $\Delta^{V-1}$ denote the unit simplex in $\mathbb{R}^V$. The weight matrix of the output-layer is $W \in \mathbb{R}^{V \times d}$, and its $j$-th row is denoted by $w_j^\top$.
Given a hidden representation $h \in \mathbb{R}^d$, the logits and the predictive distribution are

\[
z = Wh, \qquad p(h) = \mathrm{softmax}(Wh) \in \Delta^{V-1},
\]
where $p_j(h)$ denotes the $j$-th coordinate.
For a true label $y \in \{1,\dots,V\}$, the cross-entropy loss is
\[
\ell(h,y) = -\log p_y(h).
\]

Therefore, the gradient with respect to the $j$-th row $w_j$ for a single sample $(h,y)$ is
\begin{equation}\label{eq:tokenwise_grad}
    g_j(h,y) \coloneqq \nabla_{w_j} \ell(h,y)
= \bigl(p_j(h) - \mathbf{1}\{y=j\}\bigr)\,h \in \mathbb{R}^d .
\end{equation}

Define $r_j(h,y)\coloneqq p_j(h)-\mathbf{1}\{y=j\}$. Then we have $g_j(h,y)=r_j(h,y)h$.

Let $(h,y)\sim \mathcal{D}$ be a generic data-generating distribution and let $\{(h_b,y_b)\}_{b=1}^B$ be a batch sampled i.i.d.\ from $\mathcal{D}$. The batch-averaged gradient over $w_j$ is thus
\[
\bar g_j \coloneqq \frac{1}{B}\sum_{b=1}^B g_j(h_b,y_b).
\]

We allow the conditional label distribution to depend on $h$:
\begin{equation}\label{eq:tokenwise_true_prob}
    q_j(h) \coloneqq \mathbb{P}(y=j\mid h), \qquad q(h)\in\Delta^{V-1}.
\end{equation}

The factorized structure of the per-token-class gradient allows us to separate it into a bias term driven by the model–data mismatch $p_j(h)-q_j(h)$ and a variance term induced by the label randomness.

\begin{lemma}\label{lem:tokenwise_grad_exp}
For any fixed $h$, we have
\[
\mathbb{E}[r_j(h,y)\mid h] = p_j(h)-q_j(h), \qquad 
\mathbb{E}[g_j(h,y)\mid h] = \bigl(p_j(h)-q_j(h)\bigr)\,h .
\]
\end{lemma}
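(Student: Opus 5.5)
The plan is to condition on $h$ and use linearity of conditional expectation. With $h$ held fixed, the only randomness left in $r_j(h,y)=p_j(h)-\mathbf{1}\{y=j\}$ is in the label $y$. This works because $p_j(h)=\mathrm{softmax}(Wh)_j$ is a deterministic function of $h$, since $W$ is fixed at the current iterate.

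First, I would write
\[
\mathbb{E}[r_j(h,y)\mid h]=p_j(h)-\mathbb{E}[\mathbf{1}\{y=j\}\mid h].
\]
Then I would identify the conditional expectation of the indicator with the conditional probability:
\[
\mathbb{E}[\mathbf{1}\{y=j\}\mid h]=\mathbb{P}(y=j\mid h)=q_j(h).
\]
This is the definition in \eqref{eq:tokenwise_true_prob}, and it gives the first identity.

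For the second identity, I would use the factorization $g_j(h,y)=r_j(h,y)\,h$ from \eqref{eq:tokenwise_grad}. Since $h$ is measurable with respect to the conditioning, it can be pulled out of the conditional expectation ("taking out what is known"):
\[
\mathbb{E}[g_j(h,y)\mid h]=h\,\mathbb{E}[r_j(h,y)\mid h]=\bigl(p_j(h)-q_j(h)\bigr)h.
\]

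There is no real obstacle here. The only technical point is integrability, so that the conditional expectations are well defined. The factor $r_j$ is bounded in $[-1,1]$, so $\mathbb{E}[r_j\mid h]$ always exists. For $g_j$, I would either assume $\mathbb{E}\|h\|<\infty$ or note that the statement is made for fixed $h$, in which case it is just an expectation over $y\sim q(h)$ of a bounded vector.
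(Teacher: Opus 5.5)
Your proposal is correct and follows the paper's own argument: substitute $g_j(h,y)=(p_j(h)-\mathbf{1}\{y=j\})h$, use $\mathbb{E}[\mathbf{1}\{y=j\}\mid h]=q_j(h)$, and pull the $h$-measurable factors out of the conditional expectation. Your integrability remark is a small extra the paper leaves implicit.
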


\begin{proof}
By \eqref{eq:tokenwise_grad} and \eqref{eq:tokenwise_true_prob}, we have $g_j(h,y)=(p_j(h)-\mathbf{1}\{y=j\})h$ and
$\mathbb{E}[\mathbf{1}\{y=j\}\mid h]=q_j(h)$. Substituting these identities yields the claim.
\end{proof}

\begin{lemma}
For any fixed $h$,
\begin{equation}\label{eq:tokenwise_residual}
    \mathbb{E}[r_j(h,y)^2\mid h]
= q_j(h) - 2q_j(h)p_j(h) + p_j(h)^2 .
\end{equation}
Consequently,
\begin{equation}\label{eq:tokenwise_grad_second_moment}
\mathbb{E}\bigl[\|g_j(h,y)\|^2 \mid h\bigr]
= \bigl(q_j(h) - 2q_j(h)p_j(h) + p_j(h)^2\bigr)\,\|h\|^2 .
\end{equation}
\end{lemma}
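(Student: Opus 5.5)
The plan is a direct conditional computation. We condition on $h$ and use the fact that $\mathbf{1}\{y=j\}$ is a Bernoulli random variable with parameter $q_j(h)$, by the definition \eqref{eq:tokenwise_true_prob}. First we expand the square of the residual:
\[
r_j(h,y)^2 = p_j(h)^2 - 2p_j(h)\mathbf{1}\{y=j\} + \mathbf{1}\{y=j\}^2 .
\]
The key observation is that an indicator is idempotent, so $\mathbf{1}\{y=j\}^2=\mathbf{1}\{y=j\}$.

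Next we take conditional expectation given $h$. The quantity $p_j(h)$ is a deterministic function of $h$, so it can be pulled out of the expectation. Using $\mathbb{E}[\mathbf{1}\{y=j\}\mid h]=q_j(h)$, as already used in Lemma~\ref{lem:tokenwise_grad_exp}, linearity gives
\[
\mathbb{E}[r_j^2\mid h] = p_j(h)^2 - 2p_j(h)q_j(h) + q_j(h).
\]
This is exactly \eqref{eq:tokenwise_residual}.

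For the consequence \eqref{eq:tokenwise_grad_second_moment}, we use $g_j(h,y)=r_j(h,y)h$, where $r_j$ is a scalar. This gives $\|g_j(h,y)\|^2=r_j(h,y)^2\|h\|^2$. Since $\|h\|^2$ is determined by $h$, it factors out of the conditional expectation, and we then substitute \eqref{eq:tokenwise_residual}.

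There is no real obstacle here; the only points needing care are the idempotence of the indicator and the measurability of $p_j(h)$ and $\|h\|$ with respect to $h$.

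As a sanity check we will note an equivalent form. The expression equals $q_j(h)(1-q_j(h)) + (p_j(h)-q_j(h))^2$, i.e., the conditional variance plus the squared bias. This is the form later used in the bias--variance decomposition \eqref{eq:tokenwise_grad_batch_sec_mainbody}.
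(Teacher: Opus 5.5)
Your proposal is correct and is essentially the paper's proof. The paper computes $\mathbb{E}[r_j(h,y)^2\mid h]$ by splitting on the Bernoulli outcome: $r_j=p_j(h)-1$ with probability $q_j(h)$, and $r_j=p_j(h)$ otherwise. You instead expand the square and use $\mathbf{1}\{y=j\}^2=\mathbf{1}\{y=j\}$, which is the same one-line computation, and your step $\|g_j\|^2=r_j^2\|h\|^2$ for the consequence matches the paper exactly.
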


\begin{proof}
Conditioned on $h$, the event $\{y=j\}$ occurs with probability $q_j(h)$,
in which case $r_j(h,y)=p_j(h)-1$; otherwise $r_j(h,y)=p_j(h)$.
Computing the conditional expectation gives 
\begin{equation*}
    \mathbb{E}[r_j(h,y)^2\mid h]
= q_j(h)\bigl(1-p_j(h)\bigr)^2 + \bigl(1-q_j(h)\bigr)p_j(h)^2
= q_j(h) - 2q_j(h)p_j(h) + p_j(h)^2 .
\end{equation*}
Since $g_j(h, y)=r_j(h,y)h$, \eqref{eq:tokenwise_grad_second_moment} follows immediately.
\end{proof}

\begin{lemma}\label{lem:tokenwise_grad_cov}
The conditional covariance of $g_j(h,y)$ given $h$ is
\begin{equation}\label{eq:tokenwise_grad_cov}
    \mathrm{Cov}\bigl(g_j(h,y)\mid h\bigr)
= q_j(h)\bigl(1-q_j(h)\bigr)\, h h^\top .
\end{equation}
\end{lemma}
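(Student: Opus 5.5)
The plan is to compute the conditional covariance directly from the definition $\mathrm{Cov}(X\mid h)=\mathbb{E}[XX^\top\mid h]-\mathbb{E}[X\mid h]\mathbb{E}[X\mid h]^\top$, applied to $X=g_j(h,y)=r_j(h,y)\,h$. The key observation is that, conditioned on $h$, the vector $h$ is deterministic. All randomness therefore sits in the scalar $r_j(h,y)=p_j(h)-\mathbf{1}\{y=j\}$, and $p_j(h)$ is itself a deterministic function of $h$. So the only random quantity is the Bernoulli indicator $\mathbf{1}\{y=j\}$, which has success probability $q_j(h)$ by \eqref{eq:tokenwise_true_prob}.

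The argument has three steps.

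\textbf{Step 1: factor out $h$.} Because $h$ is fixed given $h$, we can pull it out of both moments:
\[
\mathbb{E}[g_jg_j^\top\mid h]=\mathbb{E}[r_j^2\mid h]\,hh^\top,\qquad \mathbb{E}[g_j\mid h]\,\mathbb{E}[g_j\mid h]^\top=\bigl(\mathbb{E}[r_j\mid h]\bigr)^2hh^\top .
\]
Hence $\mathrm{Cov}(g_j\mid h)=\mathrm{Var}(r_j\mid h)\,hh^\top$.

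\textbf{Step 2: compute the scalar variance.} Lemma~\ref{lem:tokenwise_grad_exp} gives $\mathbb{E}[r_j\mid h]=p_j(h)-q_j(h)$, and \eqref{eq:tokenwise_residual} gives $\mathbb{E}[r_j^2\mid h]=q_j(h)-2q_j(h)p_j(h)+p_j(h)^2$. Subtracting the square of the mean, the terms $p_j^2$ and $-2p_jq_j$ cancel, leaving
\[
\mathrm{Var}(r_j\mid h)=q_j(h)-q_j(h)^2=q_j(h)\bigl(1-q_j(h)\bigr).
\]
Alternatively, one can note that a deterministic shift by $p_j(h)$ does not change the variance, so $\mathrm{Var}(r_j\mid h)=\mathrm{Var}(\mathbf{1}\{y=j\}\mid h)$, which is the Bernoulli variance $q_j(h)(1-q_j(h))$.

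\textbf{Step 3: substitute.} Plugging the scalar variance into the identity from Step~1 yields \eqref{eq:tokenwise_grad_cov}.

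There is no real obstacle here. The only point needing care is to state explicitly that $p_j(h)$ and $h$ are measurable with respect to the conditioning, which is what justifies factoring them out of the conditional expectations.
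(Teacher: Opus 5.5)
Your proposal is correct and follows the same route as the paper's proof: compute $\mathrm{Var}(r_j(h,y)\mid h)=q_j(h)(1-q_j(h))$ from Lemma~\ref{lem:tokenwise_grad_exp} and \eqref{eq:tokenwise_residual}, then factor out $hh^\top$ since $h$ is fixed under the conditioning. You spell out the moment factorization and the measurability of $p_j(h)$ more explicitly, and you add the Bernoulli-shift remark, but the argument is the same.
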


\begin{proof}
We first notice that 
\begin{equation*}
    \mathrm{Var}(r_j(h,y) \mid h)=\mathbb{E}[r_j(h,y)^2\mid h]-\mathbb{E}[r_j(h,y)\mid h]^2=q_j(h)(1-q_j(h)).
\end{equation*}
Since $g_j(h,y)=r_j(h,y)h$ and $h$ is fixed under conditioning,
\[
\mathrm{Cov}\bigl(g_j(h,y)\mid h\bigr)
= \mathrm{Var}(r_j(h,y)\mid h)\, h h^\top
= q_j(h)\bigl(1-q_j(h)\bigr)\, h h^\top .
\]
\end{proof}

Lemmas~\ref{lem:tokenwise_grad_exp}--\ref{lem:tokenwise_grad_cov} characterize the single-sample per-token-class gradient. We next derive bounds for the batch-averaged per-token-class gradient $\bar g_j$ under different decompositions and regimes.

\textbf{Case 1: Well-Trained Regime.} 

Assume that token $j$ is well trained so that $p_j(h)$ is sufficiently close to $q_j(h)$. In this case, $\mathbb{E}[g_j]$ is small, and the batch-averaged gradient norm is dominated by the covariance term. We begin with the standard bias--variance decomposition.

\begin{lemma}\label{lem:tokenwise_grad_decomposition}
Assume that $\{(h_b,y_b)\}_{b=1}^B$ are sampled i.i.d.\ from $\mathcal{D}$, and that
$\mathbb{E}\|h\|^2<\infty$. Let $\mu_j\coloneqq \mathbb{E}[g_j]$. Then
\begin{equation}\label{eq:tokenwise_grad_batch_sec}
    \mathbb{E}\bigl[\|\bar g_j\|^2\bigr]
= \|\mu_j\|^2 + \mathrm{tr}\bigl(\mathrm{Cov}(\bar g_j)\bigr),
\end{equation}
where
\begin{equation}\label{eq:tokenwise_grad_batch_cov}
\mathrm{Cov}(\bar g_j)
= \frac{1}{B}\Bigl(
\mathbb{E}\bigl[q_j(h)(1-q_j(h))\,hh^\top\bigr]
+ \mathrm{Cov}\bigl((p_j(h)-q_j(h))h\bigr)
\Bigr).
\end{equation}
\end{lemma}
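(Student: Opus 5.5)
The statement to prove is Lemma~\ref{lem:tokenwise_grad_decomposition}. The proof is a standard i.i.d.\ bias--variance computation, finished by the law of total covariance together with Lemmas~\ref{lem:tokenwise_grad_exp} and~\ref{lem:tokenwise_grad_cov}.

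\textbf{Step 1: the decomposition \eqref{eq:tokenwise_grad_batch_sec}.} Since $\mathbb{E}\|h\|^2<\infty$ and $|r_j|\le 1$, we have $\mathbb{E}\|g_j\|^2\le \mathbb{E}\|h\|^2<\infty$, so all second moments below are finite. By linearity and identical distribution of the samples, $\mathbb{E}[\bar g_j]=\mu_j$. Expanding $\|\bar g_j\|^2=\|(\bar g_j-\mu_j)+\mu_j\|^2$ and taking expectations, the cross term vanishes. This leaves
\[
\mathbb{E}\|\bar g_j\|^2=\|\mu_j\|^2+\mathbb{E}\|\bar g_j-\mu_j\|^2=\|\mu_j\|^2+\mathrm{tr}\bigl(\mathrm{Cov}(\bar g_j)\bigr).
\]

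\textbf{Step 2: reduce to a single sample.} The summands $g_j(h_b,y_b)$ are independent and identically distributed, so $\mathrm{Cov}(\bar g_j)=\frac{1}{B^2}\sum_{b}\mathrm{Cov}(g_j(h_b,y_b))=\frac1B\mathrm{Cov}(g_j(h,y))$. It therefore suffices to identify the single-sample covariance.

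\textbf{Step 3: single-sample covariance.} Apply the law of total covariance, conditioning on $h$:
\[
\mathrm{Cov}(g_j)=\mathbb{E}\bigl[\mathrm{Cov}(g_j\mid h)\bigr]+\mathrm{Cov}\bigl(\mathbb{E}[g_j\mid h]\bigr).
\]
Lemma~\ref{lem:tokenwise_grad_cov} gives the first term as $\mathbb{E}[q_j(h)(1-q_j(h))hh^\top]$. Lemma~\ref{lem:tokenwise_grad_exp} gives $\mathbb{E}[g_j\mid h]=(p_j(h)-q_j(h))h$, so the second term is $\mathrm{Cov}((p_j(h)-q_j(h))h)$. Substituting into Step 2 yields \eqref{eq:tokenwise_grad_batch_cov}.

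The only point needing care is integrability, so that the law of total covariance and the cross-term cancellation are justified. This follows from the bounds $|p_j-q_j|\le1$ and $|r_j|\le1$ combined with $\mathbb{E}\|h\|^2<\infty$. Beyond that check, there is no real obstacle.
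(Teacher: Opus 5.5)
Your proposal is correct and follows essentially the same route as the paper: the bias--variance split, $\mathrm{Cov}(\bar g_j)=\frac1B\mathrm{Cov}(g_j)$ from independence, and the law of total covariance conditioned on $h$ combined with Lemmas~\ref{lem:tokenwise_grad_exp} and~\ref{lem:tokenwise_grad_cov}. Your integrability check via $|r_j|\le 1$ and $|p_j-q_j|\le 1$ fills in a detail the paper leaves implicit.
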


\begin{proof}
Since $\{(h_b,y_b)\}_{b=1}^B$ are i.i.d.\ sampled, we have
\[
\mathbb{E}[\bar g_j] = \mathbb{E}[g_j], \qquad
\mathrm{Cov}(\bar g_j)=\frac{1}{B}\,\mathrm{Cov}(g_j).
\]
Then \eqref{eq:tokenwise_grad_batch_sec} follows the standard bias--variance decomposition, and \eqref{eq:tokenwise_grad_batch_cov} follows from the law of total variance applied to $g_j$ conditional on $h$, together with Lemma \ref{lem:tokenwise_grad_exp} and \ref{lem:tokenwise_grad_cov}.
\end{proof}

Given the representation of the per-token-class gradient in Lemma \ref{lem:tokenwise_grad_decomposition}, we are now able to bound its norm.

\begin{proposition}[Well Trained Regime Scaling]\label{prop:grad_head}
Assume $\mathbb{E}\|h\|^2<\infty$. Suppose that there exists $\varepsilon\ge 0$ such that
\[
\mathbb{E}\bigl[\|(p_j(h)-q_j(h))h\|^2\bigr]\le \varepsilon^2 .
\]
Then
\[
\mathbb{E}\|\bar g_j\|^2
\le \Bigl(1+\frac{1}{B}\Bigr)\varepsilon^2
+\frac{1}{B}\,\mathbb{E}\bigl[q_j(h)\bigl(1-q_j(h)\bigr)\|h\|^2\bigr].
\]

\end{proposition}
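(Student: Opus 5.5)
The plan is to start from the exact decomposition in Lemma~\ref{lem:tokenwise_grad_decomposition} and bound its two pieces separately. We have $\mathbb{E}\|\bar g_j\|^2 = \|\mu_j\|^2 + \mathrm{tr}(\mathrm{Cov}(\bar g_j))$. By Lemma~\ref{lem:tokenwise_grad_exp} and the tower property, $\mu_j = \mathbb{E}[(p_j(h)-q_j(h))h]$. Jensen's inequality (convexity of $\|\cdot\|^2$) then gives
\[
\|\mu_j\|^2 \le \mathbb{E}\|(p_j(h)-q_j(h))h\|^2 \le \varepsilon^2 .
\]
This produces the leading $\varepsilon^2$ term.

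For the covariance, take the trace of \eqref{eq:tokenwise_grad_batch_cov} and use linearity of trace and expectation. The first part is
\[
\mathrm{tr}\,\mathbb{E}[q_j(1-q_j)hh^\top] = \mathbb{E}[q_j(h)(1-q_j(h))\|h\|^2].
\]
For the second part, set $X=(p_j(h)-q_j(h))h$ and use $\mathrm{tr}\,\mathrm{Cov}(X) = \mathbb{E}\|X\|^2 - \|\mathbb{E}X\|^2 \le \mathbb{E}\|X\|^2 \le \varepsilon^2$. Dividing by $B$ and adding the bias bound gives
\[
\Bigl(1+\tfrac1B\Bigr)\varepsilon^2 + \tfrac1B\,\mathbb{E}\bigl[q_j(h)(1-q_j(h))\|h\|^2\bigr],
\]
which is the claimed bound.

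The only step needing care is finiteness, so that the covariance and trace manipulations are legitimate. Since $p_j,q_j\in[0,1]$, every random vector involved is bounded in norm by $\|h\|$. The assumption $\mathbb{E}\|h\|^2<\infty$ therefore guarantees that all second moments exist. I expect no real obstacle: the argument reduces to Jensen's inequality plus dropping the nonnegative term $\|\mathbb{E}X\|^2$.
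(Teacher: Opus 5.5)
Your proposal is correct and follows essentially the same route as the paper. You start from the bias--variance identity of Lemma~\ref{lem:tokenwise_grad_decomposition}, bound $\|\mu_j\|^2\le\varepsilon^2$ by Jensen, take the trace of the covariance, and drop $\|\mathbb{E}X\|^2$ to get $\mathrm{tr}\,\mathrm{Cov}\bigl((p_j(h)-q_j(h))h\bigr)\le\varepsilon^2$. Your integrability remark (every vector is dominated in norm by $\|h\|$ since $p_j,q_j\in[0,1]$) adds a small justification that the paper leaves implicit.
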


\begin{proof}
From Lemma \ref{lem:tokenwise_grad_decomposition} we have
\[
\operatorname{Cov}(\bar g_j)
=\frac{1}{B}\Bigl(
\mathbb{E}\bigl[q_j(h)(1-q_j(h))\,hh^\top\bigr]
+\operatorname{Cov}\bigl((p_j(h)-q_j(h))h\bigr)
\Bigr).
\]
Taking traces on $\operatorname{Cov}(\bar g_j)$ gives
\begin{equation}\label{eq:cor1_trace_cov}
\operatorname{tr}\bigl(\operatorname{Cov}(\bar g_j)\bigr)
=\frac{1}{B}\Bigl(
\mathbb{E}\bigl[q_j(h)(1-q_j(h))\,\|h\|^2\bigr]
+\operatorname{tr}\bigl(\operatorname{Cov}\bigl((p_j(h)-q_j(h))h\bigr)\bigr)
\Bigr).
\end{equation}
Recall that $\mu_j=\mathbb{E}[g_j]=\mathbb{E}[(p_j(h)-q_j(h))h]$, hence by Jensen's inequality,
\begin{equation}\label{ineq:cor1_grad_exp}
\|\mu_j\|^2=\|\mathbb{E}[(p_j(h)-q_j(h))h]\|^2\le \mathbb{E}\|(p_j(h)-q_j(h))h\|^2\le \varepsilon^2.
\end{equation}
Furthermore, we have
\begin{equation}\label{ineq:cor1_trace_cov}
    \operatorname{tr}(\operatorname{Cov}\bigl((p_j(h)-q_j(h))h\bigr))
=\mathbb{E}\|(p_j(h)-q_j(h))h\|^2-\|\mathbb{E}[(p_j(h)-q_j(h))h]\|^2
\le \mathbb{E}\|(p_j(h)-q_j(h))h\|^2\le \varepsilon^2.
\end{equation}

Combining \eqref{eq:tokenwise_grad_batch_sec}, \eqref{eq:cor1_trace_cov}, \eqref{ineq:cor1_grad_exp} and \eqref{ineq:cor1_trace_cov} yields
\[
\mathbb{E}\|\bar g_j\|^2
\le \varepsilon^2
+\frac{1}{B}\Bigl(
\mathbb{E}[q_j(h)(1-q_j(h))\|h\|^2]+\varepsilon^2
\Bigr),
\]
which proves the claim.
\end{proof}

\textbf{Remark:} In particular, if $\varepsilon$ is negligible, then $\mathbb{E}\|\bar g_j\|^2 = O(1/B)$. The per-token-class gradient norm $\mathbb{E}\|\bar g_j\|$ is therefore in a $O(1/\sqrt{B})$ scaling in this case.

\textbf{Case 2: Rare Tokens.}

Assume that token $j$ has low marginal frequency. In this case, the gradient norm can be characterized via a different decomposition. Specifically, the batch-averaged gradient admits

\begin{equation}\label{eq:tokenwise_grad_rare}
\bar g_j=\frac{1}{B}\sum_{b=1}^B (p_j(h_b)-\mathbf{1}\{y_b=j\})h_b
=\underbrace{\frac{1}{B}\sum_{b=1}^B p_j(h_b)\,h_b}_{\text{``softmax background''}}-
\underbrace{\frac{1}{B}\sum_{b:y_b=j} h_b}_{\text{``label hits''}}.
\end{equation}

Since $p_j(h_b)$ is typically small for rare tokens in the under-trained regime, $\bar g_j$ is often dominated by the ``label hits'' term. In particular, we obtain the following bound.

\begin{proposition}[Rare Tokens]\label{prop:grad_tail}
Let $q_j \coloneqq \mathbb{P}(y=j)$. 
Then the batch-averaged per-token-class gradient satisfies
\begin{equation}
\label{eq:weak_bg_sharp_second_moment_bound_inline}
\mathbb{E}\|\bar g_j\|^2
\;\le\;
\frac{2q_j}{B}\,\mathbb{E}\bigl[\|h\|^2 \mid y=j\bigr]
\;+\;
2q_j^2\bigl\|\mathbb{E}[h \mid y=j]\bigr\|^2
\;+\;
2\mathbb{E}\bigl[p_j(h)^2\|h\|^2\bigr].
\end{equation}
\end{proposition}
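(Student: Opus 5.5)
We split $\bar g_j$ as in the background--hit decomposition \eqref{eq:tokenwise_grad_rare}, writing $\bar g_j = A - H$ with
\[
A \coloneqq \frac1B\sum_{b=1}^B p_j(h_b)\,h_b,
\qquad
H \coloneqq \frac1B\sum_{b=1}^B \mathbf 1\{y_b=j\}\,h_b .
\]
The elementary inequality $\|A-H\|^2\le 2\|A\|^2+2\|H\|^2$ gives
\[
\mathbb{E}\|\bar g_j\|^2\le 2\,\mathbb{E}\|A\|^2+2\,\mathbb{E}\|H\|^2 .
\]
The factor $2$ in front of the background term in \eqref{eq:weak_bg_sharp_second_moment_bound_inline} comes directly from this step. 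It then suffices to show $\mathbb{E}\|A\|^2\le \mathbb{E}[p_j(h)^2\|h\|^2]$ and $\mathbb{E}\|H\|^2\le \frac{q_j}{B}s_j^2+q_j^2\|m_j\|^2$, where $s_j^2=\mathbb{E}[\|h\|^2\mid y=j]$ and $m_j=\mathbb{E}[h\mid y=j]$.

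For the background term, $A$ is the empirical mean of the i.i.d.\ vectors $X_b=p_j(h_b)h_b$. By convexity of $\|\cdot\|^2$ (Jensen on the uniform average), $\|A\|^2\le \frac1B\sum_b\|X_b\|^2$. Taking expectations gives $\mathbb{E}\|A\|^2\le \mathbb{E}[p_j(h)^2\|h\|^2]$. Alternatively, the bias--variance split $\|\mathbb{E}X\|^2+\frac1B\mathrm{tr}\,\mathrm{Cov}(X)$ gives the same bound. No independence is needed for this step beyond identical distribution.

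For the label-hit term, $H$ is the empirical mean of the i.i.d.\ vectors $Y_b=\mathbf 1\{y_b=j\}h_b$. The bias--variance decomposition (as in Lemma~\ref{lem:tokenwise_grad_decomposition}) gives
\[
\mathbb{E}\|H\|^2=\|\mathbb{E}Y\|^2+\frac1B\,\mathrm{tr}\,\mathrm{Cov}(Y)\le \|\mathbb{E}Y\|^2+\frac1B\,\mathbb{E}\|Y\|^2 .
\]
Conditioning on the event $\{y=j\}$ yields $\mathbb{E}Y=q_j m_j$ and $\mathbb{E}\|Y\|^2=q_j s_j^2$, so $\mathbb{E}\|H\|^2\le q_j^2\|m_j\|^2+\frac{q_j}{B}s_j^2$. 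This step uses independence across $b$, so that the cross terms reduce to $\|\mathbb{E}Y\|^2$.

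Combining the two bounds with the factor $2$ gives \eqref{eq:weak_bg_sharp_second_moment_bound_inline}. We assume $\mathbb{E}\|h\|^2<\infty$ so that every quantity is finite; note that $p_j\le 1$ makes the background moment finite as well. No step is a real obstacle. The only care needed is in the hit term: we must identify $\mathbb{E}[\mathbf 1\{y=j\}h]=q_j m_j$ and $\mathbb{E}[\mathbf 1\{y=j\}\|h\|^2]=q_j s_j^2$ correctly via conditioning, and handle the degenerate case $q_j=0$, where both sides vanish trivially.
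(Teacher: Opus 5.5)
Your proposal is correct and matches the paper's proof essentially step for step. Both use the background--hit split with $\|a-b\|^2\le 2\|a\|^2+2\|b\|^2$, Jensen on the uniform average for the softmax background, and for the label hits the i.i.d.\ identity $\mathbb{E}\|H\|^2=\frac1B\mathbb{E}\|Y\|^2+\frac{B-1}{B}\|\mathbb{E}Y\|^2$ together with $\mathbb{E}Y=q_jm_j$ and $\mathbb{E}\|Y\|^2=q_js_j^2$; your bias--variance split is the same identity, and dropping $-\frac1B\|\mathbb{E}Y\|^2$ is the paper's $\frac{B-1}{B}\le 1$. One small correction: when $q_j=0$, ``both sides vanish'' holds only for the hit-term estimate, since the background term survives in the full bound.
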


\begin{proof}
We bound $\mathbb{E}\|\bar g_j\|^2$ using the decomposition in \eqref{eq:tokenwise_grad_rare}.

First, for the softmax background term, since $\{h_b\}_{b=1}^B$ are i.i.d. samples, we have 
\begin{equation}\label{ineq:cor2_mean}
\mathbb{E}\left\|\frac{1}{B}\sum_{b=1}^B p_j(h_b)\,h_b\right\|^2
\le\;\frac{1}{B}\sum_{b=1}^B\mathbb{E}\left\| p_j(h_b)\,h_b\right\|^2
=
\mathbb{E}\|p_j(h)h\|^2
=
\mathbb{E}\bigl[p_j(h)^2\|h\|^2\bigr].
\end{equation}

Second, for the label-hit term, 
let $v \coloneqq \mathbf{1}\{y=j\}\,h$. Then
\[\frac{1}{B}\sum_{b:y_b=j} h_b = \frac{1}{B}\sum_{b=1}^B v_b\] 
with i.i.d.\ copies $v_b$ of $v$. Therefore,
\[
\mathbb{E}\left\|\frac{1}{B}\sum_{b:y_b=j} h_b\right\|^2
=
\frac{1}{B^2}\Bigl(
B\,\mathbb{E}\|v\|^2 + B(B-1)\|\mathbb{E}v\|^2
\Bigr)
=
\frac{1}{B}\,\mathbb{E}\|v\|^2
+
\frac{B-1}{B}\,\|\mathbb{E}v\|^2.
\]
By the law of total expectation,
\[
\mathbb{E}\|v\|^2
=
\mathbb{E}\bigl[\mathbf{1}\{y=j\}\|h\|^2\bigr]
=
q_j\,\mathbb{E}\bigl[\|h\|^2 \mid y=j\bigr],
\]
and
\[
\mathbb{E}v
=
\mathbb{E}[\mathbf{1}\{y=j\}h]
=
q_j\,\mathbb{E}[h \mid y=j].
\]
Hence,
\begin{equation}\label{ineq:cor2_label}
\mathbb{E}\left\|\frac{1}{B}\sum_{b:y_b=j} h_b\right\|^2
=
\frac{q_j}{B}\,\mathbb{E}\bigl[\|h\|^2 \mid y=j\bigr]
+
\frac{B-1}{B}\,
q_j^2\bigl\|\mathbb{E}[h \mid y=j]\bigr\|^2
\;\le\;
\frac{q_j}{B}\,\mathbb{E}\bigl[\|h\|^2 \mid y=j\bigr]
+
q_j^2\bigl\|\mathbb{E}[h \mid y=j]\bigr\|^2.
\end{equation}

Combining \eqref{ineq:cor2_mean} and \eqref{ineq:cor2_label} yields
\[
\mathbb{E}\|\bar g_j\|^2
\le
2\mathbb{E}\left\|\frac{1}{B}\sum_{b=1}^B p_j(h_b)\,h_b\right\|^2 + 2\mathbb{E}\left\|\frac{1}{B}\sum_{b:y_b=j} h_b\right\|^2
\le
2\mathbb{E}\bigl[p_j(h)^2\|h\|^2\bigr]
+
\frac{2q_j}{B}\,\mathbb{E}\bigl[\|h\|^2 \mid y=j\bigr]
+
2q_j^2\bigl\|\mathbb{E}[h \mid y=j]\bigr\|^2,
\]
which proves \eqref{eq:weak_bg_sharp_second_moment_bound_inline}.
\end{proof}

\subsection{Analysis of Full Gradient}
Given the token-wise bounds above, we now analyze the full gradient matrix.

Let
\[
G \;\coloneqq\; \frac{1}{B}\sum_{b=1}^B \bigl(p(h_b)-e_{y_b}\bigr)h_b^\top
\;\in\; \mathbb{R}^{V\times d}
\]
denote the mini-batch stochastic gradient of the output-layer weight matrix, where the
$j$-th row of $G$ is $\bar g_j^\top$.

Then
\[
\|G\|_F^2 = \sum_{j=1}^V \|\bar g_j\|^2,
\qquad
\mathbb{E}\|G\|_F^2 = \sum_{j=1}^V \mathbb{E}\|\bar g_j\|^2 .
\]
where $\bar g_1^\top,\dots,\bar g_V^\top$ are the row (per-token-class) vectors of $G$.
Let $q_j \coloneqq \mathbb{P}(y=j)$ denote the marginal frequency of token $j$.
For each token $j$, define
\[
m_j \coloneqq \mathbb{E}[h\mid y=j],
\qquad
s_j^2 \coloneqq \mathbb{E}[\|h\|^2\mid y=j],
\qquad
\Sigma_j \coloneqq \mathrm{Cov}(h\mid y=j),
\]
so that $s_j^2=\|m_j\|^2+\operatorname{tr}(\Sigma_j)$.

We partition the vocabulary into a head set $\mathcal H$ and a tail set
$\mathcal T$.
We then upper bound the full gradient norm by applying the two token-wise bounds to the two sets, respectively.

\begin{theorem}\label{thm:full_grad}
Let
\[
G \;\coloneqq\; \frac{1}{B}\sum_{b=1}^B \bigl(p(h_b)-e_{y_b}\bigr)h_b^\top
\;\in\; \mathbb{R}^{V\times d}
\]
be the mini-batch stochastic gradient of the output-layer weight matrix. Suppose that the tokens $\{1, ..., V\}$ are separated into two sets $\mathcal{H}$ and $\mathcal{T}$. Denote $c_q\;\coloneqq\;\underset{j\in\mathcal{T}}{\max}~q_j$. Further, denote
\[
\varepsilon_j^2\;\coloneqq\;\mathbb{E}\bigl[\|(p_j(h)-q_j(h))h\|^2\bigr].
\]

Then
\begin{equation}\label{eq:full_grad_upper_bound_universal}
\mathbb{E}\|G\|_F^2
\;\le\;
\Bigl(1+\frac{1}{B}\Bigr)\sum_{j\in\mathcal H}\varepsilon_j^2
+\frac{1}{B}\,\mathbb{E}\|h\|^2
+
\left(\frac{2}{B}+2c_q\right)\sum_{j\in\mathcal T} q_j s_j^2
+
2\sum_{j\in\mathcal{T}}\mathbb{E}\bigl[p_j(h)^2\|h\|^2\bigr].
\end{equation}
\end{theorem}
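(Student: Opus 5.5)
The plan is to use the row-wise structure of the output-layer gradient. Since the $j$-th row of $G$ is $\bar g_j^\top$, we have $\mathbb{E}\|G\|_F^2=\sum_{j\in\mathcal H}\mathbb{E}\|\bar g_j\|^2+\sum_{j\in\mathcal T}\mathbb{E}\|\bar g_j\|^2$. We bound the head sum with Proposition~\ref{prop:grad_head} and the tail sum with Proposition~\ref{prop:grad_tail}, then add the two bounds.

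\textbf{Head tokens.} For $j\in\mathcal H$, apply Proposition~\ref{prop:grad_head} with $\varepsilon=\varepsilon_j$; this choice satisfies its hypothesis by the definition of $\varepsilon_j$. Summing over $j\in\mathcal H$ gives
\[
\sum_{j\in\mathcal H}\mathbb{E}\|\bar g_j\|^2\le\Bigl(1+\frac1B\Bigr)\sum_{j\in\mathcal H}\varepsilon_j^2+\frac1B\,\mathbb{E}\Bigl[\|h\|^2\sum_{j\in\mathcal H}q_j(h)(1-q_j(h))\Bigr],
\]
where we have exchanged the finite sum and the expectation. We then use $\sum_{j\in\mathcal H}q_j(h)(1-q_j(h))\le\sum_{j=1}^V q_j(h)=1$, which holds because $q(h)\in\Delta^{V-1}$ and each $1-q_j(h)\le1$. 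This reduces the covariance part to $\frac1B\mathbb{E}\|h\|^2$, the second term of \eqref{eq:full_grad_upper_bound_universal}.

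\textbf{Tail tokens.} For $j\in\mathcal T$, Proposition~\ref{prop:grad_tail} gives
\[
\mathbb{E}\|\bar g_j\|^2\le\frac{2q_j}{B}s_j^2+2q_j^2\|m_j\|^2+2\mathbb{E}[p_j(h)^2\|h\|^2].
\]
We bound $\|m_j\|^2\le s_j^2$ by Jensen's inequality, equivalently $s_j^2=\|m_j\|^2+\operatorname{tr}(\Sigma_j)$ with $\operatorname{tr}(\Sigma_j)\ge0$. We also use $q_j^2=q_j\cdot q_j\le c_q\,q_j$, valid since $j\in\mathcal T$. The middle term therefore becomes $2c_q q_j s_j^2$. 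Combining it with the first term gives $\bigl(\frac2B+2c_q\bigr)q_js_j^2$, and summing over $\mathcal T$ produces the last two terms of \eqref{eq:full_grad_upper_bound_universal}.

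Adding the head and tail bounds finishes the proof. No step is a real obstacle. The one point to check is that the conditional-probability sum bound is applied inside the expectation. It must be applied to $\sum_j q_j(h)(1-q_j(h))$ pointwise in $h$, not to the marginals. Doing so avoids any dependence on $|\mathcal H|$, which is what makes the covariance contribution exactly $\frac1B\mathbb{E}\|h\|^2$.
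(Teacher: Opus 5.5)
Your proposal is correct and matches the paper's proof essentially step for step. You split $\|G\|_F^2$ row-wise, apply Proposition~\ref{prop:grad_head} over $\mathcal H$ with the pointwise bound $\sum_{j\in\mathcal H} q_j(h)(1-q_j(h))\le 1$, and apply Proposition~\ref{prop:grad_tail} over $\mathcal T$ using $\|m_j\|^2\le s_j^2$ and $q_j\le c_q$; the only cosmetic difference is that you bound the head sum by $\sum_j q_j(h)=1$, whereas the paper writes it as $1-\sum_j q_j(h)^2\le 1$.
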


\begin{proof}
Since $\|G\|_F^2=\sum_{j=1}^V \|\bar g_j\|^2$, it suffices to bound
$\sum_{j\in\mathcal H}\mathbb{E}\|\bar g_j\|^2$ and
$\sum_{j\in\mathcal T}\mathbb{E}\|\bar g_j\|^2$ separately.

For $j\in\mathcal H$, we apply Proposition~\ref{prop:grad_head} to obtain
\[
\mathbb{E}\|\bar g_j\|^2
\le
\Bigl(1+\frac{1}{B}\Bigr)\varepsilon_j^2
+\frac{1}{B}\,\mathbb{E}\bigl[q_j(h)\bigl(1-q_j(h)\bigr)\|h\|^2\bigr].
\]
Summing over $j\in\mathcal H$ gives
\begin{equation}\label{eq:full_grad_head}
\sum_{j\in\mathcal H}\mathbb{E}\|\bar g_j\|^2
\le
\Bigl(1+\frac{1}{B}\Bigr)\sum_{j\in\mathcal H}\varepsilon_j^2
+\frac{1}{B}\sum_{j\in\mathcal H}\mathbb{E}\bigl[q_j(h)\bigl(1-q_j(h)\bigr)\|h\|^2\bigr].
\end{equation}
We now bound the second term in \eqref{eq:full_grad_head}, since $\mathcal H\subseteq\{1,\dots,V\}$ and
$q_j(h)(1-q_j(h))\le q_j(h)$, we have
\[
\sum_{j\in\mathcal H} q_j(h)\bigl(1-q_j(h)\bigr)
\le
\sum_{j=1}^V q_j(h)\bigl(1-q_j(h)\bigr)
=
1-\sum_{j=1}^V q_j(h)^2
\le 1.
\]
Therefore,
\[
\sum_{j\in\mathcal H}\mathbb{E}\bigl[q_j(h)\bigl(1-q_j(h)\bigr)\|h\|^2\bigr]
=
\mathbb{E}\Bigl[\sum_{j\in\mathcal H} q_j(h)(1-q_j(h))\|h\|^2\Bigr]
\le
\mathbb{E}\|h\|^2,
\]
and hence the head contribution satisfies
\[
\sum_{j\in\mathcal H}\mathbb{E}\|\bar g_j\|^2
\le
\Bigl(1+\frac{1}{B}\Bigr)\sum_{j\in\mathcal H}\varepsilon_j^2+\frac{1}{B}\mathbb{E}\|h\|^2.
\]

For $j\in\mathcal T$, applying Proposition~\ref{prop:grad_tail} yields
\[
\mathbb{E}\|\bar g_j\|^2
\le
\frac{2q_j}{B}\,s_j^2
+2q_j^2\|m_j\|^2
+2\mathbb{E}\bigl[p_j(h)^2\|h\|^2\bigr].
\]
Summing over $j\in\mathcal T$ gives
\[
\sum_{j\in\mathcal T}\mathbb{E}\|\bar g_j\|^2
\le
\frac{2}{B}\sum_{j\in\mathcal T} q_j s_j^2
+2\sum_{j\in\mathcal T}q_j^2\|m_j\|^2
+2\sum_{j\in\mathcal T}\mathbb{E}\bigl[p_j(h)^2\|h\|^2\bigr].
\]
Note that for all $j\in\mathcal T$, we have
$q_j \le c_q $, and moreover $\|m_j\|^2\le s_j^2$.
Therefore,
\[
\sum_{j\in\mathcal T} q_j^2\|m_j\|^2
\le
\sum_{j\in\mathcal T} \Bigl(q_j\cdot c_q\Bigr)s_j^2
=
c_q\sum_{j\in\mathcal T} q_j s_j^2,
\]
and hence
\[
\sum_{j\in\mathcal T}\mathbb{E}\|\bar g_j\|^2
\le
\left(\frac{2}{B}+2c_q\right)\sum_{j\in\mathcal T} q_j s_j^2
+
2\sum_{j\in\mathcal T}\mathbb{E}\bigl[p_j(h)^2\|h\|^2\bigr].
\]

Combining the head and tail bounds completes the proof.
\end{proof}

Theorem~\ref{thm:full_grad} upper bounds the full gradient norm by separating tokens into head and tail sets. This bound is universal in the sense that it allows arbitrary token-wise heterogeneity and does not depend on the optimization algorithm. The bound shows that the full gradient norm is controlled by (i) the training status of head tokens (captured by $\varepsilon_j$), (ii) the marginal frequencies of tail tokens (captured by $c_q$ and $\{q_j\}_{j\in\mathcal T}$), and (iii) the magnitude of the softmax-background term for tail tokens (captured by $\mathbb{E}[p_j(h)^2\|h\|^2]$). To obtain a simpler scaling law, we introduce the following assumptions.

\begin{assumption}\label{ass:head_tail}
Assume that there exists a constant $0<c_q<1$ such that the tokens $\{1, ..., V\}$ are separated into two sets $\mathcal{H}=\{j\mid q_j\geq c_q\}$ and $\mathcal{T}=\{j\mid q_j< c_q\}$, satisfying the following conditions:
\begin{itemize}
\item (\textbf{Well-trained head}) For $j\in\mathcal H$, there exists
$\varepsilon_j\ge 0$ such that
\[
\mathbb{E}\bigl[\|(p_j(h)-q_j(h))h\|^2\bigr]\le \varepsilon_j^2,
\qquad
\sum_{j\in\mathcal H}\varepsilon_j^2 \le \frac{C_{\mathcal H}}{B}
\]
for some constant $C_{\mathcal H}<\infty$.
\item (\textbf{Under-trained tail}) For $j\in\mathcal T$, there exists a constant $0 < c \ll B$ such that 
\begin{equation*}
\mathbb{E}\bigl[p_j(h)^2\|h\|^2\bigr]
\;\le\;
\frac{c^2}{B^2}\,\mathbb{E}\|h\|^2.
\end{equation*}
\item (\textbf{Tail contains only rare tokens}) There exists a constant
$C_{\mathcal{T}}<\infty$ such that $c_q\leq C_{\mathcal{T}}/B$.
\end{itemize}
\end{assumption}

These assumptions are motivated from the common belief that in LLM pre-training, the head tokens are trained well in a relatively small time, whereas the tail tokens remain under trained for a long time. We in turn justify the assumptions:

\begin{corollary}
    Suppose that Assumption~\ref{ass:head_tail} holds. Then
\begin{equation}\label{eq:full_grad_upper_bound_B}
\mathbb{E}\|G\|_F^2
\;\le\;
\frac{2C_{\mathcal H}}{B}
+\frac{1}{B}\,\mathbb{E}\|h\|^2
+\frac{2(1+C_{\mathcal{T}})}{B}\sum_{j\in\mathcal T} q_j s_j^2
+\frac{2|\mathcal T|c^2}{B^2}\,\mathbb{E}\|h\|^2.
\end{equation}
    In particular, if $\sum_{j\in\mathcal T} q_j s_j^2<\infty$ and $\mathbb{E}\|h\|^2<\infty$,
then $\mathbb{E}\|G\|_F^2 = O(1/B)$.
\end{corollary}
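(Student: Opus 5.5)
The corollary should follow by plugging the three conditions of Assumption~\ref{ass:head_tail} directly into the universal bound \eqref{eq:full_grad_upper_bound_universal} of Theorem~\ref{thm:full_grad}, term by term. Since $\mathcal H$ and $\mathcal T$ as defined in the assumption form a partition of $\{1,\dots,V\}$, Theorem~\ref{thm:full_grad} applies. Its constant $c_q$ is $\max_{j\in\mathcal T} q_j$. With the assumption's threshold constant (call it $c_q$ as well), every $j\in\mathcal T$ has $q_j<c_q$. Hence the maximum over the tail is at most the threshold, which is at most $C_{\mathcal T}/B$, and the theorem's bound with its own $c_q$ is dominated by the same expression with the threshold.

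Next I would bound each of the four terms in turn. For the head bias term, I use $\sum_{j\in\mathcal H}\varepsilon_j^2\le C_{\mathcal H}/B$ and $1+1/B\le 2$ (since $B\ge1$), which gives $2C_{\mathcal H}/B$. The covariance term $\frac1B\mathbb E\|h\|^2$ is carried over unchanged. For the tail label-hit term, $\frac2B+2c_q\le \frac2B+\frac{2C_{\mathcal T}}{B}=\frac{2(1+C_{\mathcal T})}{B}$, multiplying $\sum_{j\in\mathcal T}q_js_j^2$. For the softmax-background term, each $j\in\mathcal T$ contributes at most $\frac{c^2}{B^2}\mathbb E\|h\|^2$, and summing over $|\mathcal T|$ tokens with the factor $2$ yields $\frac{2|\mathcal T|c^2}{B^2}\mathbb E\|h\|^2$. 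Adding the four bounds gives \eqref{eq:full_grad_upper_bound_B}.

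For the $O(1/B)$ conclusion, the first three terms are constants divided by $B$, provided $\sum_{j\in\mathcal T}q_js_j^2<\infty$ and $\mathbb E\|h\|^2<\infty$. The last term is $O(|\mathcal T|/B^2)$. Since $|\mathcal T|\le V$ is finite, this is $O(1/B^2)\subseteq O(1/B)$. Alternatively, $c\ll B$ gives $c^2/B^2\le c/B$, which absorbs the term into $O(1/B)$ with $V$ treated as fixed.

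There is no real obstacle here. The only step needing care is the mismatch between the two meanings of $c_q$ (the maximum over the tail versus the threshold), and making explicit that $V$ (hence $|\mathcal T|$) and the constants $C_{\mathcal H}$, $C_{\mathcal T}$, $c$ do not grow with $B$, so the final rate is genuinely $O(1/B)$.
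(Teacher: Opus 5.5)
Your proposal is correct and follows the paper's own argument: the paper's proof simply substitutes the three conditions of Assumption~\ref{ass:head_tail} into the universal bound \eqref{eq:full_grad_upper_bound_universal} of Theorem~\ref{thm:full_grad}. Your term-by-term bookkeeping spells out steps the paper leaves implicit: the bound $1+1/B\le 2$, and the observation that $\max_{j\in\mathcal T}q_j$ is dominated by the assumption's threshold $c_q\le C_{\mathcal T}/B$.
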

\begin{proof}
    Applying Assumption~\ref{ass:head_tail} to \eqref{eq:full_grad_upper_bound_universal} yields \eqref{eq:full_grad_upper_bound_B}.
\end{proof}

\begin{corollary}\label{cor:wsg_ratio}
    Assume that Assumptions \ref{ass:head_tail} hold and further that the output layer weight matrix $W$ satisfies $\|W\|_F\geq C_w$ for some $C_w>0$. Then 
    \[
    \mathbb{E}\left[\frac{\|W\|_F}{\|G\|_F}\right]=\Omega(\sqrt{B})
    \]
\end{corollary}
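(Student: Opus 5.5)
The plan is to combine the preceding corollary with Jensen's inequality applied to the convex map $x\mapsto 1/\sqrt{x}$. By that corollary, Assumption~\ref{ass:head_tail} (together with finiteness of $\sum_{j\in\mathcal T}q_js_j^2$ and $\mathbb{E}\|h\|^2$) gives
\[
\mathbb{E}\|G\|_F^2\le \frac{K}{B}
\]
for a constant $K$ that does not depend on $B$. Concretely, $K = 2C_{\mathcal H}+\mathbb{E}\|h\|^2+2(1+C_{\mathcal T})\sum_{j\in\mathcal T}q_js_j^2+2|\mathcal T|c^2\,\mathbb{E}\|h\|^2$, where we use $1/B^2\le 1/B$. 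The lower bound $\|W\|_F\ge C_w$ is deterministic, since $W$ is fixed when the batch is drawn, so
\[
\mathbb{E}\bigl[\|W\|_F/\|G\|_F\bigr]\ge C_w\,\mathbb{E}\bigl[1/\|G\|_F\bigr].
\]

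The key step is to lower bound $\mathbb{E}[1/\|G\|_F]$ using only an upper bound on the second moment of $\|G\|_F$. Write $X=\|G\|_F^2\ge 0$. The function $\phi(x)=x^{-1/2}$ is convex and decreasing on $(0,\infty)$, so Jensen's inequality gives
\[
\mathbb{E}[X^{-1/2}]\ge (\mathbb{E}X)^{-1/2}\ge \sqrt{B/K}.
\]
Combining the two displays yields $\mathbb{E}[\|W\|_F/\|G\|_F]\ge C_w\sqrt{B/K}=\Omega(\sqrt B)$.

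The main obstacle is the event $\{G=0\}$, on which the ratio is undefined. There are two ways to handle it, and I would state whichever the intended convention supports. The first is to adopt the convention that the ratio equals $+\infty$ on this event; the inequality then only becomes easier, since Jensen's inequality still applies to the extended-valued convex function. The second is to restrict to $\{G\neq 0\}$. In that case I would apply Jensen's inequality conditionally on this event and use
\[
\mathbb{E}[X\mid X>0]\le \frac{\mathbb{E}X}{\mathbb{P}(X>0)}\le \frac{K}{B\,\mathbb{P}(X>0)},
\]
which gives the even stronger bound $\sqrt{B\,\mathbb{P}(X>0)/K}$ conditionally. The rest of the argument is routine.
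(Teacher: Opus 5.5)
Your argument is correct and is the paper's proof in compressed form. The paper writes $\mathbb{E}[1/\|G\|_F]\ge 1/\mathbb{E}\|G\|_F\ge 1/\sqrt{\mathbb{E}\|G\|_F^2}\ge \sqrt{B}/C_G$, where $C_G^2$ is exactly your $K$; this is your single Jensen step for $x\mapsto x^{-1/2}$ split into two, and the paper also tacitly uses your $+\infty$ convention on $\{G=0\}$.

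One correction to your side remark: conditioning on $\{G\neq 0\}$ yields $\sqrt{B\,\mathbb{P}(G\neq 0)/K}$. That is \emph{weaker} than $\sqrt{B/K}$, not stronger, and it gives $\Omega(\sqrt{B})$ only if $\mathbb{P}(G\neq 0)$ is bounded below uniformly in $B$.
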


\begin{proof}
Denote
\[
C_G
\;\coloneqq\;
\sqrt{2C_{\mathcal H}
+\mathbb{E}\|h\|^2
+2(1+C_{\mathcal{T}})\sum_{j\in\mathcal T} q_j s_j^2+2|\mathcal{T}|c^2\mathbb{E}\|h\|^2} .
\]
By Theorem \ref{thm:full_grad}, we have $\mathbb{E}\|G\|_F^2\leq C_G^2/B$. Therefore, 
\[
\mathbb{E}\|G\|_F \le \sqrt{\mathbb{E}\|G\|_F^2}
\le \frac{C_G}{\sqrt{B}}.
\]

Since $\|W\|_F\ge C_w$, we have
\[
\mathbb{E}\left[\frac{\|W\|_F}{\|G\|_F}\right]
\ge
C_w\,\mathbb{E}\left[\frac{1}{\|G\|_F}\right]
\ge \frac{C_w}{\mathbb{E}\|G\|_F}
\ge \frac{C_w}{C_G}\sqrt{B},
\]
which completes the proof.
\end{proof}

\subsection{Lower Bound and Imbalance of the Per-Token-Class Gradient}

We next provide a lower bound for the norm of the per-token-class gradient when tokens are not well trained. The result also implies that per-token-class gradient norms can be highly imbalanced across tokens, especially in the early stage of training.

\begin{theorem}\label{thm:grad_lower_and_ratio}
    Assume that there exists a constant $0 < c \ll B$ such that 
\begin{equation}
\label{eq:weak_bg_second_moment_inline_prop}
\mathbb{E}\bigl[p_j(h)^2\|h\|^2\bigr]
\;\le\;
\frac{c^2}{B^2}\,\mathbb{E}\|h\|^2.
\end{equation}
In addition, define $m_j\coloneqq \mathbb{E}[h\mid y=j]$, $s_j^2\coloneqq \mathbb{E}[\|h\|^2\mid y=j]$ and assume
$0<s_j^2<\infty$ for the tokens under consideration.
Then we have
\begin{equation}\label{eq:tokenwise_sec_moment_lower_bound}
\mathbb{E}\|\bar g_j\|^2
\;\ge\;
\frac{1}{2B}\,q_j\,s_j^2
\;-\;
\frac{c^2}{B^2}\,\mathbb{E}\|h\|^2.
\end{equation}

Consequently, for any two tokens $i, j$,
\begin{equation}\label{eq:grad_ratio_bound}
\frac{\mathbb{E}\|\bar g_i\|^2}{\mathbb{E}\|\bar g_j\|^2}
\;\ge\;
\frac{\frac{1}{2B}q_i s_i^2-\frac{c^2}{B^2}\mathbb{E}\|h\|^2}
{\frac{2q_j}{B}\,s_j^2
+2q_j^2\|m_j\|^2
+\frac{2c^2}{B^2}\,\mathbb{E}\|h\|^2}.
\end{equation}

\end{theorem}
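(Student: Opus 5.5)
We use the background--hit decomposition \eqref{eq:tokenwise_grad_rare}, writing $\bar g_j = A_j - H_j$ with $A_j \coloneqq \frac1B\sum_{b} p_j(h_b)h_b$ (softmax background) and $H_j \coloneqq \frac1B\sum_{b:y_b=j} h_b$ (label hits). The lower bound then comes from a reverse triangle inequality: the hit term has second moment of order $q_j s_j^2/B$, while assumption \eqref{eq:weak_bg_second_moment_inline_prop} makes the background negligible. For any vectors, $\|H - A\|^2 \ge \frac12\|H\|^2 - \|A\|^2$, since $\|H\|^2 \le 2\|H-A\|^2 + 2\|A\|^2$. Taking expectations gives
\[
\mathbb{E}\|\bar g_j\|^2 \ge \tfrac12\,\mathbb{E}\|H_j\|^2 - \mathbb{E}\|A_j\|^2 .
\]

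\textbf{Hit term.} The computation in the proof of Proposition~\ref{prop:grad_tail} gives an exact identity:
\[
\mathbb{E}\|H_j\|^2 = \frac{q_j}{B}s_j^2 + \frac{B-1}{B}q_j^2\|m_j\|^2 \ge \frac{q_j}{B}s_j^2 .
\]
We keep only the first term, since the second is nonnegative.

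\textbf{Background term.} Inequality \eqref{ineq:cor2_mean} gives $\mathbb{E}\|A_j\|^2 \le \mathbb{E}[p_j(h)^2\|h\|^2]$, and the hypothesis bounds this by $\frac{c^2}{B^2}\mathbb{E}\|h\|^2$. Substituting both bounds yields \eqref{eq:tokenwise_sec_moment_lower_bound}. The coefficient $\frac{1}{2B}$ comes exactly from the factor $\tfrac12$ in the reverse triangle inequality.

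\textbf{Ratio.} We lower bound the numerator by \eqref{eq:tokenwise_sec_moment_lower_bound} applied to token $i$. We upper bound the denominator by Proposition~\ref{prop:grad_tail} for token $j$, replacing $2\mathbb{E}[p_j(h)^2\|h\|^2]$ with $\frac{2c^2}{B^2}\mathbb{E}\|h\|^2$ via the hypothesis. This requires the hypothesis to hold for both $i$ and $j$, which is what ``tokens under consideration'' should mean. The denominator is strictly positive because $\mathbb{E}\|\bar g_j\|^2 \ge 0$ and the upper bound contains $\frac{2q_j}{B}s_j^2 > 0$ (assuming $q_j > 0$).

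\textbf{Main obstacle.} Dividing a lower bound by an upper bound is valid only if the numerator bound is nonnegative, i.e.\ $\frac{1}{2B}q_is_i^2 \ge \frac{c^2}{B^2}\mathbb{E}\|h\|^2$. This is the regime $c \ll B$ intended by the statement. If the numerator bound is negative, the ratio inequality still holds trivially, since its left side is $\ge 0 >$ its right side. We would note this case split explicitly, and otherwise the argument is routine.
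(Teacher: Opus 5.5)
Your proposal is correct and follows the paper's proof step by step. You use the same background--hit split, the same inequality $\|\bar g_j\|^2 \ge \tfrac12\|H_j\|^2 - \|A_j\|^2$ (which the paper gets by applying Young's inequality to the cross term), the same bound $\mathbb{E}\|H_j\|^2 \ge q_j s_j^2/B$, and Proposition~\ref{prop:grad_tail} for the denominator; your explicit handling of a negative numerator bound is a small rigor addition the paper leaves implicit.
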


\begin{proof}
Recall the batch row-gradient decomposition
\[
\bar g_j
=
\underbrace{\frac{1}{B}\sum_{b=1}^B p_j(h_b)\,h_b}_{\eqqcolon\,A_j}
\;-\;
\underbrace{\frac{1}{B}\sum_{b:y_b=j} h_b}_{\eqqcolon\,B_j}.
\]

Therefore, 
\begin{equation*}
    \begin{aligned}
        \|\bar g_j\|^2 &= \|A_j-B_j\|^2 =\|A_j\|^2+\|B_j\|^2-2\langle A_j, B_j \rangle \\
        &\ge \|A_j\|^2+\|B_j\|^2 - \left(2\|A_j\|^2+\frac{1}{2}\|B_j\|^2\right) = \frac{1}{2}\|B_j\|^2 - \|A_j\|^2,
    \end{aligned}
\end{equation*}
where the inequality follows from Young's inequality.
Taking expectations and using \eqref{ineq:cor2_mean}, we obtain
\begin{equation}\label{ineq:lower_bound_A}
\mathbb{E}\|\bar g_j\|^2 \ge \frac{1}{2}\mathbb{E}\|B_j\|^2 - \frac{c^2}{B^2}\mathbb{E}\|h\|^2.
\end{equation}
Next, let $v \coloneqq \mathbf{1}\{y=j\}\,h$. Then
\[\frac{1}{B}\sum_{b:y_b=j} h_b = \frac{1}{B}\sum_{b=1}^B v_b\] 
with i.i.d.\ copies $v_b$ of $v$. Therefore,
\begin{equation}\label{ineq:lower_bound_B}
\mathbb{E}\left\|B_j\right\|^2
=
\frac{1}{B^2}\Bigl(
B\,\mathbb{E}\|v\|^2 + B(B-1)\|\mathbb{E}v\|^2
\Bigr)
\ge
\frac{1}{B}\,\mathbb{E}\|v\|^2
=
\frac{1}{B}\,q_j\,\mathbb{E}[\|h\|^2\mid y=j]
=
\frac{1}{B}\,q_j s_j^2.
\end{equation}
Combining \eqref{ineq:lower_bound_A} and \eqref{ineq:lower_bound_B} gives
\begin{equation}\label{ineq:lower_bound}
    \mathbb{E}\|\bar g_j\|^2 \ge \frac{1}{2B}q_j s_j^2 - \frac{c^2}{B^2}\mathbb{E}\|h\|^2.
\end{equation}

On the other hand, by Proposition \ref{prop:grad_tail}, we have
\begin{equation}
\label{ineq:upper_bound}
\mathbb{E}\|\bar g_j\|^2
\;\le\;
\frac{2q_j}{B}s_j^2
\;+\;
2q_j^2\|m_j\|^2
\;+\;
\frac{2c^2}{B^2}\,\mathbb{E}\|h\|^2.
\end{equation}

Combining \eqref{ineq:lower_bound} and \eqref{ineq:upper_bound} gives \eqref{eq:grad_ratio_bound}.
\end{proof}

\section{Characterization of the Intermediate-Layer Gradient Norm}\label{apx:proof_hidden}

We now extend the result to intermediate layers. The proof follows
the same head--tail structure as Theorem~\ref{thm:full_grad}, with the output-layer feature vector
replaced by the class-dependent intermediate-layer matrix \(M_j^{(l)}\), which we will introduce later.

Let \((x,y)\sim\mathcal D\), where \(y\in\mathcal V:=\{1,\ldots,V\}\). For an intermediate layer $l$ with weight parameter $W^{(l)}$, let
\[
a^{(l)}=W^{(l)}h^{(l-1)},
h^{(l)}=\phi(a^{(l)}),
\]
where $\phi$ is the activation. Write
\[
h_l^-:=h^{(l-1)}.
\]

We view the final softmax probability as a function of \(h_l^-\). Specifically, for each token \(j\in\mathcal V\), define the logits and output probability with respect to token $j$ as
\[
z_j:=\left[
F_{\ge l}(h_l^-)
\right]_j, \qquad
p_j^{(l)}(h_l^-)
:=
\left[
\operatorname{softmax}
\left(
F_{\ge l}(h_l^-)
\right)
\right]_j,
\]
where \(F_{\ge l}\) denotes the remaining network mapping from the
layer-\((l-1)\) representation \(h_l^-\) to the final logits. Define also
\[
q_j^{(l)}(h_l^-)
:=
\mathbb P(y=j\mid h^{(l-1)}=h_l^-),
\qquad
q_j:=\mathbb P(y=j).
\]

For each token \(j\in\mathcal V\), define
\[
M_j^{(l)}
:=
D_{\phi}\bigl(J_j^{(l)}\bigr)^\top
\bigl(h^{(l-1)}\bigr)^\top ,
\]
where \(J_j^{(l)}\) is the \(j\)-th row of the logit Jacobian
\[
J^{(l)}
=
\frac{\partial z}{\partial h^{(l)}},
\]
and
\[
D_\phi
=
\operatorname{Diag}\bigl(\phi'(a^{(l)})\bigr).
\]

Given an i.i.d. mini-batch \(\{(x_b,y_b)\}_{b=1}^B\), let
\[
h_{b,l}^-:=h_b^{(l-1)},
\qquad
M_{b,j}^{(l)}:=M_j^{(l)}(x_b),
\qquad
p_{b,j}^{(l)}:=p_j^{(l)}(h_b^{(l-1)}).
\]
The class-\(j\) contribution to the batch-averaged gradient of \(W^{(l)}\) is
\[
\bar g_j^{(l)}
:=
\frac1B\sum_{b=1}^B
\bigl(p_{b,j}^{(l)}-\mathbf 1\{y_b=j\}\bigr)M_{b,j}^{(l)}.
\]
The full intermediate-layer gradient is
\[
G^{(l)}
:=
\nabla_{W^{(l)}} L
=
\sum_{j=1}^V \bar g_j^{(l)}.
\]

For each token \(j\), define
\[
m_{j,l}
:=
\mathbb E\!\left[M_j^{(l)}\mid y=j\right],
\qquad
s_{j,l}^2
:=
\mathbb E\!\left[
\|M_j^{(l)}\|_F^2
\mid y=j
\right].
\]

\begin{assumption}\label{ass:head_tail_hidden}
Let \(\mathcal H,\mathcal T\) be a partition of \(\mathcal V\), where
\(\mathcal H\) is the head set and \(\mathcal T\) is the tail set. Assume the
following.

\begin{enumerate}
\item \textbf{Well-trained head tokens.}
For each \(j\in\mathcal H\), there exists \(\varepsilon_{j,l}\ge 0\) such that
\[
\mathbb E\!\left[
\left\|
\left(
p_j^{(l)}(h^{(l-1)})
-
q_j^{(l)}(h^{(l-1)})
\right)
M_j^{(l)}
\right\|_F^2
\right]
\le
\varepsilon_{j,l}^2.
\]

\item \textbf{Rare tail tokens.}
There exists \(c_q\in(0,1)\) such that
\[
q_j\le c_q,
\qquad
\forall j\in\mathcal T.
\]

\item \textbf{Finite second moments.}
For all \(j\in\mathcal V\),
\[
\mathbb E\|M_j^{(l)}\|_F^2<\infty.
\]
\end{enumerate}
\end{assumption}

\begin{theorem}\label{thm:full_grad_hidden}
Under Assumption~\ref{ass:head_tail_hidden},
\[
\frac1V
\mathbb E\|G^{(l)}\|_F^2
\le
\left(1+\frac1B\right)
\sum_{j\in\mathcal H}\varepsilon_{j,l}^2
+
\frac1B
\sum_{j\in\mathcal H}
\mathbb E\|M_j^{(l)}\|_F^2
\]
\[
\hspace{2cm}
+
\left(\frac{2}{B}+2c_q\right)
\sum_{j\in\mathcal T}q_js_{j,l}^2
+
2\sum_{j\in\mathcal T}
\mathbb E\!\left[
\bigl(p_j^{(l)}(h^{(l-1)})\bigr)^2
\|M_j^{(l)}\|_F^2
\right].
\]
Equivalently, if denoting
\[
\Gamma_j^{(l)}
:=
\left\|
D_\phi\bigl(J_j^{(l)}\bigr)^\top
\right\|_2^2,
\]
then
\[
\frac1V
\mathbb E\|G^{(l)}\|_F^2
\le
\left(1+\frac1B\right)
\sum_{j\in\mathcal H}\varepsilon_{j,l}^2
+
\frac1B
\sum_{j\in\mathcal H}
\mathbb E\!\left[
\Gamma_j^{(l)}\|h^{(l-1)}\|_2^2
\right]
\]
\[
\hspace{2cm}
+
\left(\frac{2}{B}+2c_q\right)
\sum_{j\in\mathcal T}q_js_{j,l}^2
+
2\sum_{j\in\mathcal T}
\mathbb E\!\left[
\bigl(p_j^{(l)}(h^{(l-1)})\bigr)^2
\Gamma_j^{(l)}
\|h^{(l-1)}\|_2^2
\right].
\]
\end{theorem}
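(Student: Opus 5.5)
The plan follows the proof of Theorem~\ref{thm:full_grad}, with the feature vector $h$ replaced by the matrix $M_j^{(l)}$. The only new ingredient is the structural step relating $\|G^{(l)}\|_F^2$ to the per-class pieces. Since $G^{(l)}=\sum_{j=1}^V \bar g_j^{(l)}$ is a sum of $V$ matrices, Cauchy--Schwarz (equivalently, convexity of $\|\cdot\|_F^2$) gives
\[
\|G^{(l)}\|_F^2 \le V\sum_{j=1}^V \|\bar g_j^{(l)}\|_F^2 .
\]
This is exactly the assumption of Theorem~\ref{thm:full_grad_mainbody_unified} with $\kappa=V$. Taking expectations, it remains to bound $\sum_{j\in\mathcal H}\mathbb E\|\bar g_j^{(l)}\|_F^2$ and $\sum_{j\in\mathcal T}\mathbb E\|\bar g_j^{(l)}\|_F^2$ separately.

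\textbf{Head tokens.} For $j\in\mathcal H$ we repeat Lemmas~\ref{lem:tokenwise_grad_exp}--\ref{lem:tokenwise_grad_decomposition} and Proposition~\ref{prop:grad_head} with $h$ replaced by $M_j^{(l)}$, viewed as a vector in the Frobenius inner product.

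The key point is that $M_j^{(l)}$ and $p_j^{(l)}$ are measurable with respect to $x$, while $y$ enters only through $\mathbf 1\{y=j\}$. Conditioning on $x$ (or on $h^{(l-1)}$, where $M_j^{(l)}$ depends only on $h^{(l-1)}$), the residual $r_j=p_j^{(l)}-\mathbf 1\{y=j\}$ has conditional mean $p_j^{(l)}-q_j^{(l)}$ and conditional variance $q_j^{(l)}(1-q_j^{(l)})$. The i.i.d.\ bias--variance decomposition, Jensen, and the trace bound on the covariance then yield
\[
\mathbb E\|\bar g_j^{(l)}\|_F^2\le \left(1+\tfrac1B\right)\varepsilon_{j,l}^2+\tfrac1B\,\mathbb E\!\left[q_j^{(l)}\bigl(1-q_j^{(l)}\bigr)\|M_j^{(l)}\|_F^2\right].
\]
I will simply bound $q_j^{(l)}(1-q_j^{(l)})\le 1$ to obtain the stated term $\frac1B\sum_{j\in\mathcal H}\mathbb E\|M_j^{(l)}\|_F^2$. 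The $\sum_j q_j(1-q_j)\le 1$ trick from the output layer is not available here, because $M_j$ depends on $j$.

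\textbf{Tail tokens.} For $j\in\mathcal T$ we use the background--hit split, exactly as in Proposition~\ref{prop:grad_tail}. The softmax-background term is bounded by $\mathbb E\bigl[(p_j^{(l)})^2\|M_j^{(l)}\|_F^2\bigr]$ via the convexity bound. The label-hit term is handled by setting $v=\mathbf 1\{y=j\}M_j^{(l)}$, which gives
\[
\mathbb E\|v\|_F^2=q_js_{j,l}^2,\qquad \|\mathbb E v\|_F^2=q_j^2\|m_{j,l}\|_F^2 .
\]
Combining with $\|a-b\|^2\le 2\|a\|^2+2\|b\|^2$, then using $\|m_{j,l}\|_F^2\le s_{j,l}^2$ (Jensen) and $q_j\le c_q$, produces the factor $\bigl(\tfrac2B+2c_q\bigr)\sum_{j\in\mathcal T}q_js_{j,l}^2$ together with the doubled background term. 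The finite-second-moment assumption guarantees that every expectation used is finite. Summing the head and tail bounds and dividing by $V$ gives the first display of the theorem.

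\textbf{Equivalent form.} $M_j^{(l)}=u\,(h^{(l-1)})^\top$ is rank one, with $u=D_\phi(J_j^{(l)})^\top$. Hence
\[
\|M_j^{(l)}\|_F^2=\|u\|_2^2\,\|h^{(l-1)}\|_2^2=\Gamma_j^{(l)}\|h^{(l-1)}\|_2^2 ,
\]
since the spectral norm of a vector equals its Euclidean norm. Substituting this identity termwise gives the second display; the $s_{j,l}^2$ term is left as is.

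\textbf{Main obstacle.} I expect the main difficulty to be the conditioning. The decomposition requires the residual's mean to be $p_j^{(l)}-q_j^{(l)}$, which requires $M_j^{(l)}$ to be a function of the conditioning variable $h^{(l-1)}$. The Jacobian $J_j^{(l)}$ is indeed a function of $h^{(l-1)}$ through $F_{\ge l}$ in a feedforward model, but in a transformer it may depend on other positions. In that case I would condition on the full input $x$, define $q_j$ accordingly, and note that the argument is otherwise unchanged.
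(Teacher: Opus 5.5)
Your proposal is correct and follows the paper's proof step for step: the $\kappa=V$ Cauchy--Schwarz bound, the i.i.d.\ bias--variance bound with $q_j^{(l)}(1-q_j^{(l)})\le 1$ for head tokens, the background--hit split with $\|m_{j,l}\|_F^2\le s_{j,l}^2$ and $q_j\le c_q$ for tail tokens, and the rank-one identity $\|M_j^{(l)}\|_F^2=\Gamma_j^{(l)}\|h^{(l-1)}\|_2^2$. The differences are only cosmetic: for head tokens the paper computes the single-sample second moment $\mathbb E\|R_j^{(l)}\|_F^2$ directly and discards $-\|\mu_j^{(l)}\|_F^2$ instead of going through the law of total variance, and your remark that $M_j^{(l)}$ must be a function of $h^{(l-1)}$ (so that the conditional mean is $p_j^{(l)}-q_j^{(l)}$) states explicitly an assumption the paper uses without comment.
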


\textbf{Remark:} Compared to the unified formulation in Theorem \ref{thm:full_grad_mainbody_unified}, in this case, we take
\[
\xi=h^{(l-1)},
\qquad
U_j(\xi)=M_j^{(l)}
=
D_\phi (J_j^{(l)})^\top (h^{(l-1)})^\top,
\]
and choose $\kappa$ as the trivial upper bound $V$.

\begin{proof}
For notational simplicity, throughout the proof we write
\[
h_l^-:=h^{(l-1)}.
\]
Recall that
\[
p_j(h_l^-)
\]
denotes the final softmax probability of token \(j\), viewed as a function of
the layer-\((l-1)\) representation \(h_l^-\). Similarly,
\[
q_j(h_l^-)
:=
\mathbb P(y=j\mid h^{(l-1)}=h_l^-).
\]

For each token \(j\in\mathcal V\), define the class-wise contribution to the
mini-batch stochastic gradient of the \(l\)-th layer by
\[
\bar g_j^{(l)}
:=
\frac1B
\sum_{b=1}^B
\left(
p_j(h_{b}^{(l-1)})
-
\mathbf 1\{y_b=j\}
\right)
M_{b,j}^{(l)}.
\]
Then the full gradient can be decomposed as
\[
G^{(l)}
=
\sum_{j=1}^V \bar g_j^{(l)}.
\]
Unlike the output-layer case, the matrices \(\bar g_j^{(l)}\) are not supported
on disjoint rows. Therefore we use the universal Cauchy bound
\[
\left\|
\sum_{j=1}^V \bar g_j^{(l)}
\right\|_F^2
\le
V\sum_{j=1}^V
\|\bar g_j^{(l)}\|_F^2.
\]
Consequently,
\[
\frac1V\mathbb E\|G^{(l)}\|_F^2
\le
\sum_{j=1}^V
\mathbb E\|\bar g_j^{(l)}\|_F^2.
\]
It remains to bound the head-token and tail-token contributions separately.

\paragraph{Head tokens.}
Fix \(j\in\mathcal H\). Define the single-sample random matrix
\[
R_j^{(l)}
:=
\left(
p_j(h^{(l-1)})
-
\mathbf 1\{y=j\}
\right)
M_j^{(l)}.
\]
Then
\[
\bar g_j^{(l)}
=
\frac1B\sum_{b=1}^B R_{b,j}^{(l)},
\]
where \(R_{1,j}^{(l)},\ldots,R_{B,j}^{(l)}\) are i.i.d. copies of
\(R_j^{(l)}\).

Let
\[
\mu_j^{(l)}
:=
\mathbb E R_j^{(l)}.
\]
By the standard mean--variance decomposition for i.i.d. averages,
\[
\mathbb E\|\bar g_j^{(l)}\|_F^2
=
\|\mu_j^{(l)}\|_F^2
+
\frac1B
\operatorname{tr}
\left(
\operatorname{Cov}
\left(
\operatorname{vec}(R_j^{(l)})
\right)
\right).
\]
Equivalently,
\begin{equation}\label{ineq:cor1_trace_cov_hidden}
\mathbb E\|\bar g_j^{(l)}\|_F^2
=
\|\mu_j^{(l)}\|_F^2
+
\frac1B
\left(
\mathbb E\|R_j^{(l)}\|_F^2
-
\|\mu_j^{(l)}\|_F^2
\right)
\le
\|\mu_j^{(l)}\|_F^2
+
\frac1B
\mathbb E\|R_j^{(l)}\|_F^2.
\end{equation}

We now bound the two terms on the right-hand side. Conditioning on
\(h^{(l-1)}\), the matrix \(M_j^{(l)}\) and the probability
\(p_j(h^{(l-1)})\) are fixed, while
\[
\mathbb E[\mathbf 1\{y=j\}\mid h^{(l-1)}]
=
q_j(h^{(l-1)}).
\]
Therefore,
\[
\mu_j^{(l)}
=
\mathbb E\left[
\left(
p_j(h^{(l-1)})
-
q_j(h^{(l-1)})
\right)
M_j^{(l)}
\right].
\]
By Jensen's inequality and the definition of \(\varepsilon_{j,l}^2\),
\begin{equation}\label{ineq:cor2_trace_cov_hidden}
\|\mu_j^{(l)}\|_F^2
\le
\mathbb E\left[
\left\|
\left(
p_j(h^{(l-1)})
-
q_j(h^{(l-1)})
\right)
M_j^{(l)}
\right\|_F^2
\right]
=
\varepsilon_{j,l}^2.
\end{equation}

Next, conditioning again on \(h^{(l-1)}\), we have
\begin{align*}
    &\mathbb E\left[
\left(
p_j(h^{(l-1)})
-
\mathbf 1\{y=j\}
\right)^2
\,\middle|\,
h^{(l-1)}
\right] \\
=&
q_j(h^{(l-1)})
\left(
1-q_j(h^{(l-1)})
\right)
+
\left(
p_j(h^{(l-1)})
-
q_j(h^{(l-1)})
\right)^2.
\end{align*}

Hence
\begin{align*}
\mathbb E\|R_j^{(l)}\|_F^2
=&
\mathbb E\left[
q_j(h^{(l-1)})
\left(
1-q_j(h^{(l-1)})
\right)
\|M_j^{(l)}\|_F^2
\right]\\
&+
\mathbb E\left[
\left\|
\left(
p_j(h^{(l-1)})
-
q_j(h^{(l-1)})
\right)
M_j^{(l)}
\right\|_F^2
\right].
\end{align*}
Using the definition of \(\varepsilon_{j,l}^2\), this gives
\begin{equation}\label{ineq:cor3_trace_cov_hidden}
\mathbb E\|R_j^{(l)}\|_F^2
\le
\mathbb E\left[
q_j(h^{(l-1)})
\left(
1-q_j(h^{(l-1)})
\right)
\|M_j^{(l)}\|_F^2
\right]
+
\varepsilon_{j,l}^2.
\end{equation}
Combining \eqref{ineq:cor1_trace_cov_hidden}, \eqref{ineq:cor2_trace_cov_hidden} and \eqref{ineq:cor3_trace_cov_hidden}, we have
\[
\mathbb E\|\bar g_j^{(l)}\|_F^2
\le
\left(1+\frac1B\right)\varepsilon_{j,l}^2
+
\frac1B
\mathbb E\left[
q_j(h^{(l-1)})
\left(
1-q_j(h^{(l-1)})
\right)
\|M_j^{(l)}\|_F^2
\right].
\]
Since
\[
0\le q_j(h^{(l-1)})
\left(
1-q_j(h^{(l-1)})
\right)
\le 1,
\]
we further obtain
\[
\mathbb E\|\bar g_j^{(l)}\|_F^2
\le
\left(1+\frac1B\right)\varepsilon_{j,l}^2
+
\frac1B
\mathbb E\|M_j^{(l)}\|_F^2.
\]
Noticing that
$
\|M_j^{(l)}\|_F^2
=
\Gamma_j^{(l)}\|h^{(l-1)}\|^2,
$
we get
\[
\mathbb E\|\bar g_j^{(l)}\|_F^2
\le
\left(1+\frac1B\right)\varepsilon_{j,l}^2
+
\frac1B
\mathbb E\left[
\Gamma_j^{(l)}\|h^{(l-1)}\|^2
\right].
\]
Summing over \(j\in\mathcal H\) yields
\begin{equation}\label{ineq:head_hidden}
\sum_{j\in\mathcal H}
\mathbb E\|\bar g_j^{(l)}\|_F^2
\le
\left(1+\frac1B\right)
\sum_{j\in\mathcal H}\varepsilon_{j,l}^2
+
\frac1B
\sum_{j\in\mathcal H}
\mathbb E\left[
\Gamma_j^{(l)}\|h^{(l-1)}\|^2
\right].
\end{equation}

\paragraph{Tail tokens.}
Fix \(j\in\mathcal T\). The class-wise contribution admits the
background--hit decomposition
\[
\bar g_j^{(l)}
=
\underbrace{
\frac1B
\sum_{b=1}^B
p_j(h_b^{(l-1)})M_{b,j}^{(l)}
}_{=:A_j^{(l)}}
-
\underbrace{
\frac1B
\sum_{b:y_b=j}
M_{b,j}^{(l)}
}_{=:B_j^{(l)}}.
\]
By the elementary inequality
\[
\|A-B\|_F^2
\le
2\|A\|_F^2+2\|B\|_F^2,
\]
we have
\[
\mathbb E\|\bar g_j^{(l)}\|_F^2
\le
2\mathbb E\|A_j^{(l)}\|_F^2
+
2\mathbb E\|B_j^{(l)}\|_F^2.
\]

We first bound the background term. Let
\[
X_j^{(l)}
:=
p_j(h^{(l-1)})M_j^{(l)}.
\]
Then
\[
A_j^{(l)}
=
\frac1B\sum_{b=1}^B X_{b,j}^{(l)}.
\]
Since the samples are i.i.d.,
\[
\mathbb E\|A_j^{(l)}\|_F^2
=
\frac1B\mathbb E\|X_j^{(l)}\|_F^2
+
\frac{B-1}{B}
\|\mathbb E X_j^{(l)}\|_F^2
\le
\mathbb E\|X_j^{(l)}\|_F^2.
\]
Thus
\begin{equation}\label{ineq:background_hidden}
    \mathbb E\|A_j^{(l)}\|_F^2
\le
\mathbb E\left[
p_j(h^{(l-1)})^2
\|M_j^{(l)}\|_F^2
\right] = \mathbb E\left[
p_j(h^{(l-1)})^2
\Gamma_j^{(l)}
\|h^{(l-1)}\|^2
\right].
\end{equation}

We next bound the label-hit term. Define
\[
Y_j^{(l)}
:=
\mathbf 1\{y=j\}M_j^{(l)}.
\]
Then
\[
B_j^{(l)}
=
\frac1B\sum_{b=1}^B Y_{b,j}^{(l)}.
\]
Therefore,
\[
\mathbb E\|B_j^{(l)}\|_F^2
=
\frac1B\mathbb E\|Y_j^{(l)}\|_F^2
+
\frac{B-1}{B}
\|\mathbb E Y_j^{(l)}\|_F^2.
\]
By definition,
\[
\mathbb E\|Y_j^{(l)}\|_F^2
=
\mathbb E\left[
\mathbf 1\{y=j\}
\|M_j^{(l)}\|_F^2
\right]
=
q_j
\mathbb E\left[
\|M_j^{(l)}\|_F^2
\mid y=j
\right]
=
q_j s_{j,l}^2.
\]
Moreover,
\[
\mathbb E Y_j^{(l)}
=
\mathbb E\left[
\mathbf 1\{y=j\}M_j^{(l)}
\right]
=
q_j
\mathbb E\left[
M_j^{(l)}
\mid y=j
\right].
\]
Hence
\[
\|\mathbb E Y_j^{(l)}\|_F^2
=
q_j^2
\left\|
\mathbb E\left[
M_j^{(l)}
\mid y=j
\right]
\right\|_F^2.
\]
By Jensen's inequality,
\[
\left\|
\mathbb E\left[
M_j^{(l)}
\mid y=j
\right]
\right\|_F^2
\le
\mathbb E\left[
\|M_j^{(l)}\|_F^2
\mid y=j
\right]
=
s_{j,l}^2.
\]
Therefore,
\[
\|\mathbb E Y_j^{(l)}\|_F^2
\le
q_j^2s_{j,l}^2.
\]
Since \(j\in\mathcal T\) and \(c_q=\max_{j\in\mathcal T}q_j\), we have
\(q_j\le c_q\). Thus
\[
q_j^2s_{j,l}^2
\le
c_q q_j s_{j,l}^2.
\]
Consequently,
\begin{equation}\label{ineq:hit_hidden}
\mathbb E\|B_j^{(l)}\|_F^2
\le
\frac{q_j}{B}s_{j,l}^2
+
c_q q_j s_{j,l}^2
=
\left(\frac1B+c_q\right)q_j s_{j,l}^2.
\end{equation}

Combining \eqref{ineq:background_hidden} and \eqref{ineq:hit_hidden} gives
\[
\mathbb E\|\bar g_j^{(l)}\|_F^2
\le
2
\mathbb E\left[
p_j(h^{(l-1)})^2
\Gamma_j^{(l)}
\|h^{(l-1)}\|^2
\right]
+
\left(\frac{2}{B}+2c_q\right)q_j s_{j,l}^2.
\]
Summing over \(j\in\mathcal T\), we obtain
\begin{equation}\label{ineq:tail_hidden}
\sum_{j\in\mathcal T}
\mathbb E\|\bar g_j^{(l)}\|_F^2
\le
2
\sum_{j\in\mathcal T}
\mathbb E\left[
p_j(h^{(l-1)})^2
\Gamma_j^{(l)}
\|h^{(l-1)}\|^2
\right]
+
\left(\frac{2}{B}+2c_q\right)
\sum_{j\in\mathcal T}
q_j s_{j,l}^2.
\end{equation}

\paragraph{Combining the head and tail bounds.}
Putting the head and tail estimates \eqref{ineq:head_hidden} and \eqref{ineq:tail_hidden} together, we have
\begin{align*}
\frac1V\mathbb E\|G^{(l)}\|_F^2 
\le &
\sum_{j=1}^V
\mathbb E\|\bar g_j^{(l)}\|_F^2
=
\sum_{j\in\mathcal H}
\mathbb E\|\bar g_j^{(l)}\|_F^2
+
\sum_{j\in\mathcal T}
\mathbb E\|\bar g_j^{(l)}\|_F^2 \\
\le &
\left(1+\frac{1}{B}\right)
\sum_{j\in\mathcal H}
\varepsilon_{j,l}^2
+
\frac{1}{B}
\sum_{j\in\mathcal H}
\mathbb E\left[
\Gamma_j^{(l)}
\|h^{(l-1)}\|^2
\right] \\
&+
\left(\frac{2}{B}+2c_q\right)
\sum_{j\in\mathcal T}
q_j s_{j,l}^2
+
2
\sum_{j\in\mathcal T}
\mathbb E\left[
p_j(h^{(l-1)})^2
\Gamma_j^{(l)}
\|h^{(l-1)}\|^2
\right].
\end{align*}
The proof is complete.
\end{proof}


\end{document}